\documentclass{article}

\usepackage[main, final]{neurips_2026}

\usepackage[utf8]{inputenc}
\usepackage[T1]{fontenc}
\usepackage{hyperref}
\usepackage{url}
\usepackage{booktabs}
\usepackage{tabularx}
\usepackage{amsfonts}
\usepackage{amsmath}
\usepackage{amssymb}
\usepackage{mathtools}
\usepackage{nicefrac}
\usepackage{microtype}
\usepackage[table]{xcolor}
\usepackage{graphicx}
\usepackage{multirow}
\usepackage{enumitem}
\usepackage{wrapfig}
\usepackage{amsthm}
\usepackage{tikz}
\usetikzlibrary{arrows.meta, positioning, shapes, calc, decorations.pathreplacing}
\usepackage{tcolorbox}
\tcbuselibrary{theorems}
\tcbuselibrary{breakable}

\graphicspath{{Fig/}}

\newtheorem{theorem}{Theorem}
\newtheorem{definition}{Definition}

\newtheorem{lemma}{Lemma}
\newtheorem{assumption}{Assumption}
\newtheorem{proposition}{Proposition}
\newtheorem{example}{Example}
\newtheorem{remark}{Remark}

\DeclareRobustCommand{\rev}[1]{\textcolor{black}{#1}}

\newtcolorbox{theoryblock}[1][]{
  breakable,
  colback=gray!15!white,
  colframe=gray!20!white,
  boxrule=0pt,
  boxsep=0pt,
  left=2pt,
  right=2pt,
  top=4pt,
  bottom=4pt,
  #1
}

\usepackage[rightcaption]{sidecap}
\sidecaptionvpos{figure}{t}
\usepackage{titletoc}
\titlecontents{section}[2.6em]
  {\vskip 4pt\bfseries}
  {\contentslabel[\thecontentslabel]{2.6em}}
  {\hspace*{-2.6em}}
  {\titlerule*[0.5pc]{.}\contentspage}
\titlecontents{subsection}[5.2em]
  {\vskip 1pt}
  {\contentslabel[\thecontentslabel]{3.2em}}
  {\hspace*{-3.2em}}
  {\titlerule*[0.5pc]{.}\contentspage}
\titlecontents{subsubsection}[8.4em]
  {\vskip 1pt}
  {\contentslabel[\thecontentslabel]{3.8em}}
  {\hspace*{-3.8em}}
  {\titlerule*[0.5pc]{.}\contentspage}
\newcommand\DoToC{%
  \startcontents
  \printcontents{}{1}{\hrulefill\vskip0pt}
  \vskip0pt \noindent\hrulefill
  }

\newcommand{\ours}{\textsc{CausalBind}}

\title{\ours{}: Causal Modeling and Learning for Protein-Molecule Virtual Screening}

\author{%
  \textbf{Loka Li}$^{1}$,~~\textbf{Jin Tian}$^{1}$,~~\textbf{Kun Zhang}$^{1,2}$\\
  $^1$Mohamed bin Zayed University of Artificial Intelligence\\
  $^2$Carnegie Mellon University
}

\begin{document}

\maketitle

\begin{abstract}
Protein-molecule virtual screening is increasingly cast as a problem of representation learning in a shared embedding space. Existing methods rely on dense holistic alignment, entangling invariant binding determinants with nuisance correlations and limiting transfer to new targets. 
It has been noted that binding in protein–molecule systems involves sparse cross-modality interactions: binding is governed by a small contact interface and a few decisive local interactions (e.g., hydrogen bonds, hydrophobic contacts, and salt bridges) rather than the global structures of the protein and molecule.
We hypothesize that uncovering and leveraging sparse interaction patterns is critical for generalization beyond the training data, as these patterns are reusable and expected to improve performance across different scenarios.
In this paper, we aim to identify and leverage sparse interaction patterns, and verify our hypothesis. Since the training data contain only observed binding pairs, we formalize this prior via a V-structure causal model under Heckman-style selection, and establish three theoretical results:
(i) when the compatibility between concepts is bilinear, the latent concepts of interacting proteins and molecules are not identifiable, even if the interaction is sparse;
(ii) with a non-bilinear compatibility function, these concepts and their sparse interaction support are component-wise identifiable under a sparse antichain condition; and
(iii) a low-rank relaxation yields subspace identifiability of the interaction modes, and applies to bilinear compatibility as well.
Inspired by these principles, we propose \ours{} with three implementation variants. Extensive experiments on DUD-E and LIT-PCBA benchmarks show that all variants consistently outperform strong retrieval baselines, with the largest gains on LIT-PCBA early enrichment, and further generalize to target- and scaffold-level out-of-distribution splits. Code and scripts are freely available at \url{https://github.com/lokali/CausalBind}.

\end{abstract}

\section{Introduction}
\label{sec:intro}

Modern drug discovery faces a fundamental challenge: identifying promising drug candidates from libraries containing millions to billions of candidate compounds~\citep{berdigaliyev2020overview}. Virtual screening~\citep{walters2020new}, which prioritizes protein-molecule pairs before costly experimental validation, has become a major bottleneck in this pipeline and a central problem in AI for drug discovery~\citep{jia2026deep}. An overview of the broader pipeline is illustrated in App.~\ref{app:pipeline}.

\begin{figure}[t]
\centering
\includegraphics[width=0.92\linewidth]{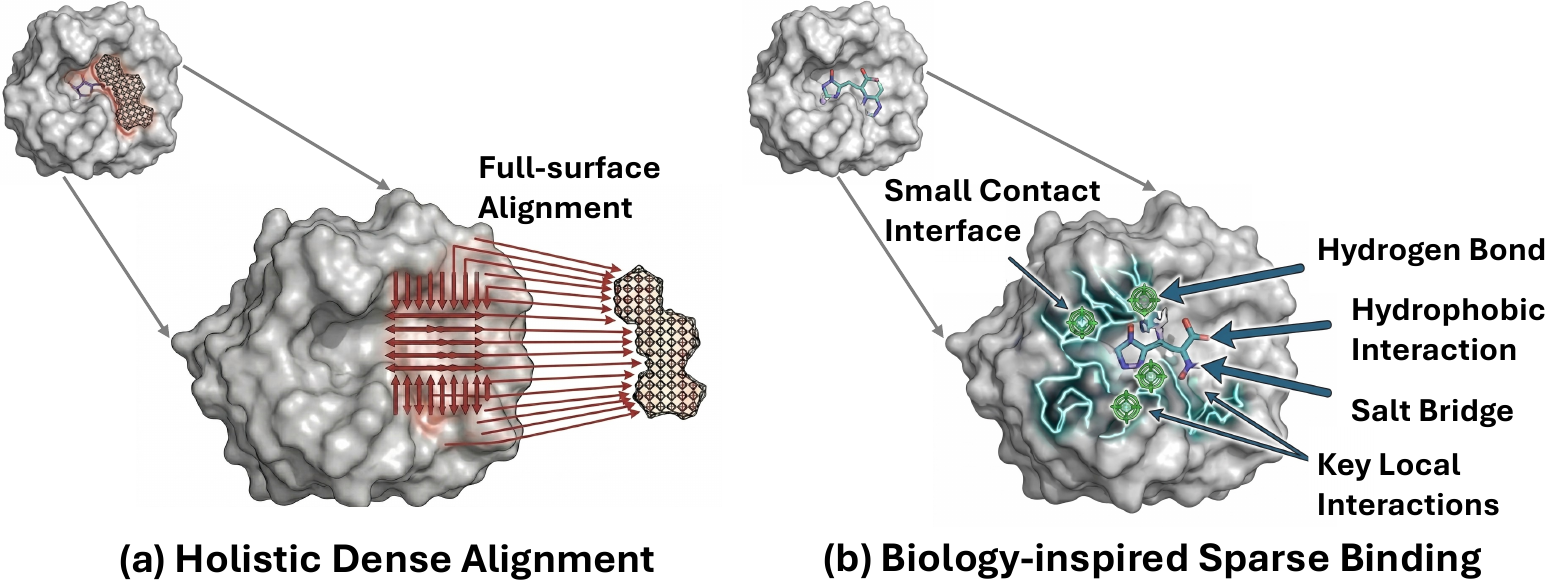}
% \vspace{-2em}
\setlength{\abovecaptionskip}{-0cm}
\caption{\textbf{Dense alignment versus sparse binding.} \textbf{(a) Holistic dense alignment.} Existing retrieval-based virtual screening methods compare every protein surface feature with every molecule feature in the shared embedding space; this ``full-surface'' alignment entangles binding-relevant signals with nuisance correlations. \textbf{(b) Biology-inspired sparse binding.} Protein-molecule binding is governed by a \emph{small contact interface} and a handful of \emph{key local interactions} (e.g., hydrogen bonds, hydrophobic contacts, salt bridges, and $\pi$-stacking), which together account for the majority of the binding affinity.} 
\label{fig:sparse_binding}
\vspace{-1em}
\end{figure}

Existing virtual-screening methods broadly fall into three regimes: physics-based docking systems~\citep{glide2004,vina2010} that score complexes via force fields; structure-aware machine-learning scoring models~\citep{gnina2021,rtmscore2023,genscore2023,equiscore2023,pignet2022,tankbind2022,bigbind2022} that learn to rank protein-molecule complexes from data; and retrieval-based representation learners~\citep{drugclip2023,drughash2025,ligunity2024,hypseek2025,s2drug2026} that embed both modalities into a shared space. The first two reason explicitly about three-dimensional poses and are valuable for mechanistic analysis, but they are typically better suited to rescoring a small candidate set than to ultra-large-scale retrieval. Retrieval-based methods, by contrast, embed both molecules and proteins into a shared embedding space and screen by similarity, scaling naturally to libraries with millions of compounds and now consistently dominating the empirical leaderboard. See App.~\ref{app:related} for extended related work.

Despite this success, retrieval models still learn holistic representations: they align full molecule and protein embeddings without separating binding-relevant determinants from nuisance factors. This reflects a structural mismatch with the biological process, as illustrated in Figure~\ref{fig:sparse_binding}, where the binding decision is governed by a small contact interface and a few decisive local interactions rather than the global structures of the protein and molecule. \rev{A classical example is the notion of ``hot spots'', where a small subset of interfacial contacts contributes the majority of binding free energy~\citep{clackson1995hot,bogan1998anatomy}.}
As a result, holistic models are prone to capturing spurious global correlations rather than the local structural features that actually drive binding~\citep{chen2019hidden,graber2025resolving}, which may hinder generalization to novel protein families and unseen chemical scaffolds. Additional details about biological observations are provided in App.~\ref{app:biological}.

These limitations motivate the core question to be answered in this paper: if protein-molecule binding is driven by a small number of local compatibilities, should one represent virtual-screening in terms of the sparse concept-level interactions, instead of potentially dense holistic alignment, and leverage this representation for better generalization ability? Our results strongly suggest that the answer is yes. To analyze this issue, we first develop a V-structure causal model of binding under Heckman-style selection~\citep{scholkopf2021causal,sedge2026}, and establish three identifiability results for the sparse protein-molecule interaction: dense cross-modal interaction is generally non-identifiable, sparse antichain interaction is component-wise identifiable up to trivial equivalences, and \rev{under a low-rank condition the interaction modes are subspace-identifiable}. \rev{Building on this theory, we propose \ours{} as a flexible structure-constrained representation framework with three different implementations: the main \ours{}$_{\mathrm{SP}}$ uses a \emph{concept extractor} and a \emph{learnable sparse concept-pair mask}, \ours{}$_{\mathrm{LR}}$ uses a low-rank factorization of the same concept-axis mask, and \ours{}$_{\mathrm{EMB}}$ applies the constrained mask after pooling in embedding space.}

Our main technical contributions are threefold.
(i)~Theoretically, we provide an identifiability analysis of representation learning for protein-molecule binding under binder-only selection, showing that \rev{bilinear interactions are not identifiable even when they are sparse, whereas sparse antichain-structured interactions with a non-bilinear compatibility function} are component-wise identifiable, and \rev{low-rank interactions are subspace-identifiable}.
(ii)~Methodologically, \rev{we instantiate these principles through three \ours{} implementations within a concept-based retrieval framework. \ours{}$_{\mathrm{SP}}$ routes binding evidence through a learnable sparse concept-pair mask, \ours{}$_{\mathrm{LR}}$ imposes an explicit low-rank concept-axis factorization, and \ours{}$_{\mathrm{EMB}}$ applies the constraint in pooled embedding space.}
(iii)~Empirically, on DUD-E and LIT-PCBA benchmarks, \rev{all three \ours{} implementations outperform reproduced baseline methods, with the largest relative gains on the property-debiased LIT-PCBA early enrichment.} Additional analyses, including multi-seed ablations, hyperparameter studies, antichain diagnostics of the masks, out-of-distribution evaluation on DEKOIS 2.0, an interpretability case study, and affinity-ranking results, all support the efficacy of our method.

\section{Theoretical Analysis: Sparse Binding under Heckman Selection}
\label{sec:theory}

This section develops a causal perspective on protein-molecule binding under binder-only observation. In $\S$~\ref{subsec:scm}, we model binding as a V-structure and study identifiability~\citep{daunhawer2023identifiability} under Heckman-style selection~\citep{heckman1979}. 
Our theorems in $\S$~\ref{subsec:nonid} proceed in three phases, from non-identifiability to component-wise and subspace identifiability.

\subsection{Causal Model: Binding as a Selected Collider}
\label{subsec:scm}

\begin{wrapfigure}{r}{0.43\textwidth}
\vspace{-2em}
\centering
\begin{tikzpicture}[
    scale=0.78,
    concept/.style={circle, draw=black!80, fill=white, minimum size=0.56cm, font=\scriptsize},
    mconcept/.style={concept, fill=blue!14, draw=blue!60},
    pconcept/.style={concept, fill=orange!18, draw=orange!70!black},
    bindvar/.style={rectangle, draw=red!60!black, fill=red!10, rounded corners=2pt, minimum height=0.36cm, minimum width=0.4cm, font=\tiny},
    offbind/.style={bindvar, fill=gray!18, draw=gray!55},
    finalbind/.style={rectangle, draw=red!75!black, fill=red!14, rounded corners=3pt, minimum height=0.46cm, minimum width=0.72cm, font=\scriptsize\bfseries},
    note/.style={font=\scriptsize, align=center},
    tinylabel/.style={font=\tiny, align=center, text=black!70},
    bindarrow/.style={-{Stealth[length=1.5mm]}, thick, green!60!black},
    blockarrow/.style={-{Stealth[length=1.5mm]}, thick, red!45, dashed},
    finalarrow/.style={-{Stealth[length=1.5mm]}, thick, gray!75},
    dep/.style={dash pattern=on 2pt off 1pt, thick, purple!70},
]

\node[mconcept] (m1) at (-3.0, 0.35) {$c_m^1$};
\node[mconcept] (m2) at (-1.8, 0.35) {$c_m^2$};
\node[mconcept] (m3) at (-0.6, 0.35) {$c_m^3$};

\node[pconcept] (p1) at (0.8, 0.35) {$c_p^1$};
\node[pconcept] (p2) at (2.0, 0.35) {$c_p^2$};
\node[pconcept] (p3) at (3.2, 0.35) {$c_p^3$};

\node[bindvar] (b1) at (-1.15, -1.2) {$1$};
\node[bindvar] (b2) at (0.25, -1.2) {$1$};
\node[offbind] (b3) at (1.65, -1.2) {$0$};
\node[finalbind] (B) at (0.25, -2.45) {$B{=}1$};

\draw[bindarrow] (m1) -- (b1);
\draw[bindarrow] (p1) -- (b1);
\draw[bindarrow] (m2) -- (b2);
\draw[bindarrow] (p2) -- (b2);
\draw[blockarrow] (m3) -- (b3);
\draw[blockarrow] (p3) -- (b3);

\draw[finalarrow] (b1) -- (B);
\draw[finalarrow] (b2) -- (B);
\draw[finalarrow] (b3) -- (B);

\draw[dep] (m1) .. controls (-3.0, 1.35) and (-0.6, 1.35) .. (m3);
\draw[dep] (p1) .. controls (0.8, 1.35) and (3.2, 1.35) .. (p3);
\end{tikzpicture}
\setlength{\abovecaptionskip}{-0cm}
\caption{\rev{Binder-selected sparse-interaction view. Observing only binders ($B{=}1$) induces cross-modal dependence through the collider $B$. Green edges denote active local interactions. Red dashed edges mark blocked interactions. Purple dashed curves mark allowed within-modality dependence.} }
\label{fig:selection_causal}
\vspace{-1em}
\end{wrapfigure}
\rev{The causal model combines the biological sparsity motivation above with the binder-only nature of training data. In practice, we predominantly observe pairs with $B{=}1$; conditioning on this collider can make molecule and protein features statistically dependent even when they are marginally independent. Figure~\ref{fig:selection_causal} summarizes this collider structure of sparse interactions.}

Let $x^M\in\mathcal{X}^M$ and $x^P\in\mathcal{X}^P$ denote observed molecule and protein, generated from latent concepts
\begin{equation}
{\color{black}x^M = g^M(\mathbf{c}^M,\varepsilon^M), \qquad x^P = g^P(\mathbf{c}^P,\varepsilon^P),}
\label{eq:body-generators}
\end{equation}
\rev{where $\mathbf{c}^M = (c^M_1,\ldots,c^M_{K_M})$ and $\mathbf{c}^P = (c^P_1,\ldots,c^P_{K_P})$ are modality-specific atomic features, and $\varepsilon^M,\varepsilon^P$ denote modality-specific observation noise or nuisance variation. Here $K_M$ and $K_P$ count the number of concepts, respectively.}  Binding arises through cross-modal interactions.
Motivated by existing literature~\citep{hypseek2025,ligunity2024}, we model binding as a Bernoulli collider:
\begin{equation}
{\color{black}B \sim \mathrm{Bernoulli}\!\left(\sigma\!\left(\sum_{i=1}^{K_M}\sum_{j=1}^{K_P}\mathcal{M}_{ij}\,\phi(c^M_i, c^P_j)\right)\right),}
\quad
{\color{black}x^M \leftarrow \mathbf{c}^M \to B \leftarrow \mathbf{c}^P \to x^P,}
\label{eq:body-vstruct}
\end{equation}
\rev{where $\mathcal{M}\in\mathbb R_{\ge0}^{K_M\times K_P}$ is a (typically sparse) interaction mask; its scale is identifiable only jointly with $\phi$. In the theory, each atomic concept is a scalar latent variable, i.e., $c_i^M,c_j^P\in\mathbb{R}$, and $\phi:\mathbb{R}\times\mathbb{R}\to\mathbb{R}$ maps one molecule concept and one protein concept to a scalar logit-level compatibility score; $\sigma(\cdot)$ is the logistic sigmoid. The nuisance variables affect only the observed modalities and are marginalized in $p(x^M,x^P\mid B{=}1)$; the binding variable depends on the recovered concepts.}  Before selection, $\mathbf{c}^M \perp\!\!\!\perp \mathbf{c}^P$. After conditioning on $B{=}1$, the two modalities become dependent. Here we denote the distribution of the observed training data as: $p^\mathcal{D}(x^M, x^P) \;=\; p(x^M, x^P \mid B=1)$, 
which generally induces dependence between $x^M$ and $x^P$. Furthermore, by Bayes' rule, we have
\begin{equation}
{\color{black}p^\mathcal{D}(\mathbf{c}^M,\mathbf{c}^P) \;\propto\; p(\mathbf{c}^M)\,p(\mathbf{c}^P)\,\sigma\!\Big(\textstyle\sum_{i=1}^{K_M}\sum_{j=1}^{K_P}\mathcal{M}_{ij}\phi(c^M_i, c^P_j)\Big).}
\label{eq:body-select-factor}
\end{equation}
This induced dependence is the only signal available for recovering $(\mathcal{M},\phi)$.
\rev{Below we give the definition of identifiability used in this paper, followed by some common assumptions.}

\begin{definition}[Identifiability]\label{def:identifiability}
\rev{Identifiability is concerned with whether the latent concepts and interaction structure are uniquely determined by the observed binder-selected distribution $p^\mathcal{D}(x^M,x^P)$, modulo the unavoidable equivalences of the chosen representation.}  \rev{In the theory, the concept dimensions $K_M$ and $K_P$ are treated as given model dimensions.}  \rev{We use two levels of identifiability.}
\begin{enumerate}[leftmargin=1.6em, itemsep=2pt]
    \item \rev{\textbf{Component-wise identifiability.} The concepts and sparse interaction support are component-wise identifiable if any two parameterizations $(g^M,g^P,\mathcal{M},\phi)$ and $(\tilde g^M,\tilde g^P,\tilde{\mathcal{M}},\tilde\phi)$ inducing the same $p^\mathcal{D}(x^M,x^P)$ differ only by separate molecule-side and protein-side concept-index permutations, component-wise invertible transformations of individual concepts, and the trivial joint rescaling $(\mathcal M,\phi)\mapsto(a\mathcal M,a^{-1}\phi)$ for $a>0$.}
    \item \rev{\textbf{Subspace identifiability.} Suppose the interaction kernel has a minimal rank-$r$ factorization $S(\mathbf c^M,\mathbf c^P)=\mathbf u(\mathbf c^M)^\top\mathbf v(\mathbf c^P)$. The interaction is subspace-identifiable if every other minimal rank-$r$ factorization $\tilde{\mathbf u}^\top\tilde{\mathbf v}$ inducing the same $p^\mathcal D(x^M,x^P)$ satisfies}
    \[
    {\color{black}
    \mathrm{span}\{\tilde u_1,\ldots,\tilde u_r\}
    =
    \mathrm{span}\{u_1,\ldots,u_r\},\qquad
    \mathrm{span}\{\tilde v_1,\ldots,\tilde v_r\}
    =
    \mathrm{span}\{v_1,\ldots,v_r\}.
    }
    \]
    \rev{Equivalently, the ordered mode basis may change by an invertible transformation in $\mathrm{GL}(r)$, but the molecule-side and protein-side interaction subspaces are fixed.}
\end{enumerate}
\end{definition}

\begin{assumption}[Concept recoverability]\label{ass:concept-recoverability}
\rev{The latent concepts $\mathbf{c}^M$ and $\mathbf{c}^P$ can be written as functions of the corresponding observations $\mathbf{x}^M$ and $\mathbf{x}^P$, respectively.}
\end{assumption}

\begin{assumption}[Prior independence]\label{ass:marginal-independence}
Before conditioning on $B$, the two modalities are independent, i.e., $p(\mathbf{c}^M,\mathbf{c}^P)=p(\mathbf{c}^M)p(\mathbf{c}^P)$.
\end{assumption}

\begin{assumption}[Positivity / coverage]\label{ass:positivity-coverage}
We have $P(B{=}1)>0$, \rev{the supports of $p(\mathbf c^M)$ and $p(\mathbf c^P)$ are open sets,} and every atomic concept appears with positive probability under the binder-selected distribution $p^\mathcal{D}$.
\end{assumption}

\begin{assumption}[\rev{Log-linear selection}]\label{ass:loglinear}
\rev{On the data support, the binding probability is an exponential tilt of the binding score: $P(B{=}1\mid\mathbf c^M,\mathbf c^P)=\kappa\exp(S(\mathbf c^M,\mathbf c^P))$ for a constant $\kappa>0$, where $S(\mathbf c^M,\mathbf c^P):=\sum_{i,j}\mathcal M_{ij}\phi(c_i^M,c_j^P)$.}
\end{assumption}

In Definition~\ref{def:identifiability}, component-wise identifiability is stronger because it recovers each concept, whereas subspace identifiability recovers only a low-dimensional subspace spanned by several concepts.
Assumptions~\ref{ass:concept-recoverability}--\ref{ass:positivity-coverage} are shared by \rev{all} theorems.  They ensure that (i) concept information is preserved from the observed modalities, (ii) cross-modal dependence arises only through selection, and (iii) sufficient coverage is available for recovery, respectively. \rev{Assumption~\ref{ass:loglinear}, used by the two positive results, is the low-hit-rate form of the logistic model, under which our contrastive objective is well specified~\citep{infonce2018}.} Each of the four assumptions is discussed in more detail in App.~\ref{app:theory:assumptions}.

\subsection{Identifiability Results}
\label{subsec:nonid}

We next characterize, \rev{in a progressively structured way}, when the latent interaction mechanism is identifiable from binder-selected data. First, binder-only selection by itself is insufficient: without suitable structural constraints, interacting protein and molecule concepts are not identifiable. Second, such concepts and sparse interactions are component-wise identifiable under proper structural sparsity conditions. Third, \rev{low-rank interaction structure yields subspace identifiability when only low-dimensional interaction subspaces can be determined. We give the three formal statements below.}

\begin{theoryblock}
\begin{theorem}[\rev{Non-identifiability of bilinear interactions}]\label{thm:main:nonid}
\rev{Under the causal model in Section~\ref{subsec:scm}, suppose Assumptions~\ref{ass:concept-recoverability}--\ref{ass:positivity-coverage} hold and the compatibility function is bilinear, $\phi(x,y)=xy$. Then for any training distribution $p^\mathcal{D}(x^M,x^P)$ there exist infinitely many distinct tuples $(g^M,g^P,\mathcal{M},\phi)$ inducing the same $p^\mathcal{D}(x^M,x^P)$. Moreover, if $K_M\ge2$ and $\mathcal M\neq0$, some of these tuples keep the same mask, $\widetilde{\mathcal M}=\mathcal M$, while their concepts mix at least two of the original concepts. Hence the concepts are \emph{non-identifiable} here, and constraining the mask, e.g.\ to be sparse with antichain supports, does not remove the ambiguity.}
\end{theorem}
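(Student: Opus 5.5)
The plan is to exploit the fact that with $\phi(x,y)=xy$ the score is a bilinear form, $S(\mathbf c^M,\mathbf c^P)=(\mathbf c^M)^\top\mathcal M\,\mathbf c^P$, which is invariant under any pair of linear maps $(A,C)$ with $A^\top\mathcal M C=\mathcal M$. I will absorb the inverse of such maps into the generators. Concretely, given a true tuple $(g^M,g^P,\mathcal M,\phi)$ I will define $\tilde{\mathbf c}^M=A^{-1}\mathbf c^M$, $\tilde{\mathbf c}^P=C^{-1}\mathbf c^P$, $\tilde g^M(\tilde{\mathbf c}^M,\varepsilon^M)=g^M(A\tilde{\mathbf c}^M,\varepsilon^M)$, and similarly for $\tilde g^P$, with new priors given by the pushforwards of $p(\mathbf c^M)$ and $p(\mathbf c^P)$. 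Prior independence (Assumption~\ref{ass:marginal-independence}) is preserved because the two transforms act separately on each modality. Openness of the supports and positivity (Assumption~\ref{ass:positivity-coverage}) are preserved because the maps are linear homeomorphisms. Recoverability (Assumption~\ref{ass:concept-recoverability}) is preserved because $\tilde{\mathbf c}^M=A^{-1}\mathbf c^M(x^M)$ is still a function of $x^M$.

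The first step is to check that the joint law of $(x^M,x^P,B)$ is unchanged. By construction, $\tilde g^M(\tilde{\mathbf c}^M,\varepsilon^M)=x^M$ pointwise. The new score is $(A\tilde{\mathbf c}^M)^\top\mathcal M(C\tilde{\mathbf c}^P)=(\tilde{\mathbf c}^M)^\top A^\top\mathcal M C\,\tilde{\mathbf c}^P$. So if I set $\widetilde{\mathcal M}=A^\top\mathcal M C$, the selection probability $\sigma(S)$ coincides along every sample path. Hence $p^\mathcal D(x^M,x^P)$ is identical by the Bayes factorization in Eq.~\eqref{eq:body-select-factor}.

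For the \emph{infinitely many} claim I will take general invertible $A,C$, for example diagonal rescalings $A=\mathrm{diag}(a_i)$ with $a_i>0$ arbitrary, or any $A$ when $\mathcal M=0$. Then $\widetilde{\mathcal M}=A^\top\mathcal M C$ must still be entrywise nonnegative. Positive diagonal scalings keep $\widetilde{\mathcal M}\ge 0$ and give a continuum of distinct tuples, since distinct generators $g^M\circ A$ are distinct functions. One caveat: diagonal rescalings are component-wise transforms, so they do not by themselves show non-identifiability. That is why the second part is needed.

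The main step, and the main obstacle, is the \emph{mask-preserving mixing} part: I need, for $K_M\ge2$ and $\mathcal M\neq0$, a non-monomial $A$ with $A^\top\mathcal M=\mathcal M$ (taking $C=I$). Such an $A$ keeps $\widetilde{\mathcal M}=\mathcal M$ exactly, including its sparsity and antichain support, while mixing concepts. The condition is $(A-I)^\top\mathcal M=0$, that is, the columns of $A-I$ lie in the left null space $\ker\mathcal M^\top$. When $\mathrm{rank}\,\mathcal M<K_M$, I will pick $n\neq0$ with $\mathcal M^\top n=0$ and set $A=I+n e_k^\top$ for an index $k$ with $n_k=0$. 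If $n$ has $n_k\neq0$ for every $k$, I will use instead $A=I+t\,n e_k^\top$ with $1+t n_k\neq0$. This $A$ is invertible, and its $k$-th column mixes $e_k$ with the support of $n$, so the new concept $\tilde c^M_k$ mixes at least two original concepts.

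The hard case is $\mathcal M$ of full row rank $K_M$ (so $K_P\ge K_M$), where $\ker\mathcal M^\top=0$. There I will also move the protein side. I will choose an invertible $A$ that mixes two molecule concepts, say a shear $A=I+t e_1e_2^\top$, and then look for an invertible $C$ with $\mathcal M C=A^{-\top}\mathcal M$. Because $\mathcal M$ has full row rank, it has a right inverse $\mathcal M^+$, and $C=\mathcal M^+A^{-\top}\mathcal M+(I-\mathcal M^+\mathcal M)$ satisfies the equation. For small $t$ this $C$ is a perturbation of the identity, hence invertible. This gives $A^\top\mathcal M C=\mathcal M$ while $A$ mixes $c^M_1,c^M_2$. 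The last step is to conclude that mask constraints such as sparsity or antichain supports cannot remove the ambiguity, since the mask is literally unchanged.
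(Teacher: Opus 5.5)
Your proof follows the paper's argument almost line for line. You use the same reparameterization with $\widetilde{\mathcal M}=A^\top\mathcal M C$ and the same split on $\operatorname{rank}\mathcal M$. In the deficient case you make a rank-one update along a left null vector with $C=I$: the paper takes the unipotent $A^\top=I+\varepsilon\mathbf u\mathbf w^\top$ with $\mathbf w^\top\mathbf u=0$, and your choice $\mathbf u=e_k$ with $1+tn_k\neq0$ works equally well. In the full-row-rank case you use the identical pseudoinverse formula for $C$. Your continuity argument for invertibility suffices; in fact, for the shear, $C=I-t(\mathcal M^{+}e_2)(e_1^\top\mathcal M)$ has determinant $1$ for every $t$. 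Your restriction to positive diagonal scalings in the first part, to keep $\widetilde{\mathcal M}\ge0$, is more careful than the paper, whose arbitrary $\mathrm{GL}$ maps can leave $\mathbb R_{\ge0}^{K_M\times K_P}$.

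One sub-claim is wrong, though harmless. Since $\tilde{\mathbf c}^M=A^{-1}\mathbf c^M$, mixing is read off the rows of $A^{-1}=I-\tfrac{t}{1+tn_k}\,n e_k^\top$, not the columns of $A$. So $\tilde c^M_k=c^M_k/(1+tn_k)$ is \emph{not} mixed. It is each $\tilde c^M_i$ with $i\in\mathrm{supp}(n)\setminus\{k\}$ that combines $c^M_i$ and $c^M_k$. The conclusion still stands: as the paper notes, the inverse of a permutation times a diagonal matrix is again of that form, so a non-monomial $A$ forces a non-monomial $A^{-1}$.
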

\end{theoryblock}

\paragraph{Remark 1.}
\rev{Theorem~\ref{thm:main:nonid} gives the baseline impossibility result: conditioning on $B{=}1$ induces cross-modal concept dependence, but this dependence alone does not determine which concepts or mask generated it. With a bilinear $\phi$, infinitely many parameterizations explain the same distribution, and the second part shows this cannot be removed by constraining the mask. Theorems~\ref{thm:main:ident} and~\ref{thm:main:lowrank} specify what additional structure makes identification possible. The complete proof is provided in App.~\ref{app:theory:non-id}.}

\begin{theoryblock}
\begin{theorem}[Component-wise identifiability under sparse interaction]\label{thm:main:ident}
\rev{Suppose Assumptions~\ref{ass:concept-recoverability}--\ref{ass:loglinear} hold, and let $(\tilde g^M,\tilde g^P,\widetilde{\mathcal M},\tilde\phi)$, with $\tilde\phi\in C^2$, be any other such tuple inducing the same $p^\mathcal{D}(x^M,x^P)$, with concepts $\tilde{\mathbf c}^M,\tilde{\mathbf c}^P$. Suppose further:}
\begin{enumerate}[label=\roman*, leftmargin=1.8em, itemsep=2pt]
    \item [2.i]\underline{(Sparse antichain interaction)} \rev{Letting}
    \[
    {\color{black}
    \begin{aligned}
    R_i&:=\mathrm{supp}(\mathcal{M}_{i,\cdot})=\{j\in[K_P]:\mathcal{M}_{ij}\neq 0\},\\
    C_j&:=\mathrm{supp}(\mathcal{M}_{\cdot,j})=\{i\in[K_M]:\mathcal{M}_{ij}\neq 0\},
    \end{aligned}
    }
    \]
    \rev{every row support and every column support is nonempty, and the row supports $\{R_i\}_{i=1}^{K_M}$ and column supports $\{C_j\}_{j=1}^{K_P}$ form antichains:}
    \[
    {\color{black}
    R_i\not\subseteq R_{i'}\quad \forall i\neq i',\qquad
    C_j\not\subseteq C_{j'}\quad \forall j\neq j'.
    }
    \]
    \item [2.ii] \underline{\rev{(Non-bilinear compatibility)}} \rev{$\phi$ is real-analytic on $\mathbb R^2$ and $\partial_1^2\partial_2^2\phi\not\equiv0$; equivalently, the cross-derivative $\partial_1\partial_2\phi(x,y)$ is not of the form $\alpha(x)+\beta(y)$.}
    \item [2.iii] \underline{\rev{(Linear ambiguity and sparsity minimality)}} \rev{The two concept systems are related by invertible affine maps, $\tilde{\mathbf c}^M=T_M\mathbf c^M+\mathbf t_M$ and $\tilde{\mathbf c}^P=T_P\mathbf c^P+\mathbf t_P$ with $T_M\in\mathrm{GL}(K_M)$, $T_P\in\mathrm{GL}(K_P)$; and $\|\widetilde{\mathcal M}\|_0\le\|\mathcal M\|_0$.}
\end{enumerate}
\rev{Then there exist permutations $\pi$ of $[K_M]$ and $\varsigma$ of $[K_P]$ such that each $\tilde c^M_{\pi(i)}$ is an affine function of $c^M_i$ alone, each $\tilde c^P_{\varsigma(j)}$ is an affine function of $c^P_j$ alone, and $\mathrm{supp}(\widetilde{\mathcal M})=\{(\pi(i),\varsigma(j)):(i,j)\in\mathrm{supp}(\mathcal M)\}$; the concepts and sparse support of $\mathcal M$ are thus component-wise identifiable (Definition~\ref{def:identifiability}).}
\end{theorem}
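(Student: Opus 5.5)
The plan is to reduce the statement to a functional identity between the two interaction scores on an open set, and then read off the structure of $T_M,T_P$ from mixed partial derivatives.

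\textbf{Step 1 (score identity).} By Assumption~\ref{ass:concept-recoverability}, both concept systems are functions of the observations, so equality of $p^\mathcal{D}(x^M,x^P)$ gives equality of the induced binder-selected concept laws. Condition 2.iii makes the change of variables affine, so Jacobian factors are constant. Combining Eq.~\eqref{eq:body-select-factor} with Assumptions~\ref{ass:marginal-independence} and~\ref{ass:loglinear}, and taking logs on the open support from Assumption~\ref{ass:positivity-coverage}, we get
\[
S(\mathbf c^M,\mathbf c^P)-\widetilde S(T_M\mathbf c^M+\mathbf t_M,\;T_P\mathbf c^P+\mathbf t_P)=f(\mathbf c^M)+h(\mathbf c^P)
\]
for some functions $f,h$. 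These absorb the marginal log-densities of each modality and the constants $\kappa,\tilde\kappa$.

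\textbf{Step 2 (mixed derivatives).} Apply $\partial_{c^M_a}\partial_{c^P_b}$ to the identity; this kills $f$ and $h$. Writing $\phi_{12}:=\partial_1\partial_2\phi$, the result is
\[
\mathcal M_{ab}\,\phi_{12}(c^M_a,c^P_b)=\sum_{k,l}\widetilde{\mathcal M}_{kl}(T_M)_{ka}(T_P)_{lb}\,\tilde\phi_{12}(\tilde c^M_k,\tilde c^P_l)\qquad\forall a,b.
\]
The left side depends only on $(c^M_a,c^P_b)$. We then differentiate once more in $c^M_{a'}$ with $a'\neq a$, and separately in $c^P_{b'}$ with $b'\neq b$. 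The left side vanishes, so we obtain linear relations of the form
\[
\sum_{k,l}\widetilde{\mathcal M}_{kl}(T_M)_{ka}(T_M)_{ka'}(T_P)_{lb}\,\partial_1\tilde\phi_{12}(\tilde c^M_k,\tilde c^P_l)=0 ,
\]
and the analogous relations on the protein side.

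\textbf{Step 3 (main obstacle: no mixing).} The hard step is to show that these relations force each $k$ with $\widetilde{\mathcal M}_{k\cdot}\neq0$ to have at most one nonzero entry in row $k$ of $T_M$, and similarly for $T_P$. For this, the functions $(\tilde c^M_k,\tilde c^P_l)\mapsto\partial_1\tilde\phi_{12}(\tilde c^M_k,\tilde c^P_l)$ for distinct pairs $(k,l)$ must be linearly independent as functions of $\mathbf c$. Since the $\tilde{\mathbf c}$ range over an open set of independent coordinates, functions depending on different coordinate pairs can only be linearly dependent through additively separable pieces.

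Condition 2.ii is exactly what excludes this. Through the identity in Step 2, $\tilde\phi_{12}$ inherits non-separability from $\phi_{12}$, because $\partial_1^2\partial_2^2\phi\not\equiv0$. Real-analyticity then upgrades ``nonzero somewhere'' to ``nonzero on an open dense set''. This is where the bilinear counterexample of Theorem~\ref{thm:main:nonid} must fail, since there $\phi_{12}$ is constant. If linear independence is awkward to prove directly, I would first try applying $\partial_{c^M_{a'}}\partial_{c^P_{b'}}$ (the fourth-order derivative) and evaluating at generic points to isolate individual coefficients $\widetilde{\mathcal M}_{kl}(T_M)_{ka}(T_M)_{ka'}\cdots$. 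I would then argue that they are all zero.

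\textbf{Step 4 (permutations and support).} Once products $(T_M)_{ka}(T_M)_{ka'}$ vanish on active rows, we use that all supports are nonempty and $T_M,T_P$ are invertible. Together with Step 2, this shows that $T_M$ and $T_P$ are generalized permutation matrices on the coordinates entering $S$, giving maps $\pi,\varsigma$. The identity from Step 2 then reads $\mathcal M_{ab}\phi_{12}=\widetilde{\mathcal M}_{\pi(a)\varsigma(b)}(\cdot)\,\tilde\phi_{12}$, so $\mathrm{supp}(\mathcal M)$ maps into $\mathrm{supp}(\widetilde{\mathcal M})$. The bound $\|\widetilde{\mathcal M}\|_0\le\|\mathcal M\|_0$ upgrades this inclusion to equality.

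The antichain condition 2.i rules out the remaining degeneracy. This is a row $i$ of $T_M$ mixing $c_i$ with some $c_{i'}$ whose interaction footprint $R_{i'}\subseteq R_i$ would be invisible to the mixed-derivative test. Under the antichain condition, some $j\in R_{i'}\setminus R_i$ exists, and its column derivative detects the mixing coefficient and forces it to zero.
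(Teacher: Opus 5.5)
Your Steps~1 and~2 coincide with the paper's. The log-linear pull-back and the second-order identity are exactly $T(\mathbf c)=T_M^\top\widetilde T(\mathbf c)T_P$, with $T(\mathbf c)_{ij}=\mathcal M_{ij}\phi_{12}(c^M_i,c^P_j)$ and $\phi_{12}=\partial_1\partial_2\phi$.

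The gap is Step~3, which carries the whole proof. By differentiating beyond second order, you put the burden on higher derivatives of the \emph{alternative} $\tilde\phi$. That function is only assumed $C^2$, so even a third derivative needs a regularity bootstrap from $\widetilde T=PTQ$. More seriously, the linear independence of $\{\partial_1\tilde\phi_{12}(\tilde c^M_k,\tilde c^P_l)\}_{(k,l)}$ does not follow from Condition~2.ii. What you would need is that $\partial_1\tilde\phi_{12}$ itself be non-separable, i.e.\ $\partial_1^3\partial_2^2\tilde\phi\not\equiv0$. That is a fifth-order condition on $\tilde\phi$, whereas 2.ii is a fourth-order condition on $\phi$.

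The claim genuinely fails for admissible data. Take $\phi=\tilde\phi=x^2y^2/4$, which is analytic with $\partial_1^2\partial_2^2\phi\equiv1$. Then $\partial_1\tilde\phi_{12}(\tilde c^M_k,\tilde c^P_l)=\tilde c^P_l$ does not depend on $k$. In your fourth-order fallback $\partial_1^2\partial_2^2\tilde\phi\equiv1$, so evaluating at generic points gives only one scalar relation per index quadruple:
$\sum_{k,l}\widetilde{\mathcal M}_{kl}(T_M)_{ka}(T_M)_{ka'}(T_P)_{lb}(T_P)_{lb'}=0$.
From this the individual products $(T_M)_{ka}(T_M)_{ka'}$ cannot be isolated. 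Step~4 inherits the problem. Its use of the antichain (a row of $T_M$ ``invisible to the mixed-derivative test'') is also not a precise argument, and it sits oddly with Step~3 already claiming no mixing.

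The missing idea is to use non-separability one order lower, on the \emph{true} side, and to replace coefficient isolation by linear algebra.
\begin{itemize}
\item Keep the identity as $\widetilde T(\mathbf c)=P\,T(\mathbf c)\,Q$, with constant $P=T_M^{-\top}$ and $Q=T_P^{-1}$.
\item Non-separability of $\phi_{12}$ is exactly what makes the pair-indexed functions $\mathcal M_{ij}\phi_{12}(c^M_i,c^P_j)$, $(i,j)\in S:=\mathrm{supp}(\mathcal M)$, linearly independent. Applying $\partial_{c^M_i}\partial_{c^P_j}$ to a vanishing combination leaves $W_{ij}\mathcal M_{ij}\partial_1^2\partial_2^2\phi$, which cannot vanish on an open set by analyticity.
\item Hence $\{T(\mathbf c)\}$ spans $\mathbb R^S$, and by linearity $X\mapsto PXQ$ maps $\mathbb R^S$ injectively into $\mathbb R^{\widetilde S}$.
\end{itemize}
Lemma~\ref{lem:antichain} then finishes the argument.
\begin{itemize}
\item The bound $|\widetilde S|\le|S|$ makes this map a bijection.
\item Each matrix unit $E_{kl}$, $(k,l)\in S$, goes to $\mathbf p_k\mathbf q_l^\top$, so $\mathrm{supp}(\mathbf p_k)\times\mathrm{supp}(\mathbf q_l)\subseteq\widetilde S$.
\item A nonzero Leibniz term of $\det P$ and of $\det Q$ gives permutations $\pi,\varsigma$ with $\{(\pi(k),\varsigma(l)):(k,l)\in S\}\subseteq\widetilde S$; equality follows by counting.
\item An extra entry $P_{ik}\neq0$ with $i\neq\pi(k)$ would force $R_k\subseteq R_{\pi^{-1}(i)}$, contradicting the antichain; the argument for $Q$ is symmetric.
\end{itemize}
In this argument the minimality clause and the antichain are what exclude mixing; they are not tools for upgrading an inclusion afterwards. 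Nothing about $\tilde\phi$ beyond $C^2$ is used. Finally, your phrase ``generalized permutation on the coordinates entering $S$'' leaves open empty rows or columns of $\widetilde{\mathcal M}$; here that case is excluded automatically, because $\widetilde S$ turns out to be a permuted copy of $S$.
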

\end{theoryblock}

\paragraph{Remark 2.}
\rev{Theorem~\ref{thm:main:ident} resolves that ambiguity, and shows which ingredient does the work. Condition~2.iii restricts the competitors to linear reparameterizations, what contrastive learning already provides~\citep{roeder2021linear}. Condition~2.ii makes $\mathcal M_{ij}\partial_1\partial_2\phi$ vary over the support, so matching the two distributions yields a \emph{family} of matrix equations rather than the single one a bilinear $\phi$ gives. Condition~2.i gives each concept a distinct combinatorial footprint, and the minimality clause is the population counterpart of the sparsity penalty. App.~\ref{app:theory:main} has the full proof and the counterexamples.}

\begin{theoryblock}
\begin{theorem}[Subspace identifiability under low-rank interaction]\label{thm:main:lowrank}
Under the causal model above, if the observed binder-selected distribution matches the true joint distribution $p^\mathcal{D}(x^M,x^P)$, and Assumptions~\ref{ass:concept-recoverability}--\rev{\ref{ass:loglinear}} hold, and:
\begin{enumerate}[label=\roman*, leftmargin=1.8em, itemsep=2pt]
    \item [3.i]\underline{(Low-rank interaction kernel)} \rev{Let $S$ be the binding score of Assumption~\ref{ass:loglinear}, fix reference values $\mathbf a_0,\mathbf b_0$ in the supports of $\mathbf c^M,\mathbf c^P$, and let $S^\circ(\mathbf c^M,\mathbf c^P):=S(\mathbf c^M,\mathbf c^P)-S(\mathbf c^M,\mathbf b_0)-S(\mathbf a_0,\mathbf c^P)+S(\mathbf a_0,\mathbf b_0)$ be the centred kernel.}
    \rev{There exists a minimal rank $r<\min\{K_M,K_P\}$ and linearly independent mode functions $u_1,\ldots,u_r\in L^2(p^\mathcal D(\mathbf c^M))$ and $v_1,\ldots,v_r\in L^2(p^\mathcal D(\mathbf c^P))$ such that}
    \[
    {\color{black}
    S^\circ(\mathbf c^M,\mathbf c^P)
    =\sum_{\ell=1}^{r}u_\ell(\mathbf c^M)v_\ell(\mathbf c^P)
    =\mathbf u(\mathbf c^M)^\top \mathbf v(\mathbf c^P),
    }
    \]
    \rev{where $\mathbf u=(u_1,\ldots,u_r)^\top$ and $\mathbf v=(v_1,\ldots,v_r)^\top$, and no decomposition with fewer than $r$ separable terms exists.} 
    \item [3.ii]\underline{(Low-rank sufficient variability)} \rev{There exist probe values $\mathbf a_1,\ldots,\mathbf a_{R_M}\in\mathrm{int}(\mathrm{supp}(\mathbf c^M))$ and $\mathbf b_1,\ldots,\mathbf b_{R_P}\in\mathrm{int}(\mathrm{supp}(\mathbf c^P))$, with $R_M,R_P\ge r$, such that the mode-evaluation matrices}
    \[
    {\color{black}
    \mathbf U_A
    :=
    \begin{bmatrix}
    u_1(\mathbf a_1)&\cdots&u_r(\mathbf a_1)\\[-1mm]
    \vdots&\ddots&\vdots\\[-1mm]
    u_1(\mathbf a_{R_M})&\cdots&u_r(\mathbf a_{R_M})
    \end{bmatrix},
    \qquad
    \mathbf V_B
    :=
    \begin{bmatrix}
    v_1(\mathbf b_1)&\cdots&v_r(\mathbf b_1)\\[-1mm]
    \vdots&\ddots&\vdots\\[-1mm]
    v_1(\mathbf b_{R_P})&\cdots&v_r(\mathbf b_{R_P})
    \end{bmatrix}
    }
    \]
    \[
    {\color{black}
    \operatorname{rank}(\mathbf U_A)=\operatorname{rank}(\mathbf V_B)=r.
    }
    \]
    \rev{Thus both modality-side mode families can be independently excited across the selected concept support.} 
    \item[] \rev{Then the interaction subspaces}
    \[
    {\color{black}
    \mathcal U_M:=\mathrm{span}\{u_1,\ldots,u_r\}\subset L^2(p^\mathcal D(\mathbf c^M)),
    \qquad
    \mathcal V_P:=\mathrm{span}\{v_1,\ldots,v_r\}\subset L^2(p^\mathcal D(\mathbf c^P))
    }
    \]
    \rev{are subspace-identifiable from $p^\mathcal{D}(x^M,x^P)$ in the sense of Definition~\ref{def:identifiability}.}  
\end{enumerate}
\end{theorem}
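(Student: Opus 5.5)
The plan is to reduce subspace identifiability to the uniqueness of row and column spaces of a finite-rank separable kernel, recovered from the binder-selected density. First I will express the centred kernel through observable quantities. By Assumption~\ref{ass:loglinear} and Eq.~\eqref{eq:body-select-factor}, the selected concept density is
\[
p^\mathcal D(\mathbf c^M,\mathbf c^P)=Z^{-1}\kappa\,p(\mathbf c^M)p(\mathbf c^P)\exp\bigl(S(\mathbf c^M,\mathbf c^P)\bigr).
\]
Taking logarithms and forming the double difference at the references $(\mathbf a_0,\mathbf b_0)$ cancels the marginal factors $p(\mathbf c^M)$ and $p(\mathbf c^P)$ together with the constants $\kappa$ and $Z$. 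This gives
\[
\log p^\mathcal D(\mathbf c^M,\mathbf c^P)-\log p^\mathcal D(\mathbf c^M,\mathbf b_0)-\log p^\mathcal D(\mathbf a_0,\mathbf c^P)+\log p^\mathcal D(\mathbf a_0,\mathbf b_0)=S^\circ(\mathbf c^M,\mathbf c^P).
\]
Positivity and open supports (Assumption~\ref{ass:positivity-coverage}) ensure that every term is finite on the support. Assumption~\ref{ass:concept-recoverability} shows that $p^\mathcal D$ over concepts is a pushforward of $p^\mathcal D(x^M,x^P)$. Consequently, any competing tuple $(\tilde{\mathbf u},\tilde{\mathbf v})$ inducing the same observed law satisfies $\tilde{\mathbf u}^\top\tilde{\mathbf v}=\mathbf u^\top\mathbf v=S^\circ$ as functions on the selected support. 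This holds once both parameterizations are expressed in the same recovered concept coordinates, with $u_\ell,v_\ell$ regarded as functions of the observations through the recovery maps.

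Second, I will use the probe condition 3.ii to show that the spans are forced. Fix the probes $\mathbf b_1,\ldots,\mathbf b_{R_P}$. For every $\mathbf c^M$, the vector $\bigl(S^\circ(\mathbf c^M,\mathbf b_k)\bigr)_k=\mathbf V_B\mathbf u(\mathbf c^M)$. Since $\operatorname{rank}\mathbf V_B=r$, $\mathbf V_B$ has a left inverse $\mathbf V_B^{+}$, so $\mathbf u(\mathbf c^M)=\mathbf V_B^{+}(S^\circ(\mathbf c^M,\mathbf b_k))_k$. Hence each $u_\ell$ is a linear combination of the observable functions $h_k:=S^\circ(\cdot,\mathbf b_k)$. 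Conversely, each $h_k=\sum_\ell v_\ell(\mathbf b_k)u_\ell\in\mathcal U_M$. Therefore $\mathcal U_M=\mathrm{span}\{S^\circ(\cdot,\mathbf b):\mathbf b\}$, which is the range of the integral operator with kernel $S^\circ$ acting on point evaluations. This set depends only on $S^\circ$ and not on the chosen factorization.

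Applying the same characterization to $\tilde{\mathbf u}$ gives $\mathrm{span}\{\tilde u_\ell\}\supseteq\mathrm{span}\{S^\circ(\cdot,\mathbf b)\}=\mathcal U_M$, and $\mathrm{span}\{\tilde u_\ell\}\subseteq\mathcal U_M$ trivially. Minimality of $r$ makes $\tilde u_1,\ldots,\tilde u_r$ linearly independent, since otherwise fewer separable terms would suffice, so dimensions match and the spans are equal. This containment can be checked without probes for $\tilde{\mathbf u}$: if $\tilde{\mathbf v}$ evaluated over all $\mathbf b$ had rank below $r$, the kernel would have rank below $r$, a contradiction. A symmetric argument with $\mathbf U_A$ gives $\mathcal V_P$. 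The $\mathrm{GL}(r)$ statement then follows because $\tilde{\mathbf u}=G\mathbf u$ for some invertible $G$, and the identity $\mathbf u^\top\mathbf v=\tilde{\mathbf u}^\top\tilde{\mathbf v}$ forces $\tilde{\mathbf v}=G^{-\top}\mathbf v$.

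The main obstacle is the first step. It must ensure that the two parameterizations' concepts live in comparable coordinates and that equality holds in $L^2(p^\mathcal D)$ rather than pointwise, while the probes sit in the interior. I would first handle this by working with functions of the observations: composing $u_\ell$ with the recovery map of Assumption~\ref{ass:concept-recoverability}, and identifying spans as subspaces of $L^2(p^\mathcal D(x^M))$. Almost-everywhere equality suffices there, and the evaluation argument is replaced by testing against $L^2$ functions. Pointwise probe evaluations would be justified by continuity near interior points where needed.
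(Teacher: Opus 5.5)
Your proposal is correct and follows essentially the same route as the paper. The paper also removes the single-modality terms by centring the observation-level log-density identity from Step~1 of the proof of Theorem~\ref{thm:identifiability}, and compares modes as functions of the observations via the recovery maps. It then uses that the probe sections $S^\circ(\cdot,\mathbf b_q)$ span $\mathcal U_M$ (because $\operatorname{rank}\mathbf V_B=r$) while lying in $\tilde{\mathcal U}$, and closes with a dimension count, the symmetric argument for $\mathcal V_P$, and the $\mathrm{GL}(r)$ change of basis. The only slip is calling $\mathrm{span}\{\tilde u_\ell\}\subseteq\mathcal U_M$ ``trivial''; it is not automatic, but it follows from the dimension count you give immediately afterwards, so nothing is missing.
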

\end{theoryblock}

\paragraph{Remark 3.}
\rev{Under Condition~3.i the interaction kernel is low-rank, so binding is determined by rank-$r$ interaction subspaces rather than by separately identifiable concept pairs. The kernel is centred because the selected distribution determines $S$ only up to additive functions of a single modality. Unlike Theorem~\ref{thm:main:ident}, this result does \emph{not} need a non-bilinear $\phi$: for $\phi(x,y)=xy$ the kernel has rank $\mathrm{rank}(\mathcal M)$, so a low-rank mask suffices, the regime our implementation occupies. Proof in App.~\ref{app:theory:lowrank}.}

\paragraph{From theory to implementation.}
\rev{Identifiability matters here because these interaction patterns are reusable and expected to improve generalizability across different scenarios.} These results pin down the inductive bias that our method should approximate: (i) concept encoders rich enough to preserve local binding factors, (ii) \rev{sparse interaction support or low-rank interaction subspaces}, and (iii) exploitation of the collider-induced dependence under $B{=}1$. The next section will translate the proposed theoretical model into concrete neural architectures, with different implementations. \rev{The causal formulation enters the implementation through the choice and regularization of an identifiable representation class, rather than through a separate intervention-, propensity-, or counterfactual-based training loss: \ours{} retains the contrastive/ranking supervision of retrieval-based methods, and replaces dense holistic interaction with the sparse-antichain or low-rank parameterizations identified above, together with the corresponding structural regularizer. Accordingly, the empirical gains arise from this theory-derived structural inductive bias, while the causal analysis establishes when these interaction classes are identifiable under binder-only selection; we do not claim that \ours{} estimates interventional effects of individual concepts.}
\rev{Since our $\phi$ is an inner product, hence bilinear, the guarantee that applies is Theorem~\ref{thm:main:lowrank} rather than Theorem~\ref{thm:main:ident} (App.~\ref{app:theory:gap}).}

\section{\ours{}: Implementation}
\label{sec:method}

We implement the sparse causal principle of $\S$~\ref{sec:theory}. \rev{We develop three variants: \ours{}$_{\mathrm{SP}}$, \ours{}$_{\mathrm{LR}}$, and \ours{}$_{\mathrm{EMB}}$. Due to space, we focus on \ours{}$_{\mathrm{SP}}$, the sparse concept-pair implementation; supplements for \ours{}$_{\mathrm{SP}}$, \ours{}$_{\mathrm{LR}}$, and \ours{}$_{\mathrm{EMB}}$ are given in App.~\ref{app:impl_sp}, App.~\ref{app:impl_lr}, and App.~\ref{app:impl_emb}.} Inspired by HypSeek~\citep{hypseek2025}, we use three modalities: molecule structure, pocket structure, and protein sequence, where the sequence branch provides complementary global context beyond local pocket geometry. Instead of aligning holistic embeddings, we first decompose each modality into atomic concepts, and then route protein-molecule matching through sparse concept-level interaction operators (see Figure~\ref{fig:causal_graph}).

\paragraph{Notation.}
\rev{Let $\mathbf{H}_{\mathrm{mol}}\in\mathbb{R}^{N_{\mathrm{mol}}\times d_{\mathrm{mol}}}$, $\mathbf{H}_{\mathrm{poc}}\in\mathbb{R}^{N_{\mathrm{poc}}\times d_{\mathrm{poc}}}$, and $\mathbf{H}_{\mathrm{seq}}\in\mathbb{R}^{N_{\mathrm{seq}}\times d_{\mathrm{seq}}}$ denote token representations from Uni-Mol~\citep{zhou2023uni} (molecule and pocket) and ESM2~\citep{esm2} (sequence). These project to a per-concept dimension $d_c$. The theory in $\S$~\ref{sec:theory} uses scalar concepts, corresponding to the special case $d_c=1$. In implementation, we allow vector-valued concept tokens with $d_c\ge 1$: for each modality $v\in\{\mathrm{mol},\mathrm{poc},\mathrm{seq}\}$, $\mathbf{C}_v\in\mathbb{R}^{K\times d_c}$ and the $i$-th row $\mathbf C_v[i,:]$ represents one concept group. The theory allows different concept counts across modalities ($K_M$ and $K_P$); for simplicity, we use a common concept budget $K$ for molecule, pocket, and sequence streams.}
\rev{To align notation with the theoretical mask $\mathcal M$, we denote the two implementation masks by $\mathcal M_{\mathrm{mol},\mathrm{poc}}$ and $\mathcal M_{\mathrm{mol},\mathrm{seq}}$. Their raw learnable parameters are $\widehat{\mathcal M}_{\mathrm{mol},\mathrm{poc}}$ and $\widehat{\mathcal M}_{\mathrm{mol},\mathrm{seq}}$, and the effective sparse masks $\mathcal M_{\mathrm{mol},\mathrm{poc}},\mathcal M_{\mathrm{mol},\mathrm{seq}}\in\mathbb{R}^{K\times K}$ are obtained after activation and thresholding.}

\begin{SCfigure}[0.8][!t]
\centering
\begin{tikzpicture}[xscale=0.72, yscale=0.84,
    inputbox/.style={trapezium, trapezium left angle=70, trapezium right angle=110, draw=blue!50, fill=blue!8, minimum height=0.5cm, minimum width=1.0cm, align=center, font=\scriptsize},
    embedbox/.style={trapezium, trapezium left angle=70, trapezium right angle=110, draw=purple!50, fill=purple!10, minimum height=0.4cm, minimum width=0.8cm, align=center, font=\scriptsize},
    encoderbox/.style={rectangle, draw=gray!60, fill=gray!15, rounded corners=2pt, minimum height=0.6cm, minimum width=1.2cm, align=center, font=\scriptsize},
    conceptorbox/.style={rectangle, draw=green!60!black, fill=green!12, rounded corners=2pt, minimum height=0.6cm, minimum width=1.2cm, align=center, font=\scriptsize, thick},
    concept/.style={circle, draw=black!80, fill=white, minimum size=0.36cm, font=\footnotesize},
    molconcept/.style={concept, fill=blue!15, draw=blue!60},
    pocconcept/.style={concept, fill=orange!15, draw=orange!60},
    seqconcept/.style={concept, fill=green!12, draw=green!55!black},
    bindvar/.style={rectangle, draw=red!60!black, fill=red!10, rounded corners=2pt, minimum height=0.35cm, minimum width=0.35cm, font=\tiny},
    arrow/.style={-{Stealth[length=1.5mm]}, thick, gray!70},
    bindarrow/.style={-{Stealth[length=1.5mm]}, thick, green!60!black},
    spurious/.style={-{Stealth[length=1.5mm]}, thick, red!40, dashed},
    label/.style={font=\small},
    sidelabel/.style={font=\small\bfseries},
]
\node[sidelabel, orange!70!black] at (-4.5, 2.8) {Pocket};
\node[inputbox] (poc_input) at (-4.5, 2) {3D Structure};
\node[encoderbox] (poc_enc) at (-4.5, 1) {Pocket Encoder};
\node[embedbox] (poc_emb) at (-4.5, 0) {$\mathbf{h}_{\mathrm{poc}}$};
\node[conceptorbox] (poc_ce) at (-4.5, -1) {Concept\\Extractor};
\node[pocconcept] (p1) at (-5.9, -2.75) {$c_{\mathrm{poc}}^1$};
\node[pocconcept] (p2) at (-4.5, -2.75) {$c_{\mathrm{poc}}^2$};
\node[pocconcept] (p3) at (-3.1, -2.75) {$c_{\mathrm{poc}}^3$};
\draw[arrow] (poc_input) -- (poc_enc);
\draw[arrow] (poc_enc) -- (poc_emb);
\draw[arrow] (poc_emb) -- (poc_ce);
\draw[arrow] (poc_ce) -- (-5.35, -1.55) -- (p1);
\draw[arrow] (poc_ce) -- (-4.5, -1.55) -- (p2);
\draw[arrow] (poc_ce) -- (-3.65, -1.55) -- (p3);

\node[sidelabel, blue!70] at (0, 2.8) {Molecule};
\node[inputbox] (mol_input) at (0, 2) {3D Structure};
\node[encoderbox] (mol_enc) at (0, 1) {Molecule Encoder};
\node[embedbox] (mol_emb) at (0, 0) {$\mathbf{h}_{\mathrm{mol}}$};
\node[conceptorbox] (mol_ce) at (0, -1) {Concept\\Extractor};
\node[molconcept] (m1) at (-1.4, -2.75) {$c_{\mathrm{mol}}^1$};
\node[molconcept] (m2) at (0, -2.75) {$c_{\mathrm{mol}}^2$};
\node[molconcept] (m3) at (1.4, -2.75) {$c_{\mathrm{mol}}^3$};
\draw[arrow] (mol_input) -- (mol_enc);
\draw[arrow] (mol_enc) -- (mol_emb);
\draw[arrow] (mol_emb) -- (mol_ce);
\draw[arrow] (mol_ce) -- (-0.9, -1.55) -- (m1);
\draw[arrow] (mol_ce) -- (0, -1.55) -- (m2);
\draw[arrow] (mol_ce) -- (0.9, -1.55) -- (m3);

\node[sidelabel, green!55!black] at (4.5, 2.8) {Sequence};
\node[inputbox] (seq_input) at (4.5, 2) {Protein Sequence};
\node[encoderbox] (seq_enc) at (4.5, 1) {Sequence Encoder};
\node[embedbox] (seq_emb) at (4.5, 0) {$\mathbf{h}_{\mathrm{seq}}$};
\node[conceptorbox] (seq_ce) at (4.5, -1) {Concept\\Extractor};
\node[seqconcept] (s1) at (3.1, -2.75) {$c_{\mathrm{seq}}^1$};
\node[seqconcept] (s2) at (4.5, -2.75) {$c_{\mathrm{seq}}^2$};
\node[seqconcept] (s3) at (5.9, -2.75) {$c_{\mathrm{seq}}^3$};
\draw[arrow] (seq_input) -- (seq_enc);
\draw[arrow] (seq_enc) -- (seq_emb);
\draw[arrow] (seq_emb) -- (seq_ce);
\draw[arrow] (seq_ce) -- (3.65, -1.55) -- (s1);
\draw[arrow] (seq_ce) -- (4.5, -1.55) -- (s2);
\draw[arrow] (seq_ce) -- (5.35, -1.55) -- (s3);

\node[bindvar] (pm1) at (-3.65, -4.8) {$1$};
\node[bindvar] (pm2) at (-2.25, -4.8) {$1$};
\node[bindvar, fill=gray!20, draw=gray!50] (pm3) at (-0.85, -4.8) {$0$};
\node[bindvar] (ms1) at (0.85, -4.8) {$1$};
\node[bindvar] (ms2) at (2.25, -4.8) {$1$};
\node[bindvar, fill=gray!20, draw=gray!50] (ms3) at (3.65, -4.8) {$0$};

\draw[bindarrow] (p1) -- (pm1);
\draw[bindarrow] (m1) -- (pm1);
\draw[bindarrow] (p2) -- (pm2);
\draw[bindarrow] (m2) -- (pm2);
\draw[spurious] (p3) -- (pm3);
\draw[spurious] (m3) -- (pm3);

\draw[bindarrow] (m1) -- (ms1);
\draw[bindarrow] (s1) -- (ms1);
\draw[bindarrow] (m2) -- (ms2);
\draw[bindarrow] (s2) -- (ms2);
\draw[spurious] (m3) -- (ms3);
\draw[spurious] (s3) -- (ms3);

\node[label] at (-2.25, -5.6) {\textcolor{green!60!black}{$\rightarrow$} $B_{ij}=1$: Active binding};
\node[label] at (2.25, -5.6) {\textcolor{red!40}{$\dashrightarrow$} $B_{ij}=0$: Blocked};
\end{tikzpicture}
\caption{Implementation-level view of \ours{}. Each modality-specific encoder maps its input to \rev{atomic concepts $\{c_v^k\}$,} \rev{each with dimension $d_c\ge1$}. The indicators $B_{ij}$ at the bottom visualize the \rev{branch-wise} cross-modal mask $\mathcal{M}_{a,b}$ \rev{for $(a,b)\in\{(\mathrm{mol},\mathrm{poc}),(\mathrm{mol},\mathrm{seq})\}$}: $B_{ij}{=}1$ when the $(i,j)$ entry of $\mathcal{M}_{a,b}$ survives the forward threshold \rev{($\tau=0.5$)} (green solid edges, active pair), and $B_{ij}{=}0$ otherwise (red dashed edges, blocked pair).  Active pairs form the contrastive score, while $\mathcal{L}_\mathrm{spa}$ encourages the interaction sparsity levels.}
\label{fig:causal_graph}
\end{SCfigure}

\paragraph{Concept Extractor.}\label{subsec:concept-factorize}
Each modality is decomposed into $K$ latent concepts using a Perceiver-style cross-attention block~\citep{perceiver2021}:
\begin{equation}
\begin{aligned}
\widetilde{\mathbf C}^{(\ell+1)}_v
&=
\mathbf{C}^{(\ell)}_v
+\mathrm{MHA}\!\bigl(\mathrm{LN}(\mathbf{C}^{(\ell)}_v),\,\mathrm{LN}(\widehat{\mathbf H}_v),\,\mathrm{LN}(\widehat{\mathbf H}_v)\bigr),\\
\mathbf{C}^{(\ell+1)}_v
&=
\widetilde{\mathbf C}^{(\ell+1)}_v
+\mathrm{FFN}\!\bigl(\mathrm{LN}(\widetilde{\mathbf C}^{(\ell+1)}_v)\bigr),
\end{aligned}
\end{equation}
where $\mathbf{C}^{(0)}_v = \mathbf{Q}_v\in\mathbb{R}^{K\times d_c}$ are learnable concept queries and $\widehat{\mathbf H}_v=\mathbf H_v W_v\in\mathbb{R}^{N_v\times d_c}$ is the projected token sequence. After $L$ layers, we obtain $\mathbf{C}_v\in\mathbb{R}^{K\times d_c}$, which serves as the practical counterpart of the latent concepts in $\S$~\ref{sec:theory}. If a backbone exposes only a pooled vector, it corresponds to the special case $N_v=1$; in the main structural branches, token-level features provide nontrivial cross-attention keys and values.

\paragraph{Sparsity Mask.}\label{subsec:variants}
Following $\S$~\ref{sec:theory}, we model binding through sparse cross-modal interactions rather than dense alignment. For each branch $(a,b)\in\{(\mathrm{mol},\mathrm{poc}),(\mathrm{mol},\mathrm{seq})\}$, we compute
\begin{equation}
\begin{aligned}
\mathbf Z_{a,b}&=\Gamma(\widehat{\mathcal{M}}_{a,b}),&
\mathcal{M}_{a,b}&=H_{\tau}^{\mathrm{ST}}\!\left(\mathbf Z_{a,b}\right),\\
\mathbf{A}_{a,b}&=\phi(\mathbf{C}_{a},\mathbf{C}_{b}),&
u_{a,b}&=\mathrm{Agg}(\mathcal{M}_{a,b}\odot \mathbf{A}_{a,b}),
\end{aligned}
\end{equation}
\rev{where, under the common-$K$ implementation, $\widehat{\mathcal M}_{a,b}\in\mathbb{R}^{K\times K}$ is an unconstrained parameter matrix and $\mathcal M_{a,b}$ is the effective sparse mask used for interaction. The activation $\Gamma(\widehat{\mathcal M})=\tanh(\widehat{\mathcal M})+1$ first produces a continuous mask $\mathbf Z_{a,b}\in[0,2]^{K\times K}$, and the straight-through threshold operator $H_{\tau}^{\mathrm{ST}}$ is applied element-wise with mask threshold $\tau=0.5$: the forward pass uses $[H_{\tau}(Z)]_{ij}=Z_{ij}\mathbf{1}\{Z_{ij}>\tau\}$, while the backward pass sends gradients through the continuous $Z_{ij}$. Thus thresholding occurs before aggregation, and $\mathrm{Agg}$ receives only the thresholded masked compatibility matrix $\mathcal M_{a,b}\odot\mathbf A_{a,b}$. Each nonzero entry $\mathcal{M}_{a,b}[i,j]$ represents the strength of interaction between the $i$-th concept token of modality $a$ and the $j$-th concept token of modality $b$.}
\rev{Here $\mathcal{M}_{a,b}$ is the implementation counterpart of the theoretical interaction mask $\mathcal{M}$ in $\S$~\ref{sec:theory}. For notational consistency, we use the same compatibility symbol $\phi$: in the scalar theory $\phi:\mathbb R\times\mathbb R\to\mathbb R$, while in the vector-valued implementation $\phi:\mathbb R^{d_c}\times\mathbb R^{d_c}\to\mathbb R$. Together, $\mathcal M_{a,b}$ and $\phi$ instantiate the gated interaction term $\mathcal{M}_{ij}\phi(c_i^M,c_j^P)$ at the token level.} 
\rev{The matrix $\mathbf{A}_{a,b}\in\mathbb{R}^{K\times K}$ collects all concept-pair compatibilities, where each entry is computed by $\phi$ (we use a simple inner product between vector concept tokens). The operator $\odot$ denotes element-wise multiplication, and $\mathrm{Agg}(\mathbf B)=K^{-1}\sum_{i,j}B_{ij}$ aggregates the retained interactions into a branch-level score in our implementation. Thus, the masks $\mathcal M_{\mathrm{mol},\mathrm{poc}}$ and $\mathcal M_{\mathrm{mol},\mathrm{seq}}$ act as cross-modal bottlenecks, enforcing the interaction sparsity.}

\paragraph{Training Loss.}\label{subsec:training}
We retain retrieval-based supervision applied after concept decomposition and sparse interaction. The scalar $u_{a,b}$ defines the pairwise score for a protein-molecule pair, and over a mini-batch of size $N$, these scores form matrices $\mathbf{U}^{a,b} \in \mathbb{R}^{N \times N}$. \rev{The contrastive loss is} 
\begin{equation}
{\color{black}
\ell_{\mathrm{cont}}(\mathbf U^{a,b}) =
-\frac{1}{2N}\sum_{i=1}^{N}\left[
\log\frac{\exp(U^{a,b}_{ii}/\tau_{\mathrm{c}})}{\sum_{j=1}^{N}\exp(U^{a,b}_{ij}/\tau_{\mathrm{c}})} +
\log\frac{\exp(U^{a,b}_{ii}/\tau_{\mathrm{c}})}{\sum_{j=1}^{N}\exp(U^{a,b}_{ji}/\tau_{\mathrm{c}})}
\right],
}
\end{equation}
\rev{where $\tau_{\mathrm{c}}>0$ is the contrastive temperature. The combined contrastive objective is}
\begin{equation}
{\color{black}
\mathcal{L}_{\mathrm{cont}}
=
\ell_{\mathrm{cont}}(\mathbf U^{\mathrm{poc,mol}})
+\ell_{\mathrm{cont}}(\mathbf U^{\mathrm{seq,mol}}).
}
\end{equation}
\rev{The theory models the binder-selected positive distribution; the off-diagonal batch entries here are retrieval negatives used by the contrastive surrogate, not additional observed $B{=}0$ labels.}

We further add sparsity and geometric regularization terms~\citep{hypseek2025}. \rev{The sparsity loss is the entry-wise $L_1$ penalty on the continuous activated masks, which preserves gradients to the raw mask parameters:}
\begin{equation}
\rev{\mathcal{L}_{\mathrm{spa}}
=
\|\mathbf Z_{\mathrm{mol},\mathrm{poc}}\|_1
+\|\mathbf Z_{\mathrm{mol},\mathrm{seq}}\|_1.}
\label{eq:lspa}
\end{equation}
\rev{The full training objective is} 
\begin{equation}
{\color{black}
\mathcal{L}
=
\mathcal{L}_{\mathrm{cont}}
+\lambda_{\mathrm{spa}}\mathcal{L}_{\mathrm{spa}}
+\lambda_{\mathrm{reg}}\mathcal{L}_{\mathrm{reg}}.
}
\end{equation}
\rev{Here $\mathcal{L}_{\mathrm{reg}}$ collects the geometric regularizers inherited from HypSeek~\citep{hypseek2025}.}

\begin{table}[t!]
\caption{Virtual-screening comparison on two benchmarks between baselines and \ours{}$_{\mathrm{SP}}$ ($K{=}128,d_c{=}1024$), \ours{}$_{\mathrm{LR}}$ ($K{=}64,r{=}1$), and \ours{}$_{\mathrm{EMB}}$ ($K{=}64,d_c{=}256$), and \ours{}$_{\mathrm{ATOM}}$ ($K{=}128,d_c{=}1024$). $^\dagger$S$^2$Drug results are paper-reported~\citep{s2drug2026}.}
\label{tab:main_results}
\centering
\setlength{\tabcolsep}{5pt}
\small
\resizebox{0.93\textwidth}{!}{%
\begin{tabular}{l|ccc|ccc}
\toprule
\multirow{2}{*}{\textbf{Method}} & \multicolumn{3}{c|}{\textbf{DUD-E} ($n{=}102$)} & \multicolumn{3}{c}{\textbf{LIT-PCBA} ($n{=}15$)} \\
& AUROC & BEDROC$_{80.5}$ & EF$_{1\%}$ & AUROC & BEDROC$_{80.5}$ & EF$_{1\%}$ \\
\midrule
Glide-SP~\citep{glide2004} & 0.767 & 0.407 & 16.18 & 0.532 & 0.040 & 3.41 \\
DeepDTA~\citep{deepdta2018} & 0.584 & 0.051 & 2.28 & 0.563 & 0.025 & 1.47 \\
Gnina~\citep{gnina2021} & 0.782 & 0.299 & 17.73 & 0.609 & 0.054 & 4.63 \\
RTMScore~\citep{rtmscore2023} & 0.753 & 0.434 & 27.10 & 0.525 & 0.039 & 2.94 \\
TankBind~\citep{tankbind2022} & 0.751 & 0.330 & 13.00 & 0.597 & 0.039 & 2.90 \\
BigBind~\citep{bigbind2022} & 0.501 & 0.024 & 1.18 & 0.628 & 0.050 & 3.79 \\
DrugCLIP~\citep{drugclip2023} & 0.809 & 0.505 & 31.89 & 0.572 & 0.062 & 5.51 \\
GenScore~\citep{genscore2023} & 0.816 & 0.473 & 28.53 & 0.596 & 0.065 & 5.14 \\
EquiScore~\citep{equiscore2023} & 0.776 & 0.432 & 17.68 & 0.568 & 0.049 & 3.51 \\
DrugHash~\citep{drughash2025} & 0.837 & 0.572 & 37.18 & 0.546 & 0.071 & 6.14 \\
% LigUnity$_\text{poc}$~\citep{ligunity2024} & 0.892 & 0.653 & 42.63 & 0.599 & 0.113 & 6.47 \\
% HypSeek~\citep{hypseek2025} (reported) & \textbf{0.944} & \textbf{0.789} & \textbf{51.44} & 0.621 & \textbf{0.120} & 6.81 \\
LigUnity~\citep{ligunity2024} & 0.897 & 0.674 & 44.20 & 0.599 & 0.075 & 6.50 \\
HypSeek~\citep{hypseek2025} & 0.909 & 0.605 & 37.83 & 0.603 & 0.061 & 4.62 \\
S$^2$Drug$^\dagger$~\citep{s2drug2026} & 0.925 & \textbf{0.793} & 43.06 & 0.582 & 0.087 & 7.38 \\
% \midrule
% \ours{}-V0 (concept-only ablation, seed=1) & 0.932 & 0.704 & 44.48 & 0.627 & 0.061 & 4.67 \\
\rowcolor{green!7}\textbf{\ours{}$_{\mathrm{SP}}$ (ours)} & 0.935 & 0.744 & 47.69 & \textbf{0.639} & 0.097 & 8.56 \\
\rowcolor{green!7}\textbf{\ours{}$_{\mathrm{LR}}$ (ours)} & 0.935 & 0.704 & 44.87 & 0.632 & \textbf{0.103} & 8.84 \\
\rowcolor{green!7}\textbf{\ours{}$_{\mathrm{EMB}}$ (ours)} & 0.939 & 0.754 & 48.56 & 0.636 & 0.102 & \textbf{9.60} \\
\rowcolor{green!7}\textbf{\ours{}$_{\mathrm{ATOM}}$ (ours)} & \textbf{0.943} & 0.768 & \textbf{49.66} & 0.621 & 0.079 & 6.25 \\

\bottomrule
\end{tabular}}
\end{table}

Instead of letting the contrastive objective operate on dense embeddings, we require it to act through sparse concept-level interactions. This enforces that binding-relevant dependence is captured through a small subset of cross-modal interactions. \rev{The other variants change only how the interaction is constrained: \ours{}$_{\mathrm{LR}}$ factorizes the raw mask as $\widehat{\mathcal{M}}_{a,b}=\mathbf{U}_{a,b}\mathbf{V}_{a,b}^{\top}$ with $\mathbf{U}_{a,b},\mathbf{V}_{a,b}\in\mathbb{R}^{K\times r}$ before the same activation and gate, \ours{}$_{\mathrm{EMB}}$ applies the constrained mask to the pooled embedding instead of the concept axis, and \ours{}$_{\mathrm{ATOM}}$ feeds the full per-atom Uni-Mol token sequences to the concept extractor.} \rev{Further implementation details for the main \ours{}$_{\mathrm{SP}}$ model, including the explicit form of the geometric regularizers in $\mathcal{L}_{\mathrm{reg}}$, and a side-by-side comparison of the three main implementations are provided in App.~\ref{app:impl_sp} and App.~\ref{app:impl_compare}.}

\section{Experiments}
\label{sec:experiments}

\paragraph{Experimental setup.} We evaluate on two complementary benchmarks: {DUD-E}~\citep{dude2012}, which covers broad target diversity, and {LIT-PCBA}~\citep{litpcba2020,pcba2015}, which suppresses property-based shortcuts and is the stricter test of genuine binding discrimination. We report standard metrics: AUROC, BEDROC$_{80.5}$, and EF@1\%, with the latter two emphasizing early enrichment. We adopt the same curated assay-level training dataset as LigUnity~\citep{ligunity2024} and HypSeek~\citep{hypseek2025}, constructed from ChEMBL~\citep{mendez_chembl_2018}, BindingDB~\citep{bindingdb2016}, and PDBBind~\citep{liu2017pdbbind}. \rev{As in these baselines, we remove every ChEMBL/BindingDB assay whose target UniProt ID appears in DUD-E, LIT-PCBA, or DEKOIS 2.0, reducing this portion from 26{,}729 to 18{,}316 assays, while the PDBBind pairs are not filtered by UniProt ID (App.~\ref{app:implementation}). The evaluation is therefore a cross-dataset generalization protocol rather than a random split of the training pool, with about 476K training molecules and 130K pockets.} All models are trained for 50 epochs on four NVIDIA A100 GPUs under the same pipeline for a fair comparison, and the main tables report evaluation results from the last checkpoint, which avoids tuning checkpoint selection to the held-out test benchmarks in the cross-dataset generalization setting. 

\paragraph{Main Comparison on Virtual Screening.}
\label{subsec:main_results}
Table~\ref{tab:main_results} compares \ours{} with three families of virtual-screening methods: classical docking systems, structure-aware scoring models on protein-ligand complexes, and retrieval-based representation learners. Retrieval methods are strongest in this large-scale regime, so the key comparison is against recent retrieval baselines. \ours{}$_{\mathrm{SP}}$ improves over LigUnity and HypSeek on all six metrics, with the largest gains on LIT-PCBA early enrichment, where property shortcuts are least useful: EF@1\% rises from $6.50$ to $8.56$ over LigUnity and from $4.62$ to $8.56$ over HypSeek, while BEDROC$_{80.5}$ improves from $0.075$ and $0.061$ to $0.097$. \rev{Against the paper-reported results of the concurrent S$^2$Drug~\citep{s2drug2026}, which uses a heavier two-stage sequence--structure pipeline, S$^2$Drug attains the best DUD-E BEDROC$_{80.5}$, while all three \ours{} variants achieve higher DUD-E AUROC and EF@1\% and higher values on all three LIT-PCBA metrics; a matched EF@0.5\% comparison is given in App.~\ref{app:s2drug}.} \rev{\ours{}$_{\mathrm{LR}}$ and \ours{}$_{\mathrm{EMB}}$ further show that the same constrained-interaction principle remains strong under explicit low-rank and embedding-level implementations. We also report \ours{}$_{\mathrm{ATOM}}$, an atom-attentive version of \ours{}$_{\mathrm{SP}}$ with the same configuration, whose concept extractor attends over the full per-atom token sequences of the Uni-Mol molecule and pocket encoders rather than their pooled tokens, so that concepts can be traced to individual atoms (App.~\ref{app:thrb_case}). It attains the best DUD-E AUROC and EF@1\%, but weaker LIT-PCBA early enrichment than the other variants.} Overall, sparse or low-rank interaction helps when holistic alignment confronts nuisance correlations. 

\begin{table}[t]
\caption{Ablation and diagnostic study on DUD-E and LIT-PCBA. \ours{}$_{\mathrm{SP}}$ is the sparse mask, \ours{}$_{\mathrm{LR}}$ is the low-rank mask, and \ours{}$_{\mathrm{EMB}}$ is the pooled embedding-space mask. $-L_{\mathrm{spa}}$ removes the sparsity penalty from \ours{}$_{\mathrm{SP}}$, while $+L_{\mathrm{spa}}$ tests an optional sparsity penalty on \ours{}$_{\mathrm{LR}}$ without changing its rank-one parameterization. $\mathbf M_{\mathrm{emb}}^{r=1}$ replaces the pooled embedding mask with a rank-one mask. Rows report mean$_{\pm\,\text{std}}$ over five runs.}
\label{tab:ablation_results}
\centering
\setlength{\tabcolsep}{5pt}
\small
\resizebox{\textwidth}{!}{%
\begin{tabular}{l|ccc|ccc}
\toprule
\multirow{2}{*}{\textbf{Method}} & \multicolumn{3}{c|}{\textbf{DUD-E} ($n{=}102$)} & \multicolumn{3}{c}{\textbf{LIT-PCBA} ($n{=}15$)} \\
& AUROC & BEDROC$_{80.5}$ & EF$_{1\%}$ & AUROC & BEDROC$_{80.5}$ & EF$_{1\%}$ \\
\midrule
LigUnity~\citep{ligunity2024} & 0.896 \tiny{$\pm$ 0.005} & 0.660 \tiny{$\pm$ 0.018} & 43.14 \tiny{$\pm$ 1.42} & 0.586 \tiny{$\pm$ 0.010} & 0.077 \tiny{$\pm$ 0.001} & 6.46 \tiny{$\pm$ 0.18} \\
HypSeek~\citep{hypseek2025} & 0.914 \tiny{$\pm$ 0.013} & 0.690 \tiny{$\pm$ 0.081} & 44.12 \tiny{$\pm$ 6.14} & 0.603 \tiny{$\pm$ 0.009} & 0.067 \tiny{$\pm$ 0.007} & 5.62 \tiny{$\pm$ 0.71} \\
\midrule
\rowcolor{green!7}\ours{}$_{\mathrm{SP}}$ & \textbf{0.939} \tiny{$\pm$ 0.004} & 0.746 \tiny{$\pm$ 0.004} & 47.99 \tiny{$\pm$ 0.39} & \textbf{0.635} \tiny{$\pm$ 0.006} & 0.088 \tiny{$\pm$ 0.007} & 7.55 \tiny{$\pm$ 0.73} \\
\ours{}$_{\mathrm{SP}}$ $-\,L_\mathrm{spa}$ & 0.935 \tiny{$\pm$ 0.004} & 0.732 \tiny{$\pm$ 0.014} & 46.93 \tiny{$\pm$ 1.07} & 0.623 \tiny{$\pm$ 0.005} & 0.084 \tiny{$\pm$ 0.004} & 7.17 \tiny{$\pm$ 0.36} \\
% \ours{}$_{\mathrm{SP}}$ $-\,L_\mathrm{spa}-\mathbf{M}$ & 0.938 \tiny{$\pm$ 0.003} & 0.735 \tiny{$\pm$ 0.010} & 47.02 \tiny{$\pm$ 1.04} & 0.622 \tiny{$\pm$ 0.003} & \textbf{0.089} \tiny{$\pm$ 0.006} & \textbf{7.77} \tiny{$\pm$ 0.97} \\
% \ours{}$_{\mathrm{SP}}$\,$-L_\mathrm{spa}-\mathbf{M}-$\,extractor & 0.924 \tiny{$\pm$ 0.013} & 0.690 \tiny{$\pm$ 0.081} & 44.12 \tiny{$\pm$ 6.14} & 0.603 \tiny{$\pm$ 0.009} & 0.067 \tiny{$\pm$ 0.007} & 5.62 \tiny{$\pm$ 0.71} \\
% \ours{}$_{\mathrm{SP}}$\,$-L_\mathrm{spa}-L_\mathrm{cont}$ & 0.914 \tiny{$\pm$ 0.013} & 0.690 \tiny{$\pm$ 0.081} & 44.12 \tiny{$\pm$ 6.14} & 0.603 \tiny{$\pm$ 0.009} & 0.067 \tiny{$\pm$ 0.007} & 5.62 \tiny{$\pm$ 0.71} \\
\midrule
\rowcolor{green!7}\ours{}$_{\mathrm{LR}}$ & 0.937 \tiny{$\pm$ 0.004} & 0.732 \tiny{$\pm$ 0.011} & 46.87 \tiny{$\pm$ 0.91} & 0.630 \tiny{$\pm$ 0.006} & 0.085 \tiny{$\pm$ 0.006} & 7.22 \tiny{$\pm$ 0.31} \\
\ours{}$_{\mathrm{LR}}$\,$+L_\mathrm{spa}$ & 0.935 \tiny{$\pm$ 0.003} & 0.727 \tiny{$\pm$ 0.007} & 46.39 \tiny{$\pm$ 0.58} & 0.623 \tiny{$\pm$ 0.007} & 0.083 \tiny{$\pm$ 0.005} & 7.51 \tiny{$\pm$ 0.94} \\
\midrule
\rowcolor{green!7}\ours{}$_{\mathrm{EMB}}$ & 0.938 \tiny{$\pm$ 0.001} & \textbf{0.752} \tiny{$\pm$ 0.005} & \textbf{48.47} \tiny{$\pm$ 0.44} & 0.634 \tiny{$\pm$ 0.008} & \textbf{0.101} \tiny{$\pm$ 0.007} & \textbf{9.52} \tiny{$\pm$ 0.96} \\
\ours{}$_{\mathrm{EMB}}$ $+\mathbf M_{\mathrm{emb}}^{r=1}$ & 0.723 \tiny{$\pm$ 0.022} & 0.319 \tiny{$\pm$ 0.013} & 20.96 \tiny{$\pm$ 0.63} & 0.539 \tiny{$\pm$ 0.003} & 0.018 \tiny{$\pm$ 0.001} & 1.38 \tiny{$\pm$ 0.10} \\
\bottomrule
\end{tabular}}
\end{table}

\begin{figure}[t]
\centering
\includegraphics[width=\textwidth]{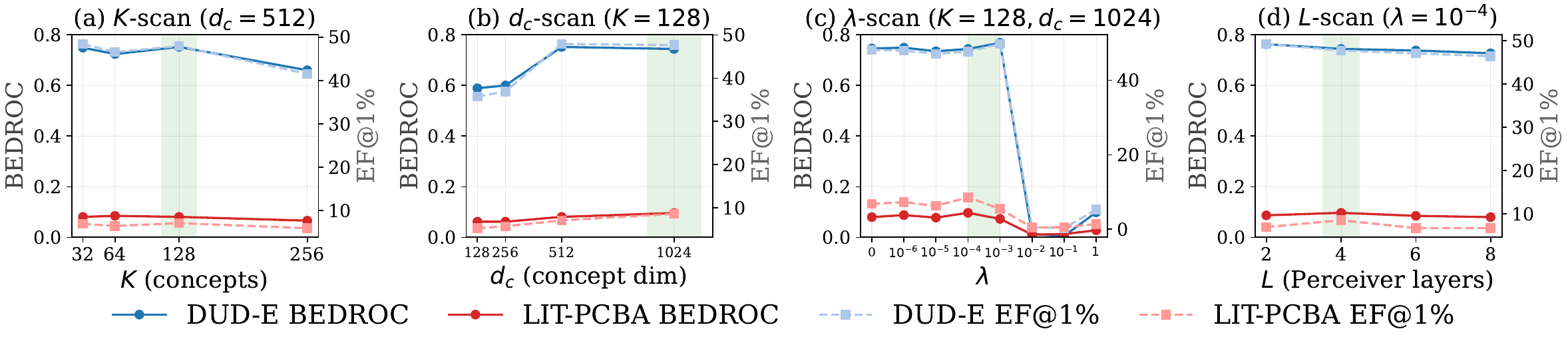}
\setlength{\abovecaptionskip}{-0.4cm}
\caption{Hyperparameter sweeps for \ours{}$_{\mathrm{SP}}$ on DUD-E and LIT-PCBA. Solid lines show BEDROC$_{80.5}$ and dashed lines show EF@1\%; shaded bands mark the selected hyperparameters.}
\label{fig:hparam_scans}
\end{figure}

\paragraph{Ablation Diagnostics.}
\label{subsec:ablation}
\rev{Table~\ref{tab:ablation_results} reports both component ablations and implementation diagnostics. For \ours{}$_{\mathrm{SP}}$, removing $L_\mathrm{spa}$ weakens early enrichment,}  \rev{suggesting that the concept-pair mask is most useful when regularized toward a simple interaction structure. For \ours{}$_{\mathrm{LR}}$, the rank-one factorization is retained in both rows; adding $L_\mathrm{spa}$ has limited effect, so the gain should be attributed to the explicit low-rank parameterization rather than to sparsity regularization. For \ours{}$_{\mathrm{EMB}}$, imposing a rank-one pooled embedding mask sharply degrades performance, showing that rank compression must act directly on the concept axis rather than after pooling.} 

\paragraph{Hyperparameter Study.}
\label{subsec:hparam}
We sweep $K$ (concept count), $d_c$ (concept dimension), $\lambda$ (sparsity weight $\lambda_{\mathrm{spa}}$), and $L$ (Perceiver depth). Figure~\ref{fig:hparam_scans} and Table~\ref{tab:lambda_mask} show robustness around our setting ($K{=}128, d_c{=}1024, \lambda{=}10^{-4}, L{=}4$), with failures: $K{=}256$ over-parameterizes, $d_c \leq 256$ underfits, and $\lambda \geq 10^{-2}$ collapses both benchmarks. In the useful regime, $\lambda{=}10^{-4}$ favors LIT-PCBA and $\lambda{=}10^{-3}$ favors DUD-E, so the sparsity level is benchmark-dependent. Overall, a soft interaction bottleneck helps, while hard-zero masking does not. \rev{The mask statistics in Table~\ref{tab:lambda_mask} describe the structure the model reaches: $\lambda{=}10^{-4}$ has no hard zeros but concentrates its energy in one singular direction, the low-rank regime of Condition~3.i; $\lambda{=}10^{-3}$ gives an exact sparse-antichain support; and $\lambda{=}10^{-2}$ keeps the antichain property but over-sparsifies, so that property alone is insufficient. As $\phi$ is bilinear here, these are structural diagnostics rather than certificates of component-wise identifiability.} 

\begin{table}[t]
\caption{$\lambda_{\mathrm{spa}}$-scan for \ours{}$_{\mathrm{SP}}$ with paired cross-modal mask statistics, averaged over the two branches. Small $\lambda_{\mathrm{spa}}$ preserves a low-rank regime, while large $\lambda_{\mathrm{spa}}$ drives hard-zero collapse. `Anti.\ viol.' counts antichain violations, i.e., pairs of row (or column) supports in which one contains the other, over $32{,}512$ pairwise tests. Highlighted rows are the two operating points used in the text.}
\label{tab:lambda_mask}
\centering\small
\setlength{\tabcolsep}{4pt}
\resizebox{\textwidth}{!}{%
\begin{tabular}{c|ccc|ccc|ccccc}
\toprule
\multirow{2}{*}{$\lambda_{\mathrm{spa}}$} & \multicolumn{3}{c|}{\textbf{DUD-E}} & \multicolumn{3}{c|}{\textbf{LIT-PCBA}} & \multicolumn{5}{c}{\textbf{Cross-modal mask statistics}} \\
 & AUC & BEDROC$_{80.5}$ & EF@1\% & AUC & BEDROC$_{80.5}$ & EF@1\% & $L_1$ avg & \%zeros & $r@95\%$ & $r@99\%$ & Anti.\ viol. \\
\midrule
$0$            & 0.938 & 0.746 & 48.14 & 0.617 & 0.080 & 6.83 & 0.93 & 0.0  & 1   & 25.0 & 32{,}512 \\
$10^{-6}$      & 0.940 & 0.749 & 47.94 & 0.635 & 0.088 & 7.26 & 0.92 & 0.0  & 1   & 25.0 & 32{,}512 \\
$10^{-5}$      & 0.938 & 0.735 & 47.09 & 0.607 & 0.078 & 6.29 & 0.88 & 0.0  & 1   & 26.0 & 32{,}512 \\
\rowcolor{green!7}$10^{-4}$ & 0.935 & 0.744 & 47.69 & \textbf{0.639} & \textbf{0.097} & \textbf{8.56} & 0.61 & 0.0  & \textbf{1}   & \textbf{34.0} & 32{,}512 \\
\rowcolor{green!7}$10^{-3}$ & \textbf{0.943} & \textbf{0.768} & \textbf{49.73} & 0.614 & 0.073 & 5.49 & \textbf{0.21} & \textbf{20.5} & 45.5 & 79.5 & \textbf{0} \\
$10^{-2}$      & 0.437 & 0.013 & 0.53  & 0.457 & 0.011 & 0.46 & 0.06 & 78.4 & 74.0 & 96.0 & 0 \\
$10^{-1}$      & 0.297 & 0.006 & 0.30  & 0.437 & 0.013 & 0.59 & 0.02 & 92.0 & 75.5 & 97.5 & 1 \\
$1$            & 0.725 & 0.099 & 5.30  & 0.595 & 0.028 & 1.44 & 0.01 & 96.0 & 75.5 & 96.5 & 324 \\
\bottomrule
\end{tabular}}
\vspace{-1em}
\end{table}

\rev{\textbf{Out-of-distribution generalization and interpretability.} \ We further evaluate zero-shot transfer on DEKOIS 2.0~\citep{bauer2013dekois} under a target-level split over nine targets whose UniProt IDs are absent from all training sources, and a joint target--scaffold split, where \ours{} improves over HypSeek and LigUnity on AUROC and BEDROC$_{80.5}$ in both settings (App.~\ref{app:dekois_ood}). An interpretability case study on the Thyroid Hormone Receptor $\beta$ (THRB) complex from DUD-E further shows that observed binding contacts concentrate within a learned concept pair, and that masking the associated atoms reduces the predicted binding score far more than size-matched random masks (App.~\ref{app:thrb_case}).}

\textbf{Appendix roadmap.} \ For readability, the main text reports only the core comparisons above. Implementation details and reproduction checks are in App.~\ref{app:implementation}, \rev{including the three variants \ours{}$_{\mathrm{SP}}$, \ours{}$_{\mathrm{LR}}$, and \ours{}$_{\mathrm{EMB}}$.} Additional protocol comparisons\rev{, including the \texttt{best\_auc} and \texttt{best\_bedroc} checkpoints of \ours{}$_{\mathrm{ATOM}}$,} are in App.~\ref{app:baseline_3prot}, and a longer discussion of the ablation study is in App.~\ref{app:ablation_analysis}. The full hyperparameter tables for $K/d_c$, $\lambda$, and $L$ are in App.~\ref{app:v2_kd_3prot}, App.~\ref{app:v2_lambda_3prot}, and App.~\ref{app:v2_layers_3prot}. The extended theory--practice analysis of the learned masks\rev{, including the antichain statistics of Table~\ref{tab:lambda_mask},} is in App.~\ref{app:v2_mask_stats}. Supplementary DUD-E ROC-enrichment and Merck-FEP affinity-ranking analyses are in App.~\ref{app:re_dude} and App.~\ref{app:sp_merck_affinity}, \rev{low-rank and embedding-level variant results are in App.~\ref{app:lr_results} and App.~\ref{app:emb_results}}, and the full-sweep synthesis is in App.~\ref{app:full_sweep}. \rev{The matched EF@0.5\% comparison with S$^2$Drug, the DEKOIS 2.0 out-of-distribution evaluation, and the THRB interpretability case study are in App.~\ref{app:s2drug}, App.~\ref{app:dekois_ood}, and App.~\ref{app:thrb_case}.} A comprehensive discussion of the remaining theory--practice gap is in App.~\ref{app:theory:gap}.

\section{Conclusion}
\label{sec:conclusion}

\rev{This paper revisits retrieval-based virtual screening from a structural perspective. Existing methods largely rely on dense shared-space alignment, although protein-molecule binding is typically driven by a small set of local compatibilities. We formalized this mismatch with a V-structure causal model under binder-only selection, showing that bilinear interaction is non-identifiable even when sparse, while sparse antichain interaction with a non-bilinear $\phi$ yields component-wise identifiability and low-rank interaction yields subspace identifiability. \ours{} turns this principle into a concept-based retrieval framework with three structure-constrained implementations, and consistently improves over strong baselines on DUD-E and LIT-PCBA, with especially clear gains on LIT-PCBA early enrichment. Beyond the two main benchmarks, the learned interaction structures transfer to target- and scaffold-level out-of-distribution splits of DEKOIS 2.0, and the learned masks are strongly low-rank at our operating point, the regime our subspace-identifiability theorem covers. Overall, scalable virtual screening should learn not only better shared embeddings but also more reusable cross-modal interaction structures, together with the conditions under which they remain identifiable.}

\section*{Acknowledgments}
We would like to acknowledge the support from NSF Award No.~2229881, AI Institute for Societal Decision Making (AI-SDM), the National Institutes of Health (NIH) under Contract R01HL159805, and grants from Quris AI, Florin Court Capital, MBZUAI-WIS Joint Program, and the Al Deira Causal Education project. Meanwhile, we also thank the reviewers for their constructive suggestions.

\bibliography{references_refined}
\bibliographystyle{plainnat}

\appendix

\clearpage

\textit{\large Appendix for}\\ \ \\
{\large \bf ``\ours{}: Causal Modeling and Learning for Protein-Molecule Virtual Screening''}\

\newcommand{\beginsupplement}{%
\setcounter{table}{0}
\renewcommand{\thetable}{A\arabic{table}}%
\renewcommand{\theHtable}{A\arabic{table}}%
\setcounter{figure}{0}
\renewcommand{\thefigure}{A\arabic{figure}}%
\renewcommand{\theHfigure}{A\arabic{figure}}%
\setcounter{section}{0}
\renewcommand{\thesection}{A\arabic{section}}%
\renewcommand{\theHsection}{A\arabic{section}}%
\setcounter{theorem}{0}
\renewcommand{\thetheorem}{\arabic{theorem}}
\renewcommand{\theHtheorem}{A\arabic{theorem}}
\setcounter{lemma}{0}
\renewcommand{\thelemma}{\arabic{lemma}}
\renewcommand{\theHlemma}{A\arabic{lemma}}
\setcounter{proposition}{0}
\renewcommand{\theproposition}{\arabic{proposition}}
\renewcommand{\theHproposition}{A\arabic{proposition}}
\setcounter{assumption}{0}
\renewcommand{\theassumption}{\arabic{assumption}}
\renewcommand{\theHassumption}{A\arabic{assumption}}
}

\beginsupplement

{\large Table of Contents:}

\DoToC 

\section{Drug Discovery Pipeline and Method Positioning}
\label{app:pipeline}

Figure~\ref{fig:pipeline} places \ours{} within the broader drug-discovery workflow. The key point, consistent with the main text, is that our method is designed for the \emph{high-throughput virtual-screening stage}: the part of the pipeline that must rapidly down-select a very large candidate library before more expensive docking and experimental follow-up. In other words, \ours{} is not a pose-prediction or lead-optimization method; it is a retrieval-style early filter whose job is to improve the quality of the top-ranked compounds passed to later stages.

\paragraph{Stage descriptions.} The pipeline in Figure~\ref{fig:pipeline} is best read as a funnel in which computational cost increases while the candidate set shrinks. It begins with a \emph{virtual compound library} of roughly $10^9$--$10^{10}$ commercially available or synthetically accessible molecules, such as those represented by ZINC or Enamine REAL. Basic drug-likeness and target-specific filters then reduce this space to a \emph{filtered library} of roughly $10^5$--$10^6$ candidates. The next stage is \emph{virtual screening}, where methods rank compounds by predicted binding relevance and reduce the candidate set to something tractable for follow-up; this is the stage where \ours{} operates. More expensive \emph{docking or rescoring} is then applied to a much smaller set, typically $10^2$--$10^3$ compounds, using physics-based or structure-aware scoring functions to reason in greater detail about poses and local interactions. Only after this reduction do compounds proceed to \emph{experimental validation}, usually tens to hundreds of biochemical or cell-based assays, followed by \emph{hit-to-lead} optimization, and eventually the even narrower stages of \emph{clinical testing} and \emph{approval}. The practical role of \ours{} is therefore upstream: it improves which candidates are allowed to reach these far more expensive downstream stages.

\begin{figure}[t!]
\centering
\begin{tikzpicture}[
    stage/.style={trapezium, trapezium angle=75, draw=black!70, minimum height=0.7cm, text width=4.5cm, align=center, font=\small},
    arrow/.style={-{Stealth[length=2.5mm]}, thick, gray!60},
    label/.style={font=\scriptsize, text=gray!70},
    ourstage/.style={stage, fill=green!15, draw=green!60!black, thick},
    normalstage/.style={stage, fill=blue!8, draw=blue!40},
]
\node[normalstage, text width=6cm, minimum height=0.8cm] (s0) at (0, 0)
    {\textbf{Virtual Compound Library}\\(ZINC, Enamine REAL)};
\node[label, right=0.3cm of s0] {$10^9$--$10^{10}$};
\draw[arrow] (0, -0.6) -- (0, -1.0);
\node[normalstage, text width=5.2cm, minimum height=0.8cm] (s1) at (0, -1.5)
    {\textbf{Filtered Library}\\(Drug-likeness filters)};
\node[label, right=0.3cm of s1] {$10^5$--$10^6$};
\draw[arrow] (0, -2.1) -- (0, -2.5);
\node[ourstage, text width=4.5cm, minimum height=1.0cm] (s2) at (0, -3.2)
    {\textbf{Virtual Screening}\\(Contrastive Learning)\\{\scriptsize \textcolor{green!50!black}{\ours{} operates here}}};
\node[label, right=0.3cm of s2] {$10^2$--$10^4$};
\draw[green!60!black, thick, dashed] (-3.2, -2.5) rectangle (3.2, -4.0);
\node[font=\scriptsize, green!50!black, anchor=north west] at (-3.1, -2.55) {\textbf{This Work}};
\draw[arrow] (0, -3.9) -- (0, -4.3);
\node[normalstage, text width=3.8cm, minimum height=0.7cm] (s3) at (0, -4.8)
    {\textbf{Docking / Rescoring}\\(Glide, Vina, ML scoring)};
\node[label, right=0.3cm of s3] {$10^2$--$10^3$};
\draw[arrow] (0, -5.4) -- (0, -5.8);
\node[normalstage, text width=3.2cm, minimum height=0.7cm] (s4) at (0, -6.3)
    {\textbf{Experimental Validation}\\(Biochemical assays)};
\node[label, right=0.3cm of s4] {$10^1$--$10^2$};
\draw[arrow] (0, -6.9) -- (0, -7.3);
\node[normalstage, text width=2.6cm, minimum height=0.7cm] (s5) at (0, -7.8)
    {\textbf{Hit-to-Lead}\\(Lead optimization)};
\node[label, right=0.3cm of s5] {$10^0$--$10^1$};
\draw[arrow] (0, -8.4) -- (0, -8.8);
\node[normalstage, text width=2.0cm, minimum height=0.6cm] (s6) at (0, -9.2)
    {\textbf{Clinical Trials}};
\node[label, right=0.3cm of s6] {$<10$};
\draw[arrow] (0, -9.7) -- (0, -10.1);
\node[normalstage, text width=1.5cm, minimum height=0.5cm, fill=orange!15, draw=orange!50] (s7) at (0, -10.5)
    {\textbf{Approval}};
\node[label, right=0.3cm of s7] {$\sim 1$};
\end{tikzpicture}
\caption{Drug discovery pipeline funnel. Each stage progressively filters compounds, with numbers indicating typical compound counts. Our \ours{} method (green box) operates at the virtual screening stage, improving the quality of top-ranked candidates before expensive downstream validation. By identifying invariant binding determinants and suppressing spurious correlations, our method aims to increase the hit rate in subsequent experimental testing.}
\label{fig:pipeline}
\end{figure}

\paragraph{Why virtual screening matters.} The virtual-screening stage is where the paper's evaluation metrics become practically meaningful. This stage is responsible for the largest candidate reduction before wet-lab testing, so the quality of the \emph{very top} of the ranking matters far more than average prediction quality over a fixed benchmark. This is exactly why the main text emphasizes early-enrichment metrics such as BEDROC$_{80.5}$ and EF@1\%: even a modest improvement there can substantially reduce the number of false positives advanced to docking or assay validation. Our focus on sparse binding determinants is intended to improve this high-value early-filtering regime.

\paragraph{Complementary to docking.} This positioning also clarifies why the paper compares docking, structure-aware scoring, and retrieval as distinct regimes rather than as direct substitutes. Docking remains valuable because it reasons explicitly about poses and local geometry, but it is too expensive to serve as the first-pass search over very large libraries. Retrieval-style methods occupy a different operating point: they trade mechanistic detail for scale and are therefore best used upstream of docking. \ours{} fits this role. Its contribution is to improve the representation quality of that pre-filtering stage so that the compounds passed downstream are more likely to contain true binders.

\section{Extended Related Work}
\label{app:related}

\paragraph{From docking to scalable screening.} Classical virtual screening is dominated by docking pipelines such as Glide, AutoDock Vina, and Surflex, which explicitly search binding poses and rank compounds using physics-inspired scoring functions~\citep{glide2004,vina2010,surflex2012}. Their strengths are interpretability and direct geometric reasoning about binding modes, but their cost grows quickly with library size and target flexibility. In practical workflows this often makes docking a \emph{downstream} stage, applied after an initial coarse filter rather than as the first-pass search over millions of compounds. This is exactly the regime targeted in the main text: large-scale pre-filtering, where retrieval speed and early enrichment matter more than full pose resolution.

\paragraph{Learned scoring functions and pose-aware models.} A large literature replaces hand-engineered docking scores with learned interaction models over protein-molecule complexes. Gnina introduced deep learning into docking-time scoring with 3D CNNs~\citep{gnina2021}; RTMScore, GenScore, EquiScore, PIGNet, and PLANET further strengthened this line with graph, transformer, equivariant, and physics-informed architectures~\citep{rtmscore2023,genscore2023,equiscore2023,pignet2022,planet2024}. Related structure-based models such as TankBind and BigBind learn from geometric complexes or large-scale structural supervision to improve affinity prediction and screening robustness~\citep{tankbind2022,bigbind2022}. These methods are highly relevant baselines because they capture detailed intermolecular geometry, but they usually assume a candidate complex, pose, or structure-conditioned interaction graph at inference time. By contrast, our focus is on \emph{retrieval-style} virtual screening where compound scoring must remain fast enough for large libraries.

\paragraph{Affinity prediction beyond screening.} Drug--target affinity (DTA) models such as DeepDTA and GraphDTA predict binding strength from sequence or graph representations without necessarily producing an efficient retrieval system~\citep{deepdta2018,graphdta2021}. This line clarified how learned molecular and protein embeddings can support affinity estimation, but it is typically evaluated on regression-style benchmarks rather than target-wise early enrichment. The distinction matters for our paper because the objective is not simply to estimate affinity well on average; it is to identify which compounds should be placed at the very top of the screening list.

\paragraph{Retrieval-based contrastive learning for virtual screening.} The recent shift most relevant to our work is the move from explicit complex scoring to \emph{shared embedding spaces}. DrugCLIP~\citep{drugclip2023} framed protein-molecule virtual screening as dense retrieval, using Uni-Mol-based encoders and CLIP-style contrastive learning~\citep{clip2021,zhou2023uni}. This design substantially improved scalability and helped establish retrieval-based virtual screening as a viable alternative to docking, a direction later amplified by a genome-scale \emph{Science} study built on DrugCLIP-style contrastive screening~\citep{jia2026deep}. Subsequent methods enriched this paradigm in different ways: DrugHash compresses embeddings into binary codes for memory-efficient screening~\citep{drughash2025}; LigUnity unifies virtual screening with hit-to-lead affinity ranking through scaffold discrimination and pharmacophore ranking in a shared pocket--ligand space~\citep{ligunity2024}; HypSeek argues that hyperbolic geometry better captures hierarchical and fine-grained affinity variation than Euclidean retrieval alone~\citep{hypseek2025}; \rev{and the concurrent S$^2$Drug bridges protein sequence and 3D structure through ChEMBL sequence pretraining and PDBBind fine-tuning with residue-level sequence--structure fusion and an auxiliary binding-site task~\citep{s2drug2026}}. These papers collectively show that the field has moved from asking whether retrieval can work at all to asking what kind of structural bias should govern the shared space. The present paper answers that question by shifting the focus from dense holistic alignment to sparse or low-rank cross-modal interaction.

\paragraph{Benchmark bias and generalization.} A recurring concern in virtual screening is that benchmark artifacts can inflate apparent performance. Hidden biases in DUD-E and related datasets have long been known to create shortcuts based on simple physicochemical cues rather than genuine binding recognition~\citep{chen2019hidden}. More recent analyses argue that debiasing is essential for robust binding-affinity generalization across targets and chemical series~\citep{graber2025resolving}. This motivates the paper's emphasis on LIT-PCBA and on early-enrichment gains under more realistic decoys: the goal is not merely to improve in-distribution ranking, but to learn representations that remain useful when easy shortcuts disappear.

\paragraph{Why three regimes? Core computational pattern.} Although all three regimes ultimately rank protein-molecule pairs, their \emph{inference-time computational pattern} differs sharply, and this is what determines their practical scaling behaviour and where their inductive bias has to live. The defining feature of retrieval-based methods is that protein and molecule embeddings are computed \emph{independently} (a dual-tower architecture); the binding score then reduces to a similarity, typically a dot product, between two pre-computable vectors, exactly as in dense passage retrieval in NLP~\citep{infonce2018,clip2021} or image--text retrieval. This pattern is what unlocks very large-scale screening, but it also constrains where cross-modal interaction can be expressed: the interaction has to be encoded into the representation or the similarity function, rather than being computed by a joint pair-input network at inference time. This is precisely where \ours{} intervenes. Instead of leaving the retrieval score as a dense holistic similarity, we impose sparse concept-level interaction inside the retrieval regime itself. Table~\ref{tab:regime_compare} makes the contrast explicit; Table~\ref{tab:related_compare} then drills down to representative methods.

\begin{table}[h]
\caption{Core differences between the three virtual-screening regimes. The retrieval regime decouples encoders into two towers, enabling pre-computed library embeddings and millisecond-scale similarity scoring; the other two regimes condition on the (protein, ligand) pair at inference. \ours{}$_{\mathrm{SP}}$ stays in the retrieval regime but replaces the dense holistic similarity with a sparse bipartite concept-pair interaction.}
\label{tab:regime_compare}
\centering\small\setlength{\tabcolsep}{4pt}
\renewcommand{\arraystretch}{1.20}
\begin{tabular}{@{}>{\raggedright\arraybackslash}p{0.15\textwidth}|>{\raggedright\arraybackslash}p{0.25\textwidth}|>{\raggedright\arraybackslash}p{0.25\textwidth}|>{\raggedright\arraybackslash}p{0.25\textwidth}@{}}
\toprule
 & \textbf{Physics-based docking} & \textbf{Structure-aware ML scoring} & \textbf{Retrieval-based learning} \\
\midrule
Inference input & (protein, ligand) pair + pose search & (protein, ligand) pair, often pose-conditioned & protein and ligand encoded \emph{independently} \\
Binding score & physics + empirical scoring on the pair & one forward pass through a joint network on the pair & dot product (or cosine) of two precomputed vectors \\
Pre-computable embeddings? & No (pose-coupled) & No (pair-coupled) & \textbf{Yes} (dual-tower) \\
Cost per (P, L) at inference & seconds--minutes (pose search dominates) & $\sim$10--100\,ms (one full forward) & sub-millisecond (one dot product) \\
Practical library scale & $10^4$--$10^5$ & $10^5$--$10^6$ & $\mathbf{10^8\text{--}10^9}$ \\
Where the inductive bias lives & force field + sampling protocol & complex graph / 3D-voxel architecture & shape of the shared embedding space and the similarity used on it \\
Typical output & docked pose + affinity score & affinity score (sometimes a pose) & ranked retrieval list \\
\midrule
Where \ours{}$_{\mathrm{SP}}$ sits & N/A & N/A & retrieval regime, but with a \emph{sparse bipartite} similarity over concept pairs instead of a dense one \\
\bottomrule
\end{tabular}
\end{table}

\paragraph{Cross-regime comparison.} Building on the regime-level distinctions of Table~\ref{tab:regime_compare}, Table~\ref{tab:related_compare} drills down to representative methods within each regime and highlights the design choice that distinguishes \ours{} from prior retrieval baselines: not a new backbone family, but a different structural prior on cross-modal interaction.

\begin{table}[h]
\caption{Representative methods in the three virtual-screening regimes discussed in the main text. The table highlights the main design choice of each method and the role it plays in the comparison, with \ours{}$_{\mathrm{SP}}$ differing from prior retrieval baselines by imposing a sparse bipartite interaction prior rather than a dense holistic similarity.}
\label{tab:related_compare}
\centering\small\setlength{\tabcolsep}{4pt}
\renewcommand{\arraystretch}{1.15}
\begin{tabular}{@{}>{\raggedright\arraybackslash}p{0.20\textwidth}|>{\raggedright\arraybackslash}p{0.36\textwidth}|>{\raggedright\arraybackslash}p{0.36\textwidth}@{}}
\toprule
\textbf{Method} & \textbf{Core Design} & \textbf{Relevance Here} \\
\midrule
\multicolumn{3}{@{}l}{\emph{Physics-based docking systems}} \\
\midrule
Glide-SP~\citep{glide2004} & Hierarchical sampling with empirical scoring function & Industry standard; accurate on small candidate sets but expensive at library scale \\
AutoDock Vina~\citep{vina2010} & Empirical scoring + iterated local search & Open-source, widely used; not designed for ultra-large libraries \\
\midrule
\multicolumn{3}{@{}l}{\emph{Structure-aware machine-learning scoring models}} \\
\midrule
DeepDTA~\citep{deepdta2018} & Sequence/graph-based affinity prediction without pairwise retrieval design & Useful DTA baseline; not optimized for target-wise early enrichment \\
Gnina~\citep{gnina2021} & 3D CNN scoring on docking-grid voxels & Strong pose-rescoring; depends on docked poses \\
RTMScore~\citep{rtmscore2023} & Residue-level GNN re-scoring of poses & Good DUD-E AUROC; needs upstream pose enumeration \\
GenScore~\citep{genscore2023} & Generalist pose generation + rescoring net & Unifies pose and score with one network \\
EquiScore~\citep{equiscore2023} & SE(3)-equivariant GNN on the binding complex & Geometric inductive bias on the complex graph \\
PIGNet~\citep{pignet2022} & Physics-informed GNN combining empirical + learned terms & Hybrid physics/ML score; strong DUD-E early enrichment \\
TankBind~\citep{tankbind2022} & Trigonometry-aware GNN, end-to-end pose+affinity & Joint binding-site prediction and scoring \\
BigBind~\citep{bigbind2022} & GNN trained on large-scale curated binding corpus & Stresses the role of training-data scale and curation \\
\midrule
\multicolumn{3}{@{}l}{\emph{Retrieval-based representation learning}} \\
\midrule
DrugCLIP~\citep{drugclip2023} & CLIP-style contrastive on Uni-Mol encoders & Scalable; adopted at genome scale~\citep{jia2026deep} \\
DrugHash~\citep{drughash2025} & Binary-hash retrieval extension of DrugCLIP & Memory-efficient at billion-compound scale; weaker PCBA AUROC \\
LigUnity~\citep{ligunity2024} & Unified screening + hit-to-lead ranking loss & Strong BEDROC/EF on both DUD-E and LIT-PCBA \\
HypSeek~\citep{hypseek2025} & Hyperbolic geometry + cone hierarchy + ESM-2 sequence branch & Prior SOTA on DUD-E and LIT-PCBA; geometry-driven inductive bias \\
\rowcolor{green!7}\textbf{\ours{}$_{\mathrm{SP}}$ (ours)} & \textbf{Sparse bipartite concept-pair interaction over Perceiver concepts} & \textbf{Theoretically grounded under Heckman selection; best LIT-PCBA early enrichment} \\
\bottomrule
\end{tabular}
\end{table}

\paragraph{Detailed positioning.} The methods in Table~\ref{tab:main_results} can be read as a progression in where the binding inductive bias is placed. Glide-SP is the classic physics-based reference point: it reasons explicitly about poses and local contacts, but at a computational cost that makes it unsuitable for first-pass screening over very large libraries. DeepDTA occupies a different corner of the design space: it is important historically because it showed that learned molecular and protein representations can support affinity prediction, but it is not a retrieval-oriented screening model and therefore is not optimized for the target-wise early-enrichment setting emphasized in this paper.

Among structure-aware learning-based methods, Gnina, RTMScore, GenScore, EquiScore, TankBind, and BigBind all retain a joint pair-input view of the protein-molecule problem. Gnina and RTMScore are closely tied to docked-pose scoring, inheriting much of the pose dependence of classical docking. GenScore and EquiScore strengthen this line with more expressive learned complex representations, while TankBind pushes further toward end-to-end geometric reasoning over binding sites and complexes. BigBind is especially notable for highlighting the importance of large-scale curated structural supervision. These methods are relevant because they capture explicit intermolecular geometry, but their inference-time cost and pair-conditioned architecture make them fundamentally different from the large-scale retrieval regime studied in our main experiments.

The retrieval baselines are the closest comparators. DrugCLIP established the modern dense-retrieval formulation for virtual screening and demonstrated that CLIP-style contrastive learning with Uni-Mol encoders can scale to very large libraries. DrugHash preserves that same dense-alignment philosophy while prioritizing storage and retrieval efficiency through binary hashing, at the cost of some predictive fidelity. LigUnity extends dense shared-space retrieval by coupling virtual screening with hit-to-lead ranking signals, which helps its early-enrichment behavior but does not change the fact that the interaction remains holistic in the shared space. HypSeek is the strongest baseline in this family: it enriches retrieval with hyperbolic geometry, cone-hierarchy regularization, and a protein-sequence branch, yielding a powerful geometric inductive bias but still no explicit structural prior on which cross-modal interactions should matter.

This is the point at which \ours{} differs. We stay in the retrieval regime and keep the scalability advantages of independently encoded molecule and protein representations, but we change the structural bottleneck through which binding information must pass. Compared with docking or pose-aware scoring, \ours{} does not attempt to replace explicit structural modeling; instead it improves the \emph{front-end screening} stage that decides which compounds deserve expensive downstream analysis. Compared with DrugCLIP, DrugHash, LigUnity, and HypSeek, we leave the high-level retrieval pipeline intact but modify the representation bottleneck: a shared \textit{ConceptExtractor} decomposes each modality into atomic concepts, and a sparse cross-modal mask highlights which concept pairs are causally relevant for binding. Semantically, prior retrieval methods ask what inductive bias should structure the embedding \emph{space} and answer with metric geometry, compression, or training-time objectives. \ours{} instead asks what inductive bias should structure the \emph{interaction between} molecule and protein representations, and answers with a sparse or low-rank bipartite kernel grounded in the V-structure causal model of Section~\ref{sec:theory}. In that sense, \ours{} is best understood not as a new backbone family, but as an orthogonal structural prior inside retrieval-based virtual screening itself.

\paragraph{Causal representation learning.} The conceptual basis for our method comes from causal representation learning~\citep{scholkopf2021causal,hyvarinen2019nonlinear}, especially work showing that multimodal observations and sparse graphical structure can improve identifiability of latent variables~\citep{daunhawer2023identifiability,kivva2021learning}. ConceptAligner~\citep{conceptaligner2025} operationalized this intuition in a multimodal generative setting, while \citet{suncausal} extended related ideas to biomedical multimodal data with identifiability guarantees for the latent causal variables. We adapt this literature to protein-molecule modeling under a different causal graph: unlike text-to-image generation, neither modality generates the other; instead molecule and protein concepts jointly determine a binding event through a selected collider. This shift from a directional chain to a V-structure is what motivates the sparse bipartite mask at the center of \ours{}.

\paragraph{\rev{Causal discovery and selection bias.}} \rev{Complementary to causal representation learning, causal discovery recovers causal graphs among observed variables, with recent advances that handle deterministic relations~\citep{li2024causal}, decentralized heterogeneous data~\citep{li2024federated}, and scalable DAG structure learning~\citep{ziu2024psi}. A related line of work studies how selection distorts the observed distribution: causal graphs with latent confounders and post-treatment selection can be characterized and learned from interventional data~\citep{luo2026characterization}, gene regulatory networks can be inferred in the presence of selection bias~\citep{luo2026gene}, and selective mechanisms have also been exploited in text generation~\citep{tang2025reflection}. Our setting shares this concern with selection: training data contain only observed binding pairs, and \ours{} explicitly models this binder-only selection as conditioning on a collider when deriving which interaction structures remain identifiable.}

\section{Biological Observations of Sparse Binding}
\label{app:biological}

\paragraph{\rev{Hot spots and sparse binding energy.}} \rev{A direct biological motivation for our sparse-interaction view comes from the classical hot-spot literature on molecular recognition. In the human growth hormone--receptor system, \citet{clackson1995hot} combined the crystal structure of the complex with alanine-scanning mutagenesis and found that many side chains participate in the physical interface, but the energetic contribution is highly concentrated: a central hydrophobic region on the receptor side, dominated by two tryptophan residues, accounts for more than three quarters of the binding free energy. This result separates the \emph{structural epitope}, namely the broad set of contacts visible in the complex, from the \emph{functional epitope}, namely the much smaller subset of contacts that actually maintains affinity. \citet{bogan1998anatomy} then showed that this concentration of binding energy is a general property of protein interfaces rather than an isolated example: hot spots are formed by a small subset of residues, are enriched in tryptophan, tyrosine, and arginine, and are often surrounded by less energetic residues that shield the hot spot from bulk solvent. These findings support the central modeling assumption of \ours{}: the binding signal should not be represented as a uniformly dense agreement between whole protein and molecule embeddings. Instead, it is more natural to model binding through a limited number of cross-modal compatibilities, while treating much of the remaining surface or molecular context as supportive, redundant, or potentially nuisance-correlated.}

\paragraph{Pharmacophores as invariant binding determinants.} Even though induced fit allows conformational adjustment, medicinal chemistry has long emphasized that certain feature types remain consistently binding-relevant across conformational states. These are the \emph{pharmacophoric} features~\citep{wermuth2006pharmacophore}: hydrogen-bond donors and acceptors, hydrophobic centers, aromatic groups, charged moieties, and related steric/electronic motifs that repeatedly determine whether a ligand can engage a target pocket productively. The exact geometry of a bound complex may shift, but these feature types remain the stable determinants that make those local interactions possible. In the language of this paper, they are the biologically plausible candidates for the ``atomic concepts'' that should survive across proteins, ligands, and conformational variants.

\paragraph{Why static structures are still informative.} A natural concern is that if induced fit is dynamic, then learning from static structures might miss the relevant mechanism. The key point is that our goal is not to recover every conformational degree of freedom; it is to recover the \emph{interaction-relevant determinants} that persist across such variability. A hydroxyl group remains a hydrogen-bond donor; an aromatic ring remains capable of $\pi$-stacking; a hydrophobic patch remains nonpolar even if side chains rotate slightly around it. Benchmarks such as DUD-E and LIT-PCBA are built from static structures, yet successful virtual-screening methods already rely on the fact that these chemically meaningful determinants are more stable than the incidental conformational details. Our method does not remove this assumption; it makes it more explicit by organizing the representation around sparse cross-modal interactions rather than around dense global similarity.

\paragraph{From biology to the modeling choice in \ours{}.} This biological picture motivates the two central methodological choices in the paper. First, if binding depends on a small set of recurring local determinants, then each modality should be decomposed into smaller concept-like units rather than treated as a single holistic vector. Second, if only a subset of those units actually participates in binding for a given target family, then the cross-modal interaction should itself be sparse or low-rank rather than dense. This is exactly what the \textit{ConceptExtractor} and \textit{SparseMask} are designed to implement. The model is therefore not claiming that every learned concept is a textbook pharmacophore; rather, it is imposing an inductive bias that is much closer to the biological mechanism than dense shared-space alignment.

\paragraph{Relevance to generalization and drug discovery.} This interpretation also clarifies why the largest gains tend to appear on the more bias-controlled LIT-PCBA benchmark. If a method relies on shortcut correlations, it may rank compounds well on easier datasets without truly isolating binding determinants. By contrast, a representation organized around invariant local compatibilities should transfer better to novel targets and novel scaffolds, because those compatibilities are closer to the actual mechanism medicinal chemists reason about during lead optimization. In this sense, the biological motivation of \ours{} is not only explanatory but also practical: it supports the paper's broader claim that sparse concept-level interaction is a more plausible route to robust virtual screening than dense holistic alignment.

\section{Theoretical Framework: Selection and Identifiability}
\label{app:theory}

\rev{This appendix provides the complete theoretical framework for \ours{}. We cast protein-molecule binding prediction as a \emph{Heckman-style selection problem} on a V-structure causal graph: the training set consists only of binding pairs ($B=1$), and our goal is to identify the latent concepts $\mathbf{c}^M, \mathbf{c}^P$ and the bipartite interaction mask $\mathcal{M}$ from this selection-biased observation. Following the didactic organization of SEDGE~\citep{sedge2026} and ConceptAligner~\citep{conceptaligner2025}, we proceed from a careful SCM formulation (\ref{app:theory:scm}) through a worked $K_M=K_P=2$ toy case (\ref{app:theory:toy}) to the general non-identifiability and identifiability results (\ref{app:theory:non-id}--\ref{app:theory:main}), a low-rank subspace-identifiability result (\ref{app:theory:lowrank}), and an explicit comparison to SEDGE (\ref{app:theory:sedge}). Each theorem subsection is organized into a proof sketch, an intuitive example, and a full proof, and a later discussion in the experimental appendix (\ref{app:theory:gap}) returns to how these identifiability results should be interpreted in light of the learned masks.}

Our framework borrows the Heckman-selection machinery recently formalized for vision--language concept generation by \citet{conceptaligner2025} and for compositional extrapolation by SEDGE, but differs in four crucial ways: (i) we have a \emph{collider} causal structure (V-structure) rather than a chain; (ii) we have two \emph{symmetric} observation sources (molecule and protein) rather than one feature vector; (iii) our binding kernel is bilinear-sparse rather than a parametric MLP head; and (iv) we replace the SEDGE-style ``conservative'' assumption ($P(S{=}0) \!\approx\! 0$) with a \emph{low-rank binding kernel} assumption that is better suited to drug discovery, where most protein-molecule pairs do not bind.

\subsection{Structural Causal Model}
\label{app:theory:scm}

\paragraph{Variables and generative process.} Let $m \in \mathcal{M}$ and $p \in \mathcal{P}$ denote an observed molecule (3D structure) and protein (3D structure). We posit that each modality is generated from latent \emph{concepts}:
\begin{equation}
{\color{black}\mathbf{c}^M = [c^M_1, \ldots, c^M_{K_M}] \in \mathbb{R}^{K_M}, \qquad \mathbf{c}^P = [c^P_1, \ldots, c^P_{K_P}] \in \mathbb{R}^{K_P},}
\label{eq:concepts}
\end{equation}
\rev{through observation maps with modality-specific nuisance noise}
\begin{equation}
{\color{black}m = g^M(\mathbf{c}^M,\varepsilon^M), \qquad p = g^P(\mathbf{c}^P,\varepsilon^P),}
\label{eq:generators}
\end{equation}
\rev{where $\varepsilon^M$ and $\varepsilon^P$ capture observation noise, conformational variability, or other modality-specific nuisance factors that are independent of the cross-modal binding event given the concepts. We do not require $g^M$ and $g^P$ to be globally invertible on the full observation spaces. The recoverability assumption below only requires that the interaction-relevant concepts can be expressed as functions of their corresponding observations on the data support.}
Biologically, each $c^M_i$ (resp.\ $c^P_j$) captures an \emph{atomic} chemical/structural feature of the molecule (protein), analogous to a pharmacophoric element (hydrogen-bond donor, hydrophobic contact, aromatic ring, etc.).

\paragraph{Binding as a collider.} The binding indicator $B \in \{0, 1\}$ is a \emph{collider}: it is determined jointly by the two concept vectors through a sparse bipartite interaction,
\begin{equation}
\label{eq:binding}
{\color{black}B \sim \mathrm{Bernoulli}\!\left( \sigma\!\left( \sum_{i=1}^{K_M} \sum_{j=1}^{K_P} \mathcal{M}_{ij}\,\phi(c^M_i, c^P_j) \right) \right),}
\end{equation}
\rev{where $\mathcal{M} \in \mathbb R_{\ge0}^{K_M \times K_P}$ is the sparse \emph{bipartite binding mask}; binary masks and $[0,1]$ relaxations are special normalizations, while scale is shared with $\phi$. Here $K_M$ and $K_P$ count concepts, and each atomic concept is scalar in this theoretical model, i.e., $c_i^M,c_j^P\in\mathbb{R}$. Thus $\phi: \mathbb{R} \times \mathbb{R} \to \mathbb{R}$ maps one molecule concept and one protein concept to a scalar logit-level compatibility score; $\sigma$ is the logistic.} The graph is
\begin{equation}
\label{eq:v-structure}
{\color{black}\varepsilon^M \rightarrow m \leftarrow \mathbf{c}^M \rightarrow B \leftarrow \mathbf{c}^P \rightarrow p \leftarrow \varepsilon^P,}
\end{equation}
exhibiting a V-structure with $B$ as the collider (Figure~\ref{fig:causal_graph} in the main text).

\paragraph{Bayesian network and conditional-independence structure.} We treat $(\mathbf{c}^M, \mathbf{c}^P, \varepsilon^M,\varepsilon^P, B, m, p)$ as a Bayesian network~\citep{pearl1985bayesian,koller09probabilistic} with respect to the DAG in Eq.~\eqref{eq:v-structure}. The local Markov property yields the following conditional independence (CI) relations, which we use repeatedly in proofs:
\begin{align}
\mathbf{c}^M \perp\!\!\!\perp \mathbf{c}^P \quad &\text{(marginal independence from V-structure)}, \label{eq:ci-marginal}\\
{\color{black}m \perp\!\!\!\perp \{\mathbf{c}^P, \varepsilon^P, B, p\} \mid (\mathbf{c}^M,\varepsilon^M)} \quad &\text{\rev{(molecule view depends only on molecule concepts and noise)}}, \label{eq:ci-m}\\
{\color{black}p \perp\!\!\!\perp \{\mathbf{c}^M, \varepsilon^M, B, m\} \mid (\mathbf{c}^P,\varepsilon^P)} \quad &\text{\rev{(protein view depends only on protein concepts and noise)}}, \label{eq:ci-p}\\
{\color{black}B \perp\!\!\!\perp \{m, p, \varepsilon^M,\varepsilon^P\} \mid (\mathbf{c}^M, \mathbf{c}^P)} \quad &\text{\rev{(binding mediated entirely by concepts)}}. \label{eq:ci-b}
\end{align}

\paragraph{Heckman selection.} The training corpus $\mathcal{D}$ contains only positive binding pairs, i.e., samples drawn from
\begin{equation}
\label{eq:selection}
p^{\mathcal{D}}(m, p, \mathbf{c}^M, \mathbf{c}^P) \;=\; p\!\left(m, p, \mathbf{c}^M, \mathbf{c}^P \mid B=1\right).
\end{equation}
\rev{Equivalently, the full selected joint before marginalizing nuisance variables is}
\begin{equation}
\label{eq:selection-with-noise}
{\color{black}p^{\mathcal{D}}(m, p, \mathbf{c}^M, \mathbf{c}^P,\varepsilon^M,\varepsilon^P)
=p\!\left(m, p, \mathbf{c}^M, \mathbf{c}^P,\varepsilon^M,\varepsilon^P \mid B=1\right).}
\end{equation}
In Heckman-selection terminology~\citep{heckman1979}, $B$ plays the role of the selection variable $S$: we observe $(x^M, x^P)$ if and only if $B=1$. Because $B$ is a collider in Eq.~\eqref{eq:v-structure}, conditioning on $B=1$ induces the classical \emph{explaining-away} dependence between $\mathbf{c}^M$ and $\mathbf{c}^P$ even though they are marginally independent. Explicitly, using Eq.~\eqref{eq:ci-marginal} and Bayes' rule:
\begin{equation}
p^\mathcal{D}(\mathbf{c}^M, \mathbf{c}^P) \;:=\; p(\mathbf{c}^M, \mathbf{c}^P \mid B=1) \;=\; \frac{p(\mathbf{c}^M)\,p(\mathbf{c}^P)\,P(B=1\mid \mathbf{c}^M, \mathbf{c}^P)}{P(B=1)}.
\label{eq:selection-factorization}
\end{equation}
This is the key handle: the observed joint distribution carries information about $(\mathcal{M}, \phi)$ through the explaining-away factor $P(B=1\mid \mathbf{c}^M, \mathbf{c}^P)$.

\paragraph{Notation.} Following SEDGE's conventions~\citep{sedge2026}, uppercase letters denote random variables and lowercase their realizations. We write $p^\mathcal{D}(\cdot) := p(\cdot \mid B=1)$ for distributions implied by the (selection-biased) training data, and reserve $p(\cdot)$ for population distributions (which may involve the unobserved $B=0$ regime). We use $\propto$ for equality up to a positive multiplicative constant and $\mathbf{1}$ for an all-ones vector. \rev{Supports and densities are assumed to exist with respect to common dominating measures (typically Lebesgue in $\mathbb{R}^{K_M}$ and $\mathbb{R}^{K_P}$, or counting in the corresponding binary spaces).}

\subsection{Discussion of Assumptions}
\label{app:theory:assumptions}

\rev{The formal assumptions and theorem-specific conditions are stated in the main text, so we do not repeat them here. Instead, this subsection explains what each assumption or condition contributes to the proofs and how strong it should be read. We keep the main-text notation throughout: Assumptions~\ref{ass:concept-recoverability}--\ref{ass:positivity-coverage} are shared by all theorems, Assumption~\ref{ass:loglinear} is used by the two positive results, Conditions~2.i--2.iii belong to Theorem~\ref{thm:main:ident}, and Conditions~3.i--3.ii belong to Theorem~\ref{thm:main:lowrank}.}

\begin{itemize}[leftmargin=1.6em, itemsep=4pt]
    \item \rev{\textbf{Assumption~\ref{ass:concept-recoverability} (concept recoverability).} This assumption says that the concepts entering the binding mechanism are statistically accessible from the observed molecule or protein. It should not be read as requiring a global inverse for the full observation maps $g^M,g^P$ or for the neural encoders used in implementation. The proofs use it only to pull the binder-selected observation distribution $p^\mathcal D(x^M,x^P)$ back to concept coordinates on the data support.}

    \item \rev{\textbf{Assumption~\ref{ass:marginal-independence} (prior independence).} This is the V-structure assumption: before conditioning on binding, molecule-side and protein-side concepts are independent objects. The observed dependence in $p^\mathcal D$ then comes from conditioning on the collider $B=1$. Without this assumption, cross-modal dependence could already be present before selection, and the proof could not attribute it to the binding mechanism.}

    \item \rev{\textbf{Assumption~\ref{ass:positivity-coverage} (positivity / coverage).} This is the minimal support requirement. If a concept never appears in binder-selected data, it leaves no statistical trace and cannot be identified by any method. The assumption does not require uniform coverage of all concept combinations; it only rules out concepts that are completely absent from the selected support.}

    \item \rev{\textbf{Assumption~\ref{ass:loglinear} (log-linear selection).} This assumption makes the binding score itself, rather than $\log\sigma$ of it, appear in the selected log-density, so that $\log p^\mathcal D$ equals $S$ up to additive functions of a single modality. It matters for the proofs: under the logistic link, the cross-derivative of $\log\sigma(S)$ contains the term $-\sigma(S)(1-\sigma(S))\,\partial_{c_i^M}S\,\partial_{c_j^P}S$, which is dense even when $\mathcal M$ is sparse, so the support of $\mathcal M$ would not be visible in the object the proof differentiates. The assumption is the low-hit-rate form of the logistic model, since $\sigma(s)=e^{s}(1+O(e^{s}))$ as $s\to-\infty$ and most protein-molecule pairs do not bind, and it is also the model under which the contrastive objective of Section~\ref{sec:method} is well specified: the optimal contrastive critic is the log density ratio up to terms depending on one modality~\citep{infonce2018}, which under this assumption is exactly $S$ up to such terms.}

    \item \rev{\textbf{Condition~2.i (sparse antichain interaction).} Sparsity alone is not enough for component-wise identifiability: two concepts can still be confounded if one concept's interaction support is absorbed by another's. The antichain requirement gives each molecule-side and protein-side concept a distinct combinatorial footprint in the bipartite interaction graph. It is the combinatorial input of Lemma~\ref{lem:antichain}, and Example~\ref{ex:lemma}(b) shows that it cannot be dropped.}

    \item \rev{\textbf{Condition~2.ii (non-bilinear compatibility).} This condition makes the cross-derivative of each interaction term vary over the support, so that matching the two selected distributions yields a family of matrix equations spanning all matrices supported on $\mathrm{supp}(\mathcal M)$, rather than the single equation available in the bilinear case. It is a generic property of $\phi$ alone and does not depend on the mask: $\phi(x,y)=\tanh(xy)$ satisfies it, whereas $\phi(x,y)=xy$ and $\phi(x,y)=-(x-y)^2$ do not. Without it, Theorem~\ref{thm:main:nonid} provides counterexamples even for sparse antichain masks, which is why the inner-product compatibility used in our implementation falls under Theorem~\ref{thm:main:lowrank} rather than Theorem~\ref{thm:main:ident}.}

    \item \rev{\textbf{Condition~2.iii (linear ambiguity and sparsity minimality).} The first clause restricts the competing parameterizations to the linear reparameterizations of Theorem~\ref{thm:main:nonid}. Linear identifiability is the usual first stage of identifiability analyses and holds for representations learned with contrastive objectives~\citep{roeder2021linear}; Proposition~\ref{prop:nonlinear} removes it at the price of a stronger, pointwise nonlinearity condition. The second clause requires that the data not be explained equally well by a denser mask. It is the population counterpart of the sparsity penalty on the mask in the training objective, and Example~\ref{ex:lemma}(c) shows that it cannot be dropped.}

    \item \rev{\textbf{Condition~3.i (low-rank interaction kernel).} This is the structural condition for the subspace theorem. It replaces exact concept-pair support recovery with the assumption that the binding score has a minimal rank-$r$ separable factorization. Thus the target of identification changes: we no longer seek a unique entry-wise mask, but the molecule-side and protein-side subspaces spanned by the low-rank interaction functions.}

    \item \rev{\textbf{Condition~3.ii (low-rank sufficient variability).} This is the statistical richness condition for the subspace theorem, not another low-rank constraint. It requires the mode-evaluation matrices to have full rank at finitely many molecule-side and protein-side probes. This guarantees that the low-rank modes specified by Condition~3.i are sufficiently excited on the selected support, so their spans can be recovered from the observed distribution.}
\end{itemize}

\subsection{Toy Case: Two Concepts}
\label{app:theory:toy}

Before the general theorems, we work out the simplest non-trivial case explicitly, following SEDGE's didactic strategy. \rev{Assume $K_M=K_P=2$ with continuous scalar concepts, binding score $S(\mathbf c^M,\mathbf c^P)=\sum_{i,j=1}^{2}\mathcal M_{ij}\phi(c^M_i,c^P_j)$, and log-linear selection (Assumption~\ref{ass:loglinear}), so that two parameterizations induce the same $p^\mathcal D$ exactly when their binding scores agree up to additive functions of a single modality. Write $\mathbf c^M=(c_1,c_2)$ and $\mathbf c^P=(c_1',c_2')$.}

\paragraph{\rev{Case (a): Diagonal mask with bilinear compatibility (not identifiable).}} \rev{Let $\mathcal M=I_2$ and $\phi(x,y)=xy$, so $S=c_1c_1'+c_2c_2'$. The row supports are $\{1\}$ and $\{2\}$, an antichain, and the mask is as sparse as it can be while leaving every concept active. Nevertheless, with}
\[
{\color{black}
T_M=\begin{bmatrix}1&1\\0&1\end{bmatrix},\qquad T_P=T_M^{-\top}=\begin{bmatrix}1&0\\-1&1\end{bmatrix},
}
\]
\rev{the concepts $\tilde{\mathbf c}^M=T_M\mathbf c^M=(c_1+c_2,\,c_2)$ and $\tilde{\mathbf c}^P=T_P\mathbf c^P=(c_1',\,c_2'-c_1')$ satisfy $\tilde{\mathbf c}^{M\top}I_2\,\tilde{\mathbf c}^P=\mathbf c^{M\top}T_M^\top T_M^{-\top}\mathbf c^P=S$. The same mask $I_2$ therefore explains the data with mixed concepts: sparsity and the antichain property alone do not identify the concepts. This is the smallest instance of the second part of Theorem~\ref{lem:non-id}.}

\paragraph{\rev{Case (b): Diagonal mask with non-bilinear compatibility (identifiable).}} \rev{Keep $\mathcal M=I_2$ but let $\phi(x,y)=\tanh(xy)$, which satisfies Condition~2.ii. The cross-derivative of channel $i$, namely $\partial_1\partial_2\phi(c_i,c_i')$, now varies with $(c_i,c_i')$, so the two channels generate a family of diagonal cross-derivative matrices spanning all diagonal matrices. The mixing of Case~(a) then fails: the transformed score $\tanh((c_1+c_2)c_1')+\tanh(c_2(c_2'-c_1'))$ has a nonzero cross-derivative in $(c_2,c_1')$, which no score with mask $I_2$ has in the original coordinates. Theorem~\ref{thm:identifiability} shows that every affine mixing is excluded in this way, leaving only permutations and rescalings of the two channels.}

\paragraph{\rev{Case (c): Nested supports.}} \rev{Let $\mathcal M=\big[\begin{smallmatrix}1&0\\1&1\end{smallmatrix}\big]$, so $R_1=\{1\}\subseteq R_2=\{1,2\}$ and Condition~2.i fails. Then $P=\big[\begin{smallmatrix}1&0\\a&1\end{smallmatrix}\big]$ with $a\neq0$ maps every matrix supported on $\mathrm{supp}(\mathcal M)$ to a matrix with the same support, so the combinatorial step of the proof (Lemma~\ref{lem:antichain}) cannot exclude adding a multiple of the first concept's cross-derivatives to the second's.}

\paragraph{Case (d): Dense mask with bilinear compatibility.} Suppose $\mathcal{M}$ is the $2\times 2$ all-ones matrix \rev{and $\phi(x,y)=xy$}. Then $S = (c_1 + c_2)(c_1' + c_2')$, which depends \emph{only} on the sums $s^M := c_1 + c_2$ and $s^P := c_1' + c_2'$. Consequently, any reparameterization that preserves these sums yields the same observed $p^\mathcal{D}$\rev{; for instance $(\tilde c_1,\tilde c_2)=(c_1+ac_2,\,(1-a)c_2)$ with $a\neq1$, keeping the all-ones mask}.

\paragraph{Takeaway.} \rev{The toy case isolates the two ingredients of component-wise identifiability: a compatibility function whose cross-derivative varies, which turns a single matrix equation into a spanning family, and a support pattern in which no concept's footprint is contained in another's, fitted with no more interactions than necessary. Case~(a) shows that the second ingredient without the first is not enough, which is why our bilinear implementation is covered by the subspace result rather than the component-wise one.}

\subsection{\rev{Theorem~\ref{lem:non-id}: Non-Identifiability Without Structural Constraints}}
\label{app:theory:non-id}

\rev{Before establishing positive identifiability, we show that structural constraints on the interaction mask are indispensable: without them, the latent concepts and mask are not identifiable from the training distribution $p^\mathcal{D}(x^M, x^P)$, even under Assumptions~\ref{ass:concept-recoverability}--\ref{ass:positivity-coverage}.}

\begin{tcolorbox}[colback=gray!10, colframe=gray!50, boxrule=0.5pt, left=5pt, right=5pt, top=5pt, bottom=5pt]
\begin{theorem}[\rev{Non-identifiability of bilinear interactions}]
\label{thm:appendix:nonid}
\label{lem:non-id}
\rev{Suppose Assumptions~\ref{ass:concept-recoverability}--\ref{ass:positivity-coverage} hold and $\phi(x,y)=xy$. Then for any training distribution $p^\mathcal{D}(x^M, x^P)$ there exist infinitely many distinct tuples $(g^M, g^P, \mathcal{M}, \phi)$ all inducing the same $p^\mathcal{D}(x^M, x^P)$. Moreover, if $K_M\ge2$ and $\mathcal M\neq0$, some of these tuples satisfy $\widetilde{\mathcal M}=\mathcal M$ while their concepts mix at least two of the original concepts. Hence, in the bilinear subclass the concepts are non-identifiable, and constraining the mask, for example to be sparse with antichain supports, does not remove the ambiguity.}
\end{theorem}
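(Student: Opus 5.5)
The plan is to exhibit an explicit family of reparameterizations that leaves the binding score, and hence the selected joint of the latents, invariant. Then I push these through modified generators so that the observed $p^\mathcal D(x^M,x^P)$ is unchanged. With $\phi(x,y)=xy$ the score is the bilinear form $S(\mathbf c^M,\mathbf c^P)=\mathbf c^{M\top}\mathcal M\,\mathbf c^P$. For any $T_M\in\mathrm{GL}(K_M)$ and $T_P\in\mathrm{GL}(K_P)$, set
\[
\tilde{\mathbf c}^M=T_M\mathbf c^M,\qquad \tilde{\mathbf c}^P=T_P\mathbf c^P,\qquad \widetilde{\mathcal M}=T_M^{-\top}\mathcal M\,T_P^{-1}.
\]
Then $\tilde{\mathbf c}^{M\top}\widetilde{\mathcal M}\,\tilde{\mathbf c}^P=S$ pointwise. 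Let $\tilde g^M(\tilde{\mathbf c}^M,\varepsilon^M):=g^M(T_M^{-1}\tilde{\mathbf c}^M,\varepsilon^M)$, and define $\tilde g^P$ analogously. Give the new latents the pushforward priors $p(\tilde{\mathbf c}^M)=p(T_M^{-1}\tilde{\mathbf c}^M)|\det T_M|^{-1}$, and similarly for $\tilde{\mathbf c}^P$. Keep the nuisance distributions unchanged.

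The first step is to verify that the new tuple satisfies the shared assumptions. Assumption~\ref{ass:concept-recoverability} holds because $\tilde{\mathbf c}^M=T_M\mathbf c^M$ is still a function of $x^M$. Assumption~\ref{ass:marginal-independence} holds because the linear maps act on each modality separately. Assumption~\ref{ass:positivity-coverage} holds because invertible linear images of open sets are open, $P(B{=}1)$ is unchanged, and every new concept is a nondegenerate linear combination of old ones. If the mask is required to be entrywise nonnegative, I restrict $T_M,T_P$ to positive diagonal matrices times permutations, or to the shear-type maps below.

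The second step is to show the observed distribution is unchanged. The joint law of $(x^M,x^P,B)$ is the same under both tuples, because the sigmoid argument is pointwise identical and the generators compose to the same maps of the original latents. Hence $p(x^M,x^P\mid B{=}1)$ coincides with the original via Eq.~\eqref{eq:selection-factorization}. Infinitely many distinct tuples arise already from the scalings $T_M=aI$, $T_P=I$, which give $\widetilde{\mathcal M}=a^{-1}\mathcal M$ for $a>0$ (or $\widetilde{\mathcal M}=\mathcal M$ with rescaled concepts).

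The main obstacle is the second claim: keeping $\widetilde{\mathcal M}=\mathcal M$ exactly, with nonnegative entries, while genuinely mixing two concepts. Following toy Case~(a), I need $T_M^{-\top}\mathcal M T_P^{-1}=\mathcal M$, i.e.\ $\mathcal M T_P^{-1}=T_M^{\top}\mathcal M$, with $T_M$ not a scaled permutation.

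I will first try $T_M=I+a\,e_ke_l^\top$ with $k\neq l$ and $a\neq0$, so that $\tilde c^M_k=c^M_k+a c^M_l$. Then $T_M^{-\top}=I-a\,e_le_k^\top$, and I need $\mathcal M T_P^{-1}=(I+a\,e_le_k^\top)\mathcal M$. This means the new row $l$ must be $\mathcal M_{l\cdot}+a\mathcal M_{k\cdot}$, and this must be realized by a right multiplication on the protein side.

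\begin{itemize}
\item If $\mathcal M$ has full row rank, I solve for $T_P^{-1}$ via a right inverse and check that it is invertible.
\item If not, a cleaner route uses $\ker\mathcal M^\top$. If there is a nonzero $w\in\mathbb R^{K_M}$ with $w^\top\mathcal M=0$ (which happens whenever $K_M>\operatorname{rank}\mathcal M$), take $T_M^{-\top}=I+e_k w^\top$ for some $k$ with $1+w_k\neq0$. Then $T_M^{-\top}\mathcal M=\mathcal M$, so $T_P=I$ works and $T_M$ mixes the coordinates in $\mathrm{supp}(w)$.
\end{itemize}

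In the remaining full-rank case with $K_M\ge2$, the general identity above gives $T_P^{-1}=\mathcal M^{+}(I+a\,e_le_k^\top)\mathcal M$ restricted appropriately. For small $a$ this is invertible by continuity, and it realizes Case~(a) in general.

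Finally I will note that these constructions never alter $\mathrm{supp}(\widetilde{\mathcal M})$. Consequently, no constraint imposed on the mask alone, such as sparsity or antichain supports, can exclude them. That completes the claim.
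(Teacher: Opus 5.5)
Your route matches the paper's. The first claim uses the $\mathrm{GL}(K_M)\times\mathrm{GL}(K_P)$ orbit $\widetilde{\mathcal M}=T_M^{-\top}\mathcal M T_P^{-1}$. The second claim splits on whether $\mathcal M$ has a nonzero left-kernel vector $\mathbf w$ (rank-one update with $T_P=I$) or has full row rank (solve for $T_P^{-1}$ through $\mathcal M^{+}$). Using scalings to get infinitely many tuples is a small improvement over the paper, because a generic element of the orbit can create negative mask entries, which $\mathcal M\in\mathbb R_{\ge0}^{K_M\times K_P}$ forbids. However, two steps of the second claim do not go through as written.

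\textbf{Full-row-rank case (this step fails as written).} The matrix $\mathcal M^{+}(I+a\,e_le_k^\top)\mathcal M$ is $K_P\times K_P$ of rank at most $K_M$, so it is singular whenever $K_P>K_M$. At $a=0$ it equals the projector $\mathcal M^{+}\mathcal M$ onto $\mathrm{range}(\mathcal M^\top)$, not $I$, so the continuity argument has nothing to start from. The missing ingredient is to extend by the identity on $\ker\mathcal M$: set
$T_P^{-1}:=\mathcal M^{+}(I+a\,e_le_k^\top)\mathcal M+(I-\mathcal M^{+}\mathcal M)$.
\begin{itemize}
\item It still satisfies $\mathcal M T_P^{-1}=(I+a\,e_le_k^\top)\mathcal M$, because $\mathcal M\mathcal M^{+}=I$ and $\mathcal M(I-\mathcal M^{+}\mathcal M)=0$.
\item It is invertible for every $a$: it is the identity on $\ker\mathcal M$, and on $\mathrm{range}(\mathcal M^\top)$ it is the composition of the bijections $\mathcal M|_{\mathrm{range}(\mathcal M^\top)}$, $I+a\,e_le_k^\top$ and $\mathcal M^{+}$.
\end{itemize}
This is exactly the paper's matrix $B$ with $A^{-\top}=T_M^\top$.

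\textbf{Left-kernel case (needs a choice of $k$).} Here $T_M^{-\top}=I+e_k\mathbf w^\top$ gives $T_M=I-\frac{1}{1+w_k}\,\mathbf w e_k^\top$. Hence $\tilde c^M_i=c^M_i-\frac{w_i}{1+w_k}\,c^M_k$, which mixes concepts only if $w_i\neq0$ for some $i\neq k$. If $\mathrm{supp}(\mathbf w)=\{k\}$, for example row $k$ of $\mathcal M$ is zero and you pick that $k$, then $T_M$ is diagonal and nothing is mixed. The fix is easy because $K_M\ge2$:
\begin{itemize}
\item If some $w_k=0$, take that $k$; then $1+w_k=1$ and $\mathrm{supp}(\mathbf w)\not\subseteq\{k\}$ automatically.
\item Otherwise take any $k$, after rescaling $\mathbf w$ so that $w_k\neq-1$.
\end{itemize}
The paper avoids this issue by using the unipotent update $I+\varepsilon\,\mathbf u\mathbf w^\top$ with $\mathbf u=w_m e_k-w_k e_m$, $w_k\neq0$, $m\neq k$. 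This has an off-diagonal entry by construction.
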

\end{tcolorbox}

\paragraph{Discussion.} This theorem is the appendix counterpart of Theorem~\ref{thm:main:nonid} in the main text. Without structural restrictions on $\mathcal{M}$, the latent coordinate systems remain unconstrained. \rev{The resulting ambiguity is exactly the large $\mathrm{GL}(K_M)\times \mathrm{GL}(K_P)$ reparameterization orbit discussed in Section~\ref{subsec:nonid}: the same training distribution can be explained by infinitely many equivalent latent coordinate systems.}

\subsubsection{Proof Sketch}
\rev{Fix any valid tuple $(g^M, g^P, \mathcal{M}, \phi)$ consistent with $p^\mathcal{D}$. In the proof only, let $e^M$ and $e^P$ denote auxiliary concept-recovery functions witnessing Assumption~\ref{ass:concept-recoverability}. For any pair of invertible linear maps $T_M \in \mathrm{GL}(K_M)$ and $T_P \in \mathrm{GL}(K_P)$, set $\tilde{\mathbf{c}}^M=T_M\mathbf{c}^M$ and $\tilde{\mathbf{c}}^P=T_P\mathbf{c}^P$, and define}
\[
{\color{black}
\begin{aligned}
\tilde{g}^M(\mathbf{c},\varepsilon^M) &:= g^M(T_M^{-1}\mathbf{c},\varepsilon^M), &
\tilde e^M(x^M)&:=T_M e^M(x^M),\\
\tilde{g}^P(\mathbf{c},\varepsilon^P) &:= g^P(T_P^{-1}\mathbf{c},\varepsilon^P), &
\tilde e^P(x^P)&:=T_P e^P(x^P),\\
\tilde{\mathcal{M}} &:= T_M^{-T} \mathcal{M}\, T_P^{-1}, &
\tilde\phi &:= \phi .
\end{aligned}}
\]
Then the binding score in \eqref{eq:binding} is preserved:
\[
\sum_{i,j} \tilde{\mathcal{M}}_{ij} \phi(\tilde c^M_i, \tilde c^P_j) = \sum_{i,j} \mathcal{M}_{ij} \phi(c^M_i, c^P_j),
\]
\rev{so the selection factor is unchanged. The transformed recovery functions show that Assumption~\ref{ass:concept-recoverability} remains satisfied, and the nuisance variables enter only through the reparameterized observation maps. After integrating those variables out, the reparameterized tuple induces the same training distribution $p^\mathcal{D}(x^M, x^P)$. Since $\mathrm{GL}(K_M) \times \mathrm{GL}(K_P)$ is an infinite group, infinitely many distinct transformations yield the same $p^\mathcal{D}$. Critically, these transformed masks generally violate the structural restrictions required for positive recovery, so the ambiguity is ruled out only after such restrictions are imposed.}

\subsubsection{Intuitive Example}
\rev{The dense case in App.~\ref{app:theory:toy} is the smallest example. With a $2\times2$ all-ones mask and $\phi(x,y)=xy$, the binding score becomes}
\[
{\color{black}
S(\mathbf c^M,\mathbf c^P)
=(c^M_1+c^M_2)(c^P_1+c^P_2).
}
\]
\rev{The selected distribution therefore sees only the molecule-side sum and the protein-side sum, not the two coordinates separately. Different latent coordinates that preserve these sums, or more generally invertible reparameterizations absorbed by a transformed mask, produce the same binder-selected observations. This is why dense holistic interaction leaves the concepts and mask underdetermined.}

\subsubsection{Full Proof of Theorem~\ref{lem:non-id}}
\label{app:theory:proof-nonid}

\begin{proof}
We construct an explicit infinite family of tuples all producing the same $p^\mathcal{D}$.

\rev{Fix any reference tuple $(g^M, g^P, \mathcal{M}, \phi)$ compatible with $p^\mathcal{D}$, and let $e^M,e^P$ be auxiliary functions witnessing Assumption~\ref{ass:concept-recoverability} on the data support. For clarity, consider the bilinear case $\phi(x,y)=xy$; the argument extends to compatibility functions that are locally linear in each argument on the relevant support.}

\rev{Pick any $(T_M, T_P) \in \mathrm{GL}(K_M)\times\mathrm{GL}(K_P)$. Set $\tilde{\mathbf{c}}^M=T_M\mathbf{c}^M$ and $\tilde{\mathbf{c}}^P=T_P\mathbf{c}^P$, and define the reparameterized tuple:}
\begin{align}
\tilde g^M(\tilde{\mathbf{c}}^M,\varepsilon^M) &:= g^M(T_M^{-1} \tilde{\mathbf{c}}^M, \varepsilon^M), &
\tilde g^P(\tilde{\mathbf{c}}^P,\varepsilon^P) &:= g^P(T_P^{-1} \tilde{\mathbf{c}}^P, \varepsilon^P), &
\widetilde{\mathcal{M}} &:= T_M^{-\top}\, \mathcal{M}\, T_P^{-1}.
\end{align}
\rev{The corresponding auxiliary recovery functions are $\tilde e^M(x^M):=T_M e^M(x^M)$ and $\tilde e^P(x^P):=T_P e^P(x^P)$, so the reparameterized tuple also satisfies Assumption~\ref{ass:concept-recoverability}.}
Setting $\tilde{\mathbf{c}}^M = T_M \mathbf{c}^M$ and $\tilde{\mathbf{c}}^P = T_P \mathbf{c}^P$:
\begin{equation*}
\sum_{i,j} \widetilde{\mathcal{M}}_{ij} \tilde c^M_i \tilde c^P_j
= \tilde{\mathbf{c}}^{M\top} \widetilde{\mathcal{M}} \tilde{\mathbf{c}}^P
= (T_M \mathbf{c}^M)^\top T_M^{-\top} \mathcal{M} T_P^{-1} (T_P \mathbf{c}^P)
= \mathbf{c}^{M\top} \mathcal{M} \mathbf{c}^P.
\end{equation*}
\rev{Hence the binding probability $P(B=1\mid \mathbf{c}^M, \mathbf{c}^P)$ is identical for the two tuples. Therefore the selected joint over observations, with nuisance variables integrated out, is identical:}
\[
\rev{p^\mathcal{D}(x^M,x^P) = \iint p(x^M\mid\mathbf{c}^M) p(x^P\mid\mathbf{c}^P) p^\mathcal{D}(\mathbf{c}^M,\mathbf{c}^P)\,d\mathbf{c}^M\,d\mathbf{c}^P,}
\]
\rev{where $p(x^M\mid\mathbf{c}^M)=\int p(x^M\mid\mathbf{c}^M,\varepsilon^M)p(\varepsilon^M)\,d\varepsilon^M$ and analogously for $x^P$. The same expression is obtained under the transformed tuple, so $\tilde p^\mathcal{D}(x^M,x^P)=p^\mathcal{D}(x^M,x^P)$.}

\rev{Since $\mathrm{GL}(K_M)\times\mathrm{GL}(K_P)$ is a connected Lie group of dimension $K_M^2+K_P^2$, this parameterizes an uncountable family of equivalent tuples, which proves the first part.}

\rev{For the second part, let $K_M\ge2$ and $\mathcal M\neq0$. It suffices to find $A\in\mathrm{GL}(K_M)$ that is \emph{not} a permutation matrix times a diagonal matrix, together with $B\in\mathrm{GL}(K_P)$, such that $A^\top\mathcal M B=\mathcal M$: setting $T_M:=A^{-1}$ and $T_P:=B^{-1}$ then gives $\widetilde{\mathcal M}=T_M^{-\top}\mathcal M T_P^{-1}=A^\top\mathcal M B=\mathcal M$, while $\tilde{\mathbf c}^M=A^{-1}\mathbf c^M$ mixes at least two concepts, because the inverse of a permutation times a diagonal matrix is again of that form.}
\begin{itemize}[leftmargin=1.6em, itemsep=2pt]
    \item \rev{If $\operatorname{rank}(\mathcal M)<K_M$, pick $\mathbf w\neq0$ with $\mathbf w^\top\mathcal M=0$, an index $k$ with $w_k\neq0$, and any $m\neq k$, and set $\mathbf u:=w_m\mathbf e_k-w_k\mathbf e_m\neq0$, so that $\mathbf w^\top\mathbf u=0$. Let $A^\top:=I+\varepsilon\,\mathbf u\mathbf w^\top$ and $B:=I$. Then $A^\top\mathcal M=\mathcal M$; moreover $(\mathbf u\mathbf w^\top)^2=\mathbf u(\mathbf w^\top\mathbf u)\mathbf w^\top=0$, so $A^\top$ is invertible; and for small $\varepsilon>0$ its column $k$ has both the diagonal entry $1+\varepsilon u_kw_k$ and the off-diagonal entry $-\varepsilon w_k^2$ in row $m$ nonzero.}
    \item \rev{If $\operatorname{rank}(\mathcal M)=K_M$, take any $A\in\mathrm{GL}(K_M)$ that is not a permutation times a diagonal matrix, let $\mathcal M^{+}$ be the Moore--Penrose pseudoinverse, and set $B:=\mathcal M^{+}A^{-\top}\mathcal M+(I-\mathcal M^{+}\mathcal M)$. Since $\mathcal M\mathcal M^{+}=I$, we get $\mathcal M B=A^{-\top}\mathcal M$, i.e.\ $A^\top\mathcal M B=\mathcal M$. Moreover $B$ is the identity on $\ker\mathcal M$ and, on the row space $\mathcal R=\mathrm{range}(\mathcal M^\top)$, equals the composition of the bijections $\mathcal M|_{\mathcal R}:\mathcal R\to\mathbb R^{K_M}$, $A^{-\top}$, and $\mathcal M^{+}:\mathbb R^{K_M}\to\mathcal R$; as $\mathbb R^{K_P}=\mathcal R\oplus\ker\mathcal M$, $B$ is invertible.}
\end{itemize}
\rev{In both cases the mask, hence its support, sparsity level and antichain property, is unchanged. The example $\mathcal M=I_2$, $A^{-1}=\big[\begin{smallmatrix}1&1\\0&1\end{smallmatrix}\big]$, $B=A^{-\top}$ is Case~(a) of App.~\ref{app:theory:toy}.}

\rev{Thus, with a bilinear compatibility function the reparameterizations cannot be distinguished by $p^\mathcal{D}$, and no constraint on $\mathcal M$ alone removes the ambiguity; a non-bilinear $\phi$ (Condition~2.ii) is what Theorem~\ref{thm:identifiability} adds.}
\end{proof}

\subsection{\rev{Theorem~\ref{thm:identifiability}: Component-Wise Identifiability Under Sparse Interaction}}
\label{app:theory:main}

\rev{We now prove the central identifiability result. The argument has two parts. A cross-derivative step turns the equality of the two selected distributions into a family of matrix equations, one for each point of the concept support. A combinatorial step (Lemma~\ref{lem:antichain}) then shows that, for a sparse antichain support, this family admits only permutations and rescalings as solutions. The combinatorial step follows the sparsity-based identifiability arguments of~\citet{lachapelle2022disentanglement} and~\citet{zheng2022identifiability}.}

\begin{tcolorbox}[colback=gray!10, colframe=gray!50, boxrule=0.5pt, left=5pt, right=5pt, top=5pt, bottom=5pt]
\begin{theorem}[\rev{Component-wise identifiability under sparse interaction}]
\label{thm:identifiability}
\rev{Suppose Assumptions~\ref{ass:concept-recoverability}--\ref{ass:loglinear} hold for the true tuple $(g^M,g^P,\mathcal M,\phi)$ and for an alternative tuple $(\tilde g^M,\tilde g^P,\widetilde{\mathcal M},\tilde\phi)$ with $\tilde\phi\in C^2$ inducing the same $p^\mathcal D(x^M,x^P)$, and that Conditions~2.i--2.iii of Theorem~\ref{thm:main:ident} hold. Then there exist permutations $\pi$ of $[K_M]$ and $\varsigma$ of $[K_P]$ such that each $\tilde c^M_{\pi(i)}$ is an affine function of $c^M_i$ alone, each $\tilde c^P_{\varsigma(j)}$ is an affine function of $c^P_j$ alone, and $\mathrm{supp}(\widetilde{\mathcal M})=\{(\pi(i),\varsigma(j)):(i,j)\in\mathrm{supp}(\mathcal M)\}$.}
\end{theorem}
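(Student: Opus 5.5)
We start from the score identity that Assumption~\ref{ass:loglinear} gives us. Assumption~\ref{ass:concept-recoverability} lets us pull $p^\mathcal D(x^M,x^P)$ back to concept coordinates. Assumption~\ref{ass:marginal-independence} together with \eqref{eq:selection-factorization} then shows that $\log p^\mathcal D(\mathbf c^M,\mathbf c^P)=S(\mathbf c^M,\mathbf c^P)+f(\mathbf c^M)+h(\mathbf c^P)$. Two tuples induce the same selected distribution, and by Condition~2.iii their concepts are related through $\tilde{\mathbf c}^M=T_M\mathbf c^M+\mathbf t_M$ and $\tilde{\mathbf c}^P=T_P\mathbf c^P+\mathbf t_P$; the affine Jacobian contributes only constants. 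Hence
\[
\tilde S(T_M\mathbf c^M+\mathbf t_M,\,T_P\mathbf c^P+\mathbf t_P)=S(\mathbf c^M,\mathbf c^P)+a(\mathbf c^M)+b(\mathbf c^P)
\]
on the open support (Assumption~\ref{ass:positivity-coverage}). We apply the mixed operator $\partial_{\mathbf c^M}\partial_{\mathbf c^P}^\top$, which kills $a$ and $b$. Writing $D(\mathbf c^M,\mathbf c^P)$ for the $K_M\times K_P$ matrix with entries $\mathcal M_{ij}\,\partial_1\partial_2\phi(c^M_i,c^P_j)$, and $\tilde D$ analogously in the tilde coordinates, we obtain the family of matrix equations
\[
T_M^\top\,\tilde D(\tilde{\mathbf c}^M,\tilde{\mathbf c}^P)\,T_P=D(\mathbf c^M,\mathbf c^P)\qquad\text{for all points of the support.}
\]
Every $\tilde D$ is supported on $\mathrm{supp}(\widetilde{\mathcal M})$, and every $D$ is supported on $\mathrm{supp}(\mathcal M)$.

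The second step is a spanning argument based on Condition~2.ii. We claim that the matrices $D(\mathbf c^M,\mathbf c^P)$, as the point ranges over the open support, span the full space $\mathcal S$ of matrices supported on $\mathrm{supp}(\mathcal M)$. Entry $(i,j)$ depends only on $(c^M_i,c^P_j)$, so we argue dually. Suppose a linear functional $\sum_{(i,j)\in\mathrm{supp}}\lambda_{ij}\mathcal M_{ij}\psi(c^M_i,c^P_j)$ vanishes identically, where $\psi=\partial_1\partial_2\phi$. Fix an index $(i,j)$ and apply $\partial_{c^M_i}\partial_{c^P_j}$. This removes every term except the $(i,j)$ term and leaves $\lambda_{ij}\mathcal M_{ij}\,\partial_1\partial_2\psi(c^M_i,c^P_j)$. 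By 2.ii and real-analyticity, $\partial_1\partial_2\psi=\partial_1^2\partial_2^2\phi$ does not vanish on any open set. Hence $\lambda_{ij}=0$, and the span is all of $\mathcal S$. Applying the same spanning statement on the tilde side requires care, because $\tilde\phi$ is only assumed $C^2$. We handle this by linearity: the left-hand sides $T_M^\top\tilde D T_P$ lie in $T_M^\top\tilde{\mathcal S}T_P$, where $\tilde{\mathcal S}$ is the space of matrices supported on $\mathrm{supp}(\widetilde{\mathcal M})$. It follows that $T_M^\top\tilde{\mathcal S}T_P\supseteq\mathcal S$. By the minimality clause, $\dim\tilde{\mathcal S}=\|\widetilde{\mathcal M}\|_0\le\|\mathcal M\|_0=\dim\mathcal S$, and therefore the two spaces are equal. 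Thus the linear map $X\mapsto P^\top XQ$, with $P=T_M^{-1}$ and $Q=T_P^{-1}$, sends $\mathcal S$ bijectively onto $\tilde{\mathcal S}$.

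The third step is the combinatorial core, which we would isolate as Lemma~\ref{lem:antichain}: if $P,Q$ are invertible and $X\mapsto P^\top XQ$ maps the space of matrices supported on $\Omega$ onto the space supported on $\tilde\Omega$, and the rows and columns of $\Omega$ form antichains, then $P$ and $Q$ are generalized permutations and $\tilde\Omega=(\pi\times\varsigma)(\Omega)$. The plan is to test on elementary matrices. For $(i,j)\in\Omega$, the image $P^\top E_{ij}Q=P_{i\cdot}^\top Q_{j\cdot}$ is a rank-one matrix, and its support $\mathrm{supp}(P_{i\cdot})\times\mathrm{supp}(Q_{j\cdot})$ must lie in $\tilde\Omega$. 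So every nonzero entry $P_{ik}$ gives $\mathrm{supp}(Q_{j\cdot})\subseteq\tilde R_k$ for all $j\in R_i$. We then use invertibility through a Hall/permutation-expansion argument: there is a permutation $\pi$ with $P_{i\pi(i)}\neq0$ for every $i$, and likewise $\varsigma$ for $Q$. Combining this with the dimension count forces $\tilde R_{\pi(i)}=\varsigma(R_i)$. Now suppose some off-diagonal entry $P_{ik}\neq0$ with $k=\pi(i')$, $i'\neq i$. Then $\varsigma(R_i)\subseteq\tilde R_{\pi(i')}=\varsigma(R_{i'})$, which contradicts the row antichain; the same reasoning applies to columns. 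Hence $P$ and $Q$ are permutation matrices times diagonal matrices. Consequently each $\tilde c^M_{\pi(i)}$ is affine in $c^M_i$ alone, each $\tilde c^P_{\varsigma(j)}$ is affine in $c^P_j$ alone, and the support transforms as claimed.

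We expect this combinatorial lemma to be the main obstacle. The difficulty is that the support of $P$ only gives the one-sided inclusions $\mathrm{supp}(Q_{j\cdot})\subseteq\tilde R_k$, so establishing $\tilde R_{\pi(i)}=\varsigma(R_i)$ exactly needs a careful counting argument rather than an immediate identification. The analytic spanning step is routine once the $\partial_{c^M_i}\partial_{c^P_j}$ trick is in place.
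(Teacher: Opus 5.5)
Your proposal is correct and follows the paper's proof essentially step for step: the log-linear pull-back to $S-\widetilde S=a(\mathbf c^M)+b(\mathbf c^P)$, the cross-derivative identity $D=T_M^\top\tilde D\,T_P$, the spanning argument via $\partial_1^2\partial_2^2\phi$ on the true side only (so $\tilde\phi\in C^2$ suffices), the dimension count from the minimality clause, and Lemma~\ref{lem:antichain} via rank-one images of matrix units and determinant-expansion permutations. The counting step you flag as the main obstacle is immediate, and the paper handles it the same way: $\Gamma=\{(\pi(i),\varsigma(j)):(i,j)\in\mathrm{supp}(\mathcal M)\}$ lies in $\mathrm{supp}(\widetilde{\mathcal M})$ and has the same cardinality, so the two sets coincide.
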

\end{tcolorbox}

\paragraph{Discussion.} \rev{This is the appendix counterpart of Theorem~\ref{thm:main:ident} and the main positive result of the paper. Condition~2.iii reduces the problem to the linear ambiguity of Theorem~\ref{lem:non-id}. Condition~2.ii supplies variation: it turns the single matrix equation available for a bilinear compatibility function into a family of equations spanning all matrices supported on $\mathrm{supp}(\mathcal M)$. Condition~2.i and the minimality clause supply the combinatorial constraint that this family can satisfy only through permutations and rescalings. A single matrix equation is never enough, whatever the support of the mask (Example~\ref{ex:lemma}(a)).}

\subsubsection{\rev{A Sparsity Lemma}}

\rev{For a set $S\subseteq[K_M]\times[K_P]$, let $\mathbb R^S$ denote the space of $K_M\times K_P$ matrices supported in $S$, and write $R_k(S)=\{l:(k,l)\in S\}$ and $C_l(S)=\{k:(k,l)\in S\}$ for its row and column supports.}

\begin{lemma}[Antichain recovery]
\label{lem:antichain}
\rev{Let $S\subseteq[K_M]\times[K_P]$ have nonempty row supports forming an antichain and nonempty column supports forming an antichain. Let $P\in\mathrm{GL}(K_M)$, $Q\in\mathrm{GL}(K_P)$, and $\widetilde S\subseteq[K_M]\times[K_P]$ with $|\widetilde S|\le|S|$. Suppose there is a family $\{T_t\}_{t\in\mathcal T}\subseteq\mathbb R^S$ that spans $\mathbb R^S$ and satisfies $\mathrm{supp}(PT_tQ)\subseteq\widetilde S$ for every $t\in\mathcal T$. Then there exist permutations $\pi$ of $[K_M]$ and $\varsigma$ of $[K_P]$ such that $P_{ik}\neq0$ if and only if $i=\pi(k)$, and $Q_{lj}\neq0$ if and only if $j=\varsigma(l)$; that is, $P$ and $Q$ are each a permutation matrix times an invertible diagonal matrix. Moreover $\widetilde S=\{(\pi(k),\varsigma(l)):(k,l)\in S\}$.}
\end{lemma}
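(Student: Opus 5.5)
The plan is to turn the hypothesis on the family $\{T_t\}$ into a statement about elementary matrices, use invertibility of $P$ and $Q$ to locate a "diagonal" of nonzero entries, and then let the cardinality bound $|\widetilde S|\le|S|$ together with the antichain property rule out every other nonzero entry.

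\emph{Step 1 (reduction to elementary matrices).} Matrices supported in $\widetilde S$ form the linear subspace $\mathbb R^{\widetilde S}$. The map $X\mapsto PXQ$ is linear, and the $T_t$ span $\mathbb R^S$. Hence $\mathrm{supp}(PXQ)\subseteq\widetilde S$ for every $X\in\mathbb R^S$. Apply this to the elementary matrices $E_{kl}$ with $(k,l)\in S$. Since $PE_{kl}Q=P_{:,k}Q_{l,:}$ is a rank-one outer product, its support is exactly $A_k\times B_l$, where $A_k:=\mathrm{supp}(P_{:,k})$ and $B_l:=\mathrm{supp}(Q_{l,:})$. Both sets are nonempty because $P$ and $Q$ are invertible. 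We therefore get
\[
\textstyle\bigcup_{(k,l)\in S}A_k\times B_l\subseteq\widetilde S .
\]

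\emph{Step 2 (a transversal and counting).} Since $\det P\neq0$, the Leibniz expansion has a nonzero term. This gives a permutation $\sigma$ of $[K_M]$ with $P_{\sigma(k),k}\neq0$ for all $k$, i.e.\ $\sigma(k)\in A_k$. Likewise there is a permutation $\tau$ of $[K_P]$ with $Q_{l,\tau(l)}\neq0$, i.e.\ $\tau(l)\in B_l$.

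By Step 1, $\widetilde S\supseteq(\sigma\times\tau)(S):=\{(\sigma(k),\tau(l)):(k,l)\in S\}$. This set has exactly $|S|$ elements because $\sigma\times\tau$ is injective. Combined with $|\widetilde S|\le|S|$, this forces $\widetilde S=(\sigma\times\tau)(S)$. Consequently $A_k\times B_l\subseteq(\sigma\times\tau)(S)$ for every $(k,l)\in S$.

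\emph{Step 3 (antichain kills off-diagonal entries).} Suppose some $A_k$ contains an index $\sigma(k')$ with $k'\neq k$. Take any $l\in R_k(S)$, which exists because row supports are nonempty. Then $(\sigma(k'),\tau(l))\in A_k\times B_l\subseteq(\sigma\times\tau)(S)$, so $(k',l)\in S$. Hence $R_k(S)\subseteq R_{k'}(S)$, contradicting the row antichain. Therefore $A_k=\{\sigma(k)\}$ for every $k$, which says that $P$ is a permutation times an invertible diagonal matrix, with $\pi:=\sigma$.

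The symmetric argument applies to $Q$. Fix $l$, suppose $\tau(l')\in B_l$ with $l'\neq l$, and take $k\in C_l(S)$. This yields $(k,l')\in S$, hence $C_l(S)\subseteq C_{l'}(S)$, contradicting the column antichain. So $B_l=\{\tau(l)\}$, and we set $\varsigma:=\tau$. The identity $\widetilde S=\{(\pi(k),\varsigma(l)):(k,l)\in S\}$ was already obtained in Step 2.

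The step I expect to need the most care is Step 2. Everything hinges on obtaining equality $\widetilde S=(\sigma\times\tau)(S)$ rather than mere inclusion, since only equality lets Step 3 pull a stray entry back to a membership in $S$. This is exactly where the cardinality hypothesis $|\widetilde S|\le|S|$, the counterpart of sparsity minimality, is consumed, and where the transversal from the determinant expansion is essential.
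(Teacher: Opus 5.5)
Your proof is correct and follows the paper's argument essentially step for step. It uses the outer-product supports of $PE_{kl}Q$, a transversal obtained from the Leibniz expansions of $\det P$ and $\det Q$, the cardinality squeeze forcing $\widetilde S=\{(\sigma(k),\tau(l)):(k,l)\in S\}$, and the antichain contradiction that pulls any stray nonzero entry back into a containment of supports. The only cosmetic difference is that the paper first derives $|\widetilde S|=|S|$ from injectivity of $X\mapsto PXQ$ and a dimension count, which your direct count of the injective image $(\sigma\times\tau)(S)$ makes unnecessary.
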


\begin{proof}
\rev{The linear map $L(X)=PXQ$ is injective. Since the $T_t$ span $\mathbb R^S$ and $L(T_t)\in\mathbb R^{\widetilde S}$ for every $t$, linearity gives $L(\mathbb R^S)\subseteq\mathbb R^{\widetilde S}$. Comparing dimensions, $|S|\le|\widetilde S|\le|S|$, so $|\widetilde S|=|S|$ and $L(\mathbb R^S)=\mathbb R^{\widetilde S}$.}

\rev{For $(k,l)\in S$ the matrix unit $E_{kl}$ lies in $\mathbb R^S$, and $L(E_{kl})=\mathbf p_k\mathbf q_l^\top$, where $\mathbf p_k$ is the $k$-th column of $P$ and $\mathbf q_l^\top$ the $l$-th row of $Q$. Hence}
\begin{equation}
{\color{black}
\mathrm{supp}(\mathbf p_k)\times\mathrm{supp}(\mathbf q_l)\subseteq\widetilde S\qquad\text{for all }(k,l)\in S.
}
\label{eq:lemma-outer}
\end{equation}
\rev{Since $\det P=\sum_{\pi}\mathrm{sgn}(\pi)\prod_k P_{\pi(k)k}\neq0$, some permutation $\pi$ has $P_{\pi(k)k}\neq0$ for all $k$; likewise some $\varsigma$ has $Q_{l\varsigma(l)}\neq0$ for all $l$. By~\eqref{eq:lemma-outer}, $\Gamma:=\{(\pi(k),\varsigma(l)):(k,l)\in S\}\subseteq\widetilde S$. As $(k,l)\mapsto(\pi(k),\varsigma(l))$ is injective, $|\Gamma|=|S|=|\widetilde S|$, so $\widetilde S=\Gamma$.}

\rev{Suppose $P_{ik}\neq0$ for some $i\neq\pi(k)$, and put $k'=\pi^{-1}(i)\neq k$. For every $l\in R_k(S)$, \eqref{eq:lemma-outer} gives $(i,\varsigma(l))\in\widetilde S=\Gamma$, hence $(k',l)\in S$, i.e.\ $l\in R_{k'}(S)$. Thus $R_k(S)\subseteq R_{k'}(S)$, contradicting the row antichain; so column $k$ of $P$ has the single nonzero entry $P_{\pi(k)k}$. Symmetrically, if $Q_{lj}\neq0$ for some $j\neq\varsigma(l)$, put $l'=\varsigma^{-1}(j)\neq l$; for every $k\in C_l(S)$, \eqref{eq:lemma-outer} gives $(\pi(k),j)\in\Gamma$, hence $(k,l')\in S$, so $C_l(S)\subseteq C_{l'}(S)$, contradicting the column antichain.}
\end{proof}

\begin{example}[\rev{Each hypothesis of Lemma~\ref{lem:antichain} is needed}]
\label{ex:lemma}
\rev{Let $K_M=K_P=2$.}
\begin{enumerate}[label=(\alph*), leftmargin=1.8em, itemsep=2pt]
    \item \rev{\emph{Spanning family.} Let $S=\{(1,1),(2,2)\}$, $P=\big[\begin{smallmatrix}1&1\\0&1\end{smallmatrix}\big]$ and $Q=P^{-1}$. The single matrix $T=I_2$ gives $PTQ=I_2$, supported in $\widetilde S=S$, although $P$ is not a permutation times a diagonal matrix. The family $\{I_2\}$ does not span $\mathbb R^S$: for $T=\mathrm{diag}(2,5)$ one gets $PTQ=\big[\begin{smallmatrix}2&3\\0&5\end{smallmatrix}\big]$, which leaves $\widetilde S$. This is exactly the bilinear situation, where the cross-derivative matrix is the constant $\mathcal M$.}
    \item \rev{\emph{Antichain.} Let $S=\{(1,1),(2,1),(2,2)\}$, so $R_1=\{1\}\subseteq R_2=\{1,2\}$, and let $P=\big[\begin{smallmatrix}1&0\\a&1\end{smallmatrix}\big]$ with $a\neq0$, $Q=I_2$. Then $PTQ\in\mathbb R^S$ for every $T\in\mathbb R^S$.}
    \item \rev{\emph{Sparsity minimality.} If $\widetilde S=[K_M]\times[K_P]$, every $P$ and $Q$ satisfy the support condition.}
\end{enumerate}
\end{example}

\subsubsection{Full Proof of Theorem~\ref{thm:identifiability}}
\label{app:theory:proof-ident}

\begin{proof}
\rev{Write $S(\mathbf c^M,\mathbf c^P)=\sum_{k,l}\mathcal M_{kl}\phi(c_k^M,c_l^P)$ and $\widetilde S(\tilde{\mathbf c}^M,\tilde{\mathbf c}^P)=\sum_{k,l}\widetilde{\mathcal M}_{kl}\tilde\phi(\tilde c_k^M,\tilde c_l^P)$, and let $e^M,e^P$ and $\tilde e^M,\tilde e^P$ be the recovery maps of Assumption~\ref{ass:concept-recoverability} for the two tuples.}

\paragraph{Step 1: Pull-back to concept coordinates.} \rev{By Eq.~\eqref{eq:ci-b} and Assumption~\ref{ass:concept-recoverability}, $B$ depends on the observations only through $\mathbf c^M=e^M(x^M)$ and $\mathbf c^P=e^P(x^P)$. By Assumption~\ref{ass:marginal-independence} and the DAG in Eq.~\eqref{eq:v-structure}, $(\mathbf c^M,\varepsilon^M)$ is independent of $(\mathbf c^P,\varepsilon^P)$, so $p(x^M,x^P)=p(x^M)p(x^P)$. Bayes' rule and Assumption~\ref{ass:loglinear} then give}
\begin{equation}
{\color{black}
\log p^\mathcal D(x^M,x^P)=\log p(x^M)+\log p(x^P)+S\bigl(e^M(x^M),e^P(x^P)\bigr)+\log\kappa-\log P(B{=}1).
}
\label{eq:loglinear-pullback}
\end{equation}
\rev{The same identity holds for the alternative tuple with its own marginals, constant, and recovery maps. Subtracting,}
\[
{\color{black}
S\bigl(e^M(x^M),e^P(x^P)\bigr)-\widetilde S\bigl(\tilde e^M(x^M),\tilde e^P(x^P)\bigr)=\alpha(x^M)+\beta(x^P)
}
\]
\rev{for some $\alpha,\beta$. By Condition~2.iii, $\tilde e^M=T_Me^M+\mathbf t_M$ and $\tilde e^P=T_Pe^P+\mathbf t_P$, so the left-hand side depends on $x^M$ only through $e^M(x^M)$; fixing $x^P$ shows the same for $\alpha$, and symmetrically for $\beta$. Hence, on the concept support,}
\begin{equation}
{\color{black}
S(\mathbf c^M,\mathbf c^P)-\widetilde S\bigl(T_M\mathbf c^M+\mathbf t_M,\,T_P\mathbf c^P+\mathbf t_P\bigr)=a(\mathbf c^M)+b(\mathbf c^P).
}
\label{eq:match}
\end{equation}
\rev{Under the logistic link, $S$ in~\eqref{eq:loglinear-pullback} would be replaced by $\log\sigma(S)$, whose cross-derivatives are dense; Assumption~\ref{ass:loglinear} is what lets the support of $\mathcal M$ survive the next step.}

\paragraph{Step 2: Cross-derivatives.} \rev{By Assumptions~\ref{ass:marginal-independence} and~\ref{ass:positivity-coverage} the concept support is a product of open sets, so we may differentiate~\eqref{eq:match}. Apply $\partial^2/\partial c_i^M\partial c_j^P$. The right-hand side vanishes. In $S$ only the term $(i,j)$ depends on both $c_i^M$ and $c_j^P$, so $\partial^2S/\partial c_i^M\partial c_j^P=\mathcal M_{ij}\,\partial_1\partial_2\phi(c_i^M,c_j^P)$; by the chain rule the second term contributes $\sum_{k,l}(T_M)_{ki}(T_P)_{lj}\widetilde{\mathcal M}_{kl}\,\partial_1\partial_2\tilde\phi(\tilde c_k^M,\tilde c_l^P)$. Define}
\[
{\color{black}
T(\mathbf c)_{ij}:=\mathcal M_{ij}\,\partial_1\partial_2\phi(c_i^M,c_j^P),\qquad
\widetilde T(\mathbf c)_{kl}:=\widetilde{\mathcal M}_{kl}\,\partial_1\partial_2\tilde\phi(\tilde c_k^M,\tilde c_l^P),
}
\]
\rev{with $\mathbf c=(\mathbf c^M,\mathbf c^P)$. Then $T(\mathbf c)\in\mathbb R^{S}$ for $S:=\mathrm{supp}(\mathcal M)$, $\mathrm{supp}(\widetilde T(\mathbf c))\subseteq\widetilde S:=\mathrm{supp}(\widetilde{\mathcal M})$, and $T(\mathbf c)=T_M^\top\widetilde T(\mathbf c)T_P$, that is,}
\begin{equation}
{\color{black}
\widetilde T(\mathbf c)=P\,T(\mathbf c)\,Q,\qquad P:=T_M^{-\top},\quad Q:=T_P^{-1},
}
\label{eq:double-deriv}
\end{equation}
\rev{for every $\mathbf c$ in the support, with $P$ and $Q$ independent of $\mathbf c$.}

\paragraph{\rev{Step 3: The cross-derivative matrices span $\mathbb R^{S}$.}} \rev{Suppose not. Then some nonzero $W\in\mathbb R^{S}$ satisfies $\sum_{(i,j)\in S}W_{ij}\mathcal M_{ij}\,\partial_1\partial_2\phi(c_i^M,c_j^P)=0$ for all $\mathbf c$ in the open support. Pick $(i,j)\in S$ with $W_{ij}\neq0$ and apply $\partial^2/\partial c_i^M\partial c_j^P$; again only the term $(i,j)$ survives, so $W_{ij}\mathcal M_{ij}\,\partial_1^2\partial_2^2\phi(c_i^M,c_j^P)=0$ on a nonempty open set. By Condition~2.ii, $\partial_1^2\partial_2^2\phi$ is real-analytic and not identically zero, so it cannot vanish on a nonempty open set; since $W_{ij}\mathcal M_{ij}\neq0$, this is a contradiction.}

\paragraph{\rev{Step 4: Combinatorial step.}} \rev{By Condition~2.i, $S$ has nonempty row and column supports forming antichains, and by the minimality clause of Condition~2.iii, $|\widetilde S|\le|S|$. By Steps~2 and~3 the family $\{T(\mathbf c)\}$ spans $\mathbb R^{S}$ and satisfies $\mathrm{supp}(PT(\mathbf c)Q)\subseteq\widetilde S$. Lemma~\ref{lem:antichain} therefore yields permutations $\pi,\varsigma$ with $P_{ik}\neq0\iff i=\pi(k)$, $Q_{lj}\neq0\iff j=\varsigma(l)$, and $\widetilde S=\{(\pi(k),\varsigma(l)):(k,l)\in S\}$.}

\paragraph{\rev{Step 5: Conclusion.}} \rev{Since $P=T_M^{-\top}$ has the single nonzero entry $P_{\pi(k)k}$ in each column $k$, the matrix $T_M=P^{-\top}$ has the single nonzero entry $(T_M)_{\pi(k)k}=1/P_{\pi(k)k}$ in each column $k$; hence $\tilde c^M_{\pi(k)}=(T_M)_{\pi(k)k}c^M_k+(\mathbf t_M)_{\pi(k)}$ depends on $c^M_k$ alone. Likewise $T_P=Q^{-1}$ gives $\tilde c^P_{\varsigma(l)}$ as an affine function of $c^P_l$ alone. Together with the support identity of Step~4, the two tuples differ only by concept permutations, component-wise invertible maps, and the scale shared between mask and compatibility function, which are exactly the equivalences allowed in Definition~\ref{def:identifiability}.}
\end{proof}

\begin{remark}[\rev{Which conditions are needed}]
\rev{Condition~2.ii cannot be dropped: for $\phi(x,y)=xy$ the second part of Theorem~\ref{lem:non-id} gives counterexamples satisfying Conditions~2.i and~2.iii, the smallest being Case~(a) of App.~\ref{app:theory:toy}. Condition~2.i and the minimality clause cannot be dropped from Lemma~\ref{lem:antichain} (Example~\ref{ex:lemma}). They are, however, not necessary for identifiability as such: Proposition~\ref{prop:nonlinear} obtains it without any sparsity, under a stronger nonlinearity condition that depends on the mask and must hold at every point of the support. What the sparse antichain structure buys is that the analytic requirement, Condition~2.ii, is a generic property of $\phi$ alone, while the combinatorial requirement can be checked directly on a learned mask.}
\end{remark}

\subsubsection{\rev{Beyond Linear Concept Ambiguity}}

\rev{Condition~2.iii restricts the competing parameterizations to affine reparameterizations of the concepts. The pairwise form of the binding score also constrains general nonlinear reparameterizations, by an argument similar to the one for additive decoders~\citep{lachapelle2023additive}.}

\begin{proposition}[\rev{Nonlinear reparameterizations}]
\label{prop:nonlinear}
\rev{Suppose Assumptions~\ref{ass:concept-recoverability}--\ref{ass:loglinear} hold for both tuples, the support of $p(\mathbf c^M)$ is open and convex, $\phi\in C^3$, $\tilde\phi\in C^1$, and the concepts of the two tuples are related by $C^2$ diffeomorphisms $\tilde{\mathbf c}^M=F(\mathbf c^M)$ and $\tilde{\mathbf c}^P=G(\mathbf c^P)$. Fix protein-side probes $\mathbf b_1,\ldots,\mathbf b_R$ in the support of $\mathbf c^P$, write $b_{r,j}$ for the $j$-th coordinate of $\mathbf b_r$, and define for each $i\in[K_M]$}
\[
{\color{black}
\mathbf h_i(x):=\bigl(\mathcal M_{ij}\,\partial_2\phi(x,b_{r,j})\bigr)_{r\in[R],\,j\in[K_P]}\in\mathbb R^{RK_P}.
}
\]
\rev{If, for every $\mathbf c^M$ in the support, the $2K_M$ vectors $\{\mathbf h_i'(c_i^M),\mathbf h_i''(c_i^M)\}_{i=1}^{K_M}$ are linearly independent, then there is a permutation $\pi$ of $[K_M]$ such that each $F_{\pi(i)}$ depends on $c_i^M$ alone. The symmetric condition with molecule-side probes gives the same conclusion for $G$.}
\end{proposition}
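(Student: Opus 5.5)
We argue in the spirit of additive-decoder identifiability: differentiate the matching identity once on the protein side, then twice on the molecule side. The pairwise form of the binding score then forces the Hessian of each $F_k$ to be block-diagonal along a permutation.

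\emph{Matching identity.} As in Step~1 of the proof of Theorem~\ref{thm:identifiability}, Assumption~\ref{ass:loglinear} and the pull-back through the recovery maps give, on the product support,
\[
S(\mathbf c^M,\mathbf c^P)-\widetilde S\bigl(F(\mathbf c^M),G(\mathbf c^P)\bigr)=a(\mathbf c^M)+b(\mathbf c^P).
\]
Here we use that $F,G$ are diffeomorphisms, so $\alpha,\beta$ factor through the concepts exactly as before.

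\emph{Protein-side derivative.} Differentiate in $c_j^P$ and evaluate at the probe $\mathbf b_r$; this kills $b$ and $a$. The left part becomes $\sum_i \mathcal M_{ij}\partial_2\phi(c_i^M,b_{r,j})$. The right part becomes $\sum_{k}\Psi_{k,r,j}(F(\mathbf c^M))$, where
\[
\Psi_{k,r,j}(\tilde{\mathbf c}^M):=\sum_l \widetilde{\mathcal M}_{kl}\,\partial_2\tilde\phi\bigl(\tilde c^M_k,\,G_l(\mathbf b_r)\bigr)\,\partial_j G_l(\mathbf b_r).
\]
The point is that $\Psi_{k,r,j}$ depends on $\tilde{\mathbf c}^M$ only through the single coordinate $\tilde c_k^M$. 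Stacking over $(r,j)$ yields the vector identity
\[
\sum_i \mathbf h_i(c_i^M)=\sum_k \boldsymbol\psi_k\bigl(F_k(\mathbf c^M)\bigr)+\text{const}.
\]
Here $\boldsymbol\psi_k:\mathbb R\to\mathbb R^{RK_P}$, and the constant collects the $\mathbf c^M$-free terms.

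\emph{Molecule-side derivatives.} Now apply $\partial_{c_p^M}$ and $\partial_{c_q^M}$ for $p\neq q$. The left side is additively separable across the $c_i^M$, so its mixed second derivative is zero, and the right side must vanish as well:
\[
0=\sum_k\Bigl[\boldsymbol\psi_k''(F_k)\,\partial_pF_k\,\partial_qF_k+\boldsymbol\psi_k'(F_k)\,\partial_p\partial_qF_k\Bigr].
\]
Taking one and two derivatives in a single direction instead gives $\mathbf h_p'(c_p^M)=\sum_k\boldsymbol\psi_k'(F_k)\partial_pF_k$, and similarly $\mathbf h_p''$ in terms of $\boldsymbol\psi_k',\boldsymbol\psi_k''$. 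The regularity hypotheses are what make this legitimate: $\phi\in C^3$ makes $\mathbf h_i''$ exist, and $\tilde\phi\in C^1$ together with $C^2$ diffeomorphisms makes $\boldsymbol\psi_k$ twice differentiable along the relation. If that fails, I would transfer smoothness from the left side through invertibility of $F$.

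\emph{Linear algebra at a point.} Fix $\mathbf c^M$. All $2K_M$ vectors $\mathbf h_i',\mathbf h_i''$ lie in $\mathcal W:=\mathrm{span}\{\boldsymbol\psi_k'(F_k),\boldsymbol\psi_k''(F_k)\}_k$, which has dimension at most $2K_M$. The hypothesis that these $2K_M$ vectors are independent therefore forces $\dim\mathcal W=2K_M$, so the $\boldsymbol\psi$-vectors are themselves linearly independent. Consequently every identity above can be read coefficient-wise.

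From the mixed identity, each $k$ gives $\partial_pF_k\,\partial_qF_k=0$ for $p\neq q$ (and also $\partial_p\partial_qF_k=0$). Hence each row of the Jacobian $DF(\mathbf c^M)$ has at most one nonzero entry. Since $DF$ is invertible, $DF$ is a generalized permutation matrix at every point, with permutation $\pi_{\mathbf c^M}$.

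\emph{Globalizing.} The main obstacle is showing that the permutation is constant. The set where $\pi_{\mathbf c^M}=\pi$ is closed by continuity of $DF$. It is also open, because the nonzero entries stay nonzero nearby, and one nonzero entry per row plus invertibility pins down $\pi$. The support is convex, hence connected, so $\pi$ is constant. Then $\partial_qF_{\pi(i)}\equiv0$ for $q\neq i$ on a convex set, so $F_{\pi(i)}$ depends on $c_i^M$ alone.

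The protein-side claim for $G$ follows by the symmetric argument, with molecule-side probes.
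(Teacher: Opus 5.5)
Your proof is correct, but it takes the second derivatives in the opposite coordinates from the paper.

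The paper sets $V:=F^{-1}$ and differentiates $\widetilde{\mathbf H}=\mathbf h\circ V$ in the alternative coordinates $\mathbf u$, where the additive function is $\widetilde{\mathbf H}$. The chain-rule expansion then contains the true vectors $\mathbf h_i'(V_i)$ and $\mathbf h_i''(V_i)$. So the independence hypothesis applies verbatim and gives $\partial_kV_i\,\partial_{k'}V_i=0$, from which the pattern of $J_F=J_V^{-1}$ follows. All the needed regularity comes for free from $\phi\in C^3$ and $V\in C^2$.

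You differentiate in $\mathbf c^M$, where the additive function is the known $\sum_i\mathbf h_i(c_i^M)$. Your expansion therefore involves the unknown $\boldsymbol\psi_k$, and you need two extra ingredients.

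The first is your dimension count: every $\mathbf h_p'$ and $\mathbf h_p''$ lies in $\mathrm{span}\{\boldsymbol\psi_k',\boldsymbol\psi_k''\}_{k=1}^{K_M}$, so independence transfers to those $2K_M$ spanning vectors. This step is valid and lets you read off the generalized-permutation pattern of $DF$ directly. As a by-product, it shows the alternative tuple satisfies the same independence condition at the probes $G(\mathbf b_r)$. The reason is that $\boldsymbol\psi_k$ equals that tuple's $\tilde{\mathbf h}_k$ up to a fixed invertible block-diagonal map with blocks $J_G(\mathbf b_r)^\top$.

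The second ingredient is $C^2$ regularity of $\boldsymbol\psi_k$, and there your primary justification is wrong. $\boldsymbol\psi_k$ is built from $\partial_2\tilde\phi$, so $\tilde\phi\in C^1$ only makes it continuous. Your fallback is therefore required, not optional. The sum $\sum_k\boldsymbol\psi_k(u_k)=\mathbf h(F^{-1}(\mathbf u))+\mathrm{const}$ is $C^2$ on the open set $F(\mathrm{supp})$. Freezing all but one coordinate in a small box then shows each $\boldsymbol\psi_k$ is $C^2$. That fix is exactly the composition the paper differentiates, which is why the paper's route is shorter.

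Your globalization step (the pattern set is clopen in the connected support, then convexity) matches the paper's.
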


\begin{proof}
\rev{As in Step~1 above, with $F,G$ in place of the affine maps, $S(\mathbf c^M,\mathbf c^P)-\widetilde S(F(\mathbf c^M),G(\mathbf c^P))=a(\mathbf c^M)+b(\mathbf c^P)$. Differentiate with respect to $c_j^P$ and evaluate at $\mathbf c^P=\mathbf b_r$. On the left, $\partial S/\partial c_j^P(\mathbf c^M,\mathbf b_r)=\sum_i\mathcal M_{ij}\partial_2\phi(c_i^M,b_{r,j})$, so stacking over $(r,j)$ gives $\mathbf h(\mathbf c^M):=\sum_i\mathbf h_i(c_i^M)$. On the right, the corresponding derivative equals $\sum_l\partial_jG_l(\mathbf b_r)\sum_k\widetilde{\mathcal M}_{kl}\partial_2\tilde\phi(F_k(\mathbf c^M),G_l(\mathbf b_r))$, a sum of functions of the single coordinates $F_k(\mathbf c^M)$, while $\partial b/\partial c_j^P(\mathbf b_r)$ is constant. Hence $\mathbf h=\widetilde{\mathbf H}\circ F$ for some $\widetilde{\mathbf H}(\mathbf u)=\sum_k\widetilde{\mathbf H}_k(u_k)$ additive in $\mathbf u$.}

\rev{Let $V:=F^{-1}$. Then $\widetilde{\mathbf H}=\mathbf h\circ V$ is $C^2$ and additive, so for $k\neq k'$,}
\[
{\color{black}
0=\frac{\partial^2}{\partial u_k\partial u_{k'}}\sum_i\mathbf h_i(V_i(\mathbf u))
=\sum_i\Bigl[\mathbf h_i''(V_i)\,\partial_kV_i\,\partial_{k'}V_i+\mathbf h_i'(V_i)\,\partial^2_{kk'}V_i\Bigr].
}
\]
\rev{By the linear independence assumption, $\partial_kV_i\,\partial_{k'}V_i=0$ for all $i$ and $k\neq k'$, so each row of $J_V(\mathbf u)$ has at most one nonzero entry; as $J_V$ is invertible, it is a permutation times a diagonal matrix, and so is $J_F(\mathbf c^M)=J_V(F(\mathbf c^M))^{-1}$. The pattern is locally constant by continuity, hence constant on the connected support, so there is $\pi$ with $\partial F_{\pi(i)}/\partial c_k^M\equiv0$ for $k\neq i$; convexity of the support then makes $F_{\pi(i)}$ a function of $c_i^M$ alone. The argument for $G$ is symmetric.}
\end{proof}

\rev{The linear independence condition requires $RK_P\ge2K_M$. It fails for a bilinear $\phi$, for which $\mathbf h_i''\equiv0$, and it fails wherever two concepts whose rows of $\mathcal M$ are proportional take the same value, since their vectors then coincide up to scale; distinct row supports, as in Condition~2.i, remove this degeneracy. If in addition $\|\widetilde{\mathcal M}\|_0\le\|\mathcal M\|_0$ and $\partial_1\partial_2\phi\neq0$ on the support, the cross-derivative identity of Step~2 shows that $\mathrm{supp}(\widetilde{\mathcal M})$ is the permuted support of $\mathcal M$, as in Theorem~\ref{thm:identifiability}.}

\subsection{\rev{Theorem~\ref{thm:lowrank-recovery}: Subspace Identifiability Under Low-Rank Interaction}}
\label{app:theory:lowrank}

\rev{Condition~2.i requires an exactly sparse antichain support and Condition~2.ii a non-bilinear compatibility function, both of which may be too strong in practice.} \rev{For protein-molecule binding, we argue that the subspace-identifiability alternative in Condition~3.i is also natural empirically: the binding kernel has low rank.}

\paragraph{Motivation.} Protein--molecule binding in drug discovery is dominated by a small number of pharmacophoric motifs: hydrogen bonds, $\pi$-stacking, hydrophobic packing, and electrostatic complementarity. It is therefore physically reasonable that the complete binding response can be decomposed into $r$ independent \emph{interaction modes}, each characterized by a pair $(u_k, v_k)$ describing how each modality contributes to mode $k$. We expect $r$ to be small; typical estimates from medicinal chemistry place $r$ at $5$--$20$ for a given target family.

\paragraph{Why this replaces SEDGE's conservative assumption.} SEDGE's identifiability argument relies on $P(S{=}0) \approx 0$, i.e., ``unseen configurations are rare.'' This is false for binding: most protein-molecule pairs do \emph{not} bind, so $P(B{=}0) \gg P(B{=}1)$. Condition~3.i replaces this with a physics-motivated structural constraint on the kernel itself, preserving identifiability without requiring the pathological SEDGE-style coverage condition.

\paragraph{Architectural implication.} \rev{Under Condition~3.i, the bilinear binding score can be implemented with a rank-$r$ factorized mask $\mathcal{M} = U V^\top$ where $U \in \mathbb{R}^{K_M \times r}$ and $V \in \mathbb{R}^{K_P \times r}$, reducing parameters from $K_MK_P$ to $r(K_M+K_P)$ and naturally enforcing the low-rank prior. This is the structural basis for our low-rank concept-basis implementation in App.~\ref{app:impl_lr}; the main concept-pair implementation in App.~\ref{app:impl_sp} can also reach this regime empirically through regularization-induced low rank.}

\paragraph{Subspace identifiability under low-rank interaction.} When Conditions~2.i--2.iii are replaced by Conditions~3.i--3.ii, the full concept-pair mask need not be identifiable. However, an analog of Theorem~\ref{thm:identifiability} still holds at the level of the rank-$r$ \emph{interaction subspaces}:

\begin{tcolorbox}[colback=gray!10, colframe=gray!50, boxrule=0.5pt, left=5pt, right=5pt, top=5pt, bottom=5pt]
\begin{theorem}[\rev{Subspace identifiability under low-rank interaction}]
\label{thm:lowrank-recovery}
\label{prop:lowrank-recovery}
\rev{Suppose Assumptions~\ref{ass:concept-recoverability}--\ref{ass:loglinear} and Conditions~3.i--3.ii of Theorem~\ref{thm:main:lowrank} hold, so the centred binding score $S^\circ$ of Condition~3.i admits a minimal rank-$r$ separable factorization}
\[
{\color{black}
S^\circ(\mathbf c^M,\mathbf c^P)
=\sum_{\ell=1}^{r}u_\ell(\mathbf c^M)v_\ell(\mathbf c^P)
=\mathbf u(\mathbf c^M)^\top\mathbf v(\mathbf c^P),
\qquad r<\min\{K_M,K_P\},
}
\]
\rev{with linearly independent mode families $\{u_\ell\}_{\ell=1}^r\subset L^2(p^\mathcal D(\mathbf c^M))$ and $\{v_\ell\}_{\ell=1}^r\subset L^2(p^\mathcal D(\mathbf c^P))$. Assume the low-rank sufficient-variability condition: there exist probe values $\mathbf a_1,\ldots,\mathbf a_{R_M}\in\mathrm{int}(\mathrm{supp}(\mathbf c^M))$ and $\mathbf b_1,\ldots,\mathbf b_{R_P}\in\mathrm{int}(\mathrm{supp}(\mathbf c^P))$, with $R_M,R_P\ge r$, such that}
\[
{\color{black}
\mathbf U_A
:=
\begin{bmatrix}
u_1(\mathbf a_1)&\cdots&u_r(\mathbf a_1)\\[-1mm]
\vdots&\ddots&\vdots\\[-1mm]
u_1(\mathbf a_{R_M})&\cdots&u_r(\mathbf a_{R_M})
\end{bmatrix}
\qquad\text{and}\qquad
\mathbf V_B
:=
\begin{bmatrix}
v_1(\mathbf b_1)&\cdots&v_r(\mathbf b_1)\\[-1mm]
\vdots&\ddots&\vdots\\[-1mm]
v_1(\mathbf b_{R_P})&\cdots&v_r(\mathbf b_{R_P})
\end{bmatrix}
}
\]
\[
{\color{black}
\operatorname{rank}(\mathbf U_A)=\operatorname{rank}(\mathbf V_B)=r.
}
\]
\rev{Then the molecule-side and protein-side interaction subspaces}
\[
{\color{black}
\mathcal U_M:=\mathrm{span}\{u_1,\ldots,u_r\}\subset L^2(p^\mathcal D(\mathbf c^M)),
\qquad
\mathcal V_P:=\mathrm{span}\{v_1,\ldots,v_r\}\subset L^2(p^\mathcal D(\mathbf c^P))
}
\]
\rev{are subspace-identifiable from $p^\mathcal D(x^M,x^P)$ in the sense of Definition~\ref{def:identifiability}.}
\end{theorem}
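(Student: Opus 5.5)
The plan is to show that the centred kernel $S^\circ$ is itself a functional of $p^\mathcal D(x^M,x^P)$. Once that holds, any two minimal rank-$r$ factorizations of the same function must share their column and row spaces, and the probe matrices of Condition~3.ii let us read those spaces off from finitely many evaluations.

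\textbf{Step 1: $S^\circ$ is determined by the data.} As in Step~1 of the proof of Theorem~\ref{thm:identifiability}, Assumptions~\ref{ass:concept-recoverability}--\ref{ass:loglinear} give
\[
\log p^\mathcal D(x^M,x^P)=\log p(x^M)+\log p(x^P)+S(\mathbf c^M,\mathbf c^P)+\mathrm{const}.
\]
Hence the observed pointwise mutual information,
\[
\log p^\mathcal D(x^M,x^P)-\log p^\mathcal D(x^M,x_0^P)-\log p^\mathcal D(x_0^M,x^P)+\log p^\mathcal D(x_0^M,x_0^P),
\]
equals $S^\circ$ once $x_0^M,x_0^P$ are chosen to recover $\mathbf a_0,\mathbf b_0$. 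Double-differencing cancels every additive function of a single modality and the constant. So for any competing tuple inducing the same $p^\mathcal D$, its centred score $\tilde S^\circ$ equals $S^\circ$ as a function on the observation support. The subspaces in Definition~\ref{def:identifiability} are compared as function spaces on that support; I will treat this as spans of functions of $x^M$, pulled back through the recovery maps $e^M,e^P$. This is the same convention used in Theorem~\ref{lem:non-id}, and it avoids assuming any affine relation between concept systems.

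\textbf{Step 2: Uniqueness of the spans.} Suppose $\mathbf u^\top\mathbf v=\tilde{\mathbf u}^\top\tilde{\mathbf v}$, with both factorizations minimal of rank $r$. Evaluate at the protein probes $\mathbf b_1,\dots,\mathbf b_{R_P}$ to get $\mathbf V_B\mathbf u(\cdot)=\tilde{\mathbf V}_B\tilde{\mathbf u}(\cdot)$ as $\mathbb R^{R_P}$-valued functions. By Condition~3.ii, $\mathbf V_B$ has a left inverse $\mathbf V_B^+$, so
\[
\mathbf u=\mathbf V_B^+\tilde{\mathbf V}_B\,\tilde{\mathbf u}=:G\,\tilde{\mathbf u},
\]
and therefore $\mathcal U_M\subseteq\mathrm{span}\{\tilde u_\ell\}$. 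Both spans have dimension $r$: $\mathcal U_M$ by the linear independence in Condition~3.i, and $\mathrm{span}\{\tilde u_\ell\}$ because the competitor's factorization is also minimal, so its modes are independent. The inclusion is then an equality, and $G\in\mathrm{GL}(r)$. Substituting back, $\tilde{\mathbf u}^\top(G^\top\mathbf v-\tilde{\mathbf v})\equiv0$. Evaluating at $\tilde{\mathbf u}$-probes, which exist because of the independence of the $\tilde u_\ell$, gives $\tilde{\mathbf v}=G^\top\mathbf v$. This yields the protein-side span equality and the stated $\mathrm{GL}(r)$ freedom of the ordered basis.

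\textbf{Main obstacle.} The hard part is Step~1's transfer from observation space to concept space across two tuples, for two reasons. First, the competitor's reference points must correspond to the same observations; I would fix references in $x$-space, which is legitimate because the double difference is basis-free. Second, the competitor's modes must be shown to lie in $L^2(p^\mathcal D)$ and to be evaluable pointwise. This needs continuity on the open support, which I would take from Assumption~\ref{ass:positivity-coverage} together with the real-analyticity or continuity of the score. If the competitor does not satisfy an analogue of Condition~3.ii, Step~2 still goes through by the symmetric argument using only the true probes. A plain function-level rank argument — the column space of a rank-$r$ separable kernel is the range of its integral operator in $L^2(p^\mathcal D)$ — gives the same conclusion, and I would present it as a fallback.
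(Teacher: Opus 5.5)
Your proposal is correct and follows the paper's proof essentially step for step. You first centre the log-density at reference observations so that the two centred kernels agree as functions of $(x^M,x^P)$. You then use the full-rank protein-side probes, which place the sections $S^\circ(\cdot,\mathbf b_q)$ in both spans, and conclude $\mathcal U_M=\tilde{\mathcal U}$ by counting dimensions. The only minor difference is on the protein side: you substitute $\mathbf u=G\tilde{\mathbf u}$ and use independence of the $\tilde u_\ell$, which also derives the coupling $\tilde{\mathbf v}=G^\top\mathbf v$ explicitly, whereas the paper reruns the symmetric argument with $\mathbf U_A$ and then asserts the $\mathrm{GL}(r)$ relation.
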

\end{tcolorbox}

\paragraph{Discussion.} \rev{This is the appendix counterpart of Theorem~\ref{thm:main:lowrank}. The two theorem-specific conditions play different roles:}
\begin{itemize}[leftmargin=1.6em, itemsep=2pt]
    \item \rev{\textbf{Condition~3.i} is the structural low-rank condition. It weakens exact entry-wise sparsity into a mode-level prior, so the identifiable object is the pair of low-dimensional interaction subspaces rather than each individual concept pair.}
    \item \rev{\textbf{Condition~3.ii} is the sufficient-variability condition for those low-rank modes. It ensures that the selected data excite the mode-evaluation matrices with full rank, which makes the subspaces statistically recoverable.}
\end{itemize}
\rev{Together, these conditions define the subspace-identifiability regime relevant to our experiments, where the learned masks are empirically low-rank rather than exactly sparse.}

\subsubsection{Proof Sketch}
\rev{As in the preceding proofs, nuisance variables $\varepsilon^M,\varepsilon^P$ are marginalized into the observation kernels and Assumption~\ref{ass:concept-recoverability} pulls the selected observation distribution back to recovered concept coordinates. Since the selected distribution determines the binding score only up to additive functions of a single modality, the argument operates on the rank-$r$ \emph{centred} kernel $S^\circ(\mathbf c^M,\mathbf c^P)$.} If two minimal factorizations induce the same centred kernel, then evaluating it at the full-rank protein-side probes from Condition~3.ii expresses each molecule-side mode basis as a linear combination of the same finite set of kernel sections $S^\circ(\cdot,\mathbf b_q)$. Hence the two molecule-side mode families span the same subspace. The full-rank molecule-side probes give the symmetric statement for the protein-side modes. Thus the identifiable object is the pair of interaction subspaces, not an ordered basis of individual modes.

Intuitively, the rank-$r$ subspaces are invariants of the low-rank interaction kernel, even when concept-wise support is not identifiable. This is exactly the subspace-identifiability analog we seek: we identify interaction \emph{subspaces} rather than individual concept-pair entries. The full derivation is given below.

\subsubsection{Intuitive Example}
\rev{A low-rank interaction can be dense. For example, with}
\[
{\color{black}
\mathbf u=
\begin{bmatrix}
0.2\\
0.8\\
1.5
\end{bmatrix},
\qquad
\mathbf v=
\begin{bmatrix}
1.0\\
0.5\\
2.0
\end{bmatrix},
\qquad
\mathcal M=\mathbf u\mathbf v^\top
=
\begin{bmatrix}
0.2&0.1&0.4\\
0.8&0.4&1.6\\
1.5&0.75&3.0
\end{bmatrix},
}
\]
\rev{all nine concept-pair entries are nonzero, so sparse edge-level recovery is not the right target. Nevertheless, every entry is generated by one shared interaction mode: $M_{ij}=u_i v_j$. A larger block mask with three dense $3\times3$ all-ones blocks is similarly rank $3$: it is dense within blocks, but generated by three interaction modes. Theorem~\ref{thm:lowrank-recovery} captures exactly this situation: the identifiable object is the span of the dominant modes, not the individual visual block labels or every pairwise edge.}

\subsubsection{Full Proof of Theorem~\ref{thm:lowrank-recovery}}
\label{app:theory:proof-lowrank}

\begin{proof}
\rev{As in Step~1 of the proof of Theorem~\ref{thm:identifiability} (Eq.~\eqref{eq:loglinear-pullback}), two tuples inducing the same $p^\mathcal D$ have binding scores that agree, as functions of the observations, up to $\alpha(x^M)+\beta(x^P)$; centring at the reference observations corresponding to $\mathbf a_0,\mathbf b_0$ cancels these terms, so the two centred kernels agree as functions of the observations. We therefore compare mode functions of the two tuples as functions of the observations, $u_\ell\circ e^M$ and $\tilde u_\ell\circ\tilde e^M$, suppressing the composition below. The remaining question is whether the rank-$r$ interaction subspaces are uniquely determined by the kernel.}

\rev{Suppose two minimal rank-$r$ factorizations induce the same centred kernel on the data support:}
\[
{\color{black}
S^\circ(\mathbf c^M,\mathbf c^P)
=\mathbf u(\mathbf c^M)^\top\mathbf v(\mathbf c^P)
=\tilde{\mathbf u}(\mathbf c^M)^\top\tilde{\mathbf v}(\mathbf c^P).
}
\]
\rev{Let $\mathcal U=\mathrm{span}\{u_1,\ldots,u_r\}$ and $\tilde{\mathcal U}=\mathrm{span}\{\tilde u_1,\ldots,\tilde u_r\}$, and define $\mathcal V,\tilde{\mathcal V}$ analogously. For each protein-side probe $\mathbf b_q$, the kernel section}
\[
{\color{black}
S^\circ(\cdot,\mathbf b_q)
=\sum_{\ell=1}^{r}v_\ell(\mathbf b_q)u_\ell(\cdot)
=\sum_{\ell=1}^{r}\tilde v_\ell(\mathbf b_q)\tilde u_\ell(\cdot)
}
\]
\rev{belongs to both $\mathcal U$ and $\tilde{\mathcal U}$. Because $\operatorname{rank}(\mathbf V_B)=r$, these sections span $\mathcal U$. Since the same $r$-dimensional span is contained in $\tilde{\mathcal U}$ and the alternative factorization is also minimal rank-$r$, $\dim(\tilde{\mathcal U})=r$; hence $\mathcal U=\tilde{\mathcal U}$.}

\rev{The same argument with molecule-side probes $\mathbf a_m$ and $\operatorname{rank}(\mathbf U_A)=r$ shows $\mathcal V=\tilde{\mathcal V}$. Since each pair of mode functions is a basis for the same $r$-dimensional subspace, there exists an invertible matrix $A\in\mathrm{GL}(r)$ such that, on the data support, $\tilde{\mathbf u}=A^\top\mathbf u$ and $\tilde{\mathbf v}=A^{-1}\mathbf v$. This change of basis leaves $\mathbf u^\top\mathbf v$ unchanged but does not change the identified subspaces. Therefore the pair $(\mathcal U_M,\mathcal V_P)$ is identifiable, while the ordered mode basis is not.}
\end{proof}

\subsection{Summary and Discussion of the Theory}
\label{app:theory:sedge}

The results above establish a coherent theoretical picture for \ours{}. First, the structural causal model clarifies why binder-only training should be treated as a Heckman-style selection problem on a V-structure rather than as ordinary supervised learning. \rev{Second, Theorem~\ref{lem:non-id} shows that a bilinear interaction is non-identifiable under this selection mechanism even when the mask is sparse. Third, Theorem~\ref{thm:identifiability} shows that a sparse antichain interaction with a non-bilinear compatibility function restores identifiability up to the standard ambiguities of permutation, component-wise reparameterization, and joint rescaling, and Proposition~\ref{prop:nonlinear} extends this to nonlinear reparameterizations under a stronger nonlinearity condition.} \rev{Finally, Theorem~\ref{thm:lowrank-recovery} gives a subspace-identifiability result when exact sparse support is too strong: even if entry-wise mask identification fails, the dominant interaction subspace can still remain identifiable.}

\paragraph{What the three results establish together.} \rev{The theoretical development above should be read as a progression from impossibility, to component-wise identifiability under strong sparse structure, to subspace identifiability under weaker but more realistic low-rank structure.} Theorem~\ref{lem:non-id} shows that without additional constraints, binder-only observations do not determine a unique latent interaction mechanism: many dense factorizations can explain the same selected distribution. \rev{Theorem~\ref{thm:identifiability} then shows that component-wise identifiability becomes possible when the interaction mask is sparse in a combinatorially distinctive way and the selected conditional family varies sufficiently across molecule-side concepts. Finally, Theorem~\ref{thm:lowrank-recovery} shows that even when exact element-wise sparsity is relaxed, the dominant interaction modes remain identifiable at the subspace level under a low-rank kernel assumption.}

\paragraph{Why the causal structure matters.} The key source of information is the V-structure
\[
\mathbf c^M \to B \leftarrow \mathbf c^P,
\]
not an externally supplied domain index or an auxiliary supervision signal. Before conditioning on binding, the molecule-side and protein-side concepts are independent under Assumption~\ref{ass:marginal-independence}; after restricting to $B{=}1$, they become statistically coupled through the collider. The identifiability problem is therefore not solved by adding a second dataset or a separate view, but by exploiting the dependence structure already induced by binder-only selection. \rev{The sparse-support theorem uses that dependence to establish component-wise identifiability up to standard equivariances, while the low-rank theorem uses it to establish subspace identifiability of the leading interaction modes when exact support identification is too strong a requirement.}

\section{Implementation Details}
\label{app:implementation}

\paragraph{Architecture overview.} For completeness, Figure~\ref{fig:overview} summarizes the implementation-level view of \ours{}. We place it in the appendix to keep the main text focused on the theory-to-method narrative, while still providing a compact diagram of the encoder, concept-factorization, sparse-interaction, and retrieval-loss pipeline.

\begin{figure}[t]
\centering
\begin{tikzpicture}[scale=0.92, transform shape,
    concept/.style={circle, draw=blue!60, fill=blue!8, minimum size=7mm, font=\small},
    conceptmol/.style={circle, draw=blue!65, fill=blue!10, minimum size=7mm, font=\small},
    conceptpoc/.style={circle, draw=orange!70!black, fill=orange!12, minimum size=7mm, font=\small},
    conceptseq/.style={circle, draw=green!60!black, fill=green!10, minimum size=7mm, font=\small},
    indone/.style={rectangle, rounded corners=2pt, draw=red!65!black, fill=red!12, minimum width=6mm, minimum height=5mm, font=\small},
    indzero/.style={rectangle, rounded corners=2pt, draw=gray!70, fill=gray!15, minimum width=6mm, minimum height=5mm, font=\small},
    title/.style={font=\small\bfseries},
    subt/.style={font=\scriptsize, align=center},
    active/.style={green!60!black, thick, -{Stealth[length=1.8mm]}},
    blocked/.style={red!70, thick, dashed, -{Stealth[length=1.8mm]}},
    agg/.style={gray!75, thick, -{Stealth[length=1.8mm]}},
]

% Column titles
\node[title] at (-4.0, 2.6) {Pocket};
\node[title] at (0.0, 2.6) {Molecule};
\node[title] at (4.0, 2.6) {Sequence};

\node[subt] at (-4.0, 2.1) {3D Structure};
\node[subt] at (0.0, 2.1) {3D Structure};
\node[subt] at (4.0, 2.1) {Protein Sequence};

% Concepts
\node[conceptpoc] (p1) at (-5.0, 0.9) {$c_{\mathrm{poc}}^1$};
\node[conceptpoc] (p2) at (-4.0, 0.9) {$c_{\mathrm{poc}}^2$};
\node[conceptpoc] (p3) at (-3.0, 0.9) {$c_{\mathrm{poc}}^3$};

\node[conceptmol] (m1) at (-1.0, 0.9) {$c_{\mathrm{mol}}^1$};
\node[conceptmol] (m2) at (0.0, 0.9) {$c_{\mathrm{mol}}^2$};
\node[conceptmol] (m3) at (1.0, 0.9) {$c_{\mathrm{mol}}^3$};

\node[conceptseq] (s1) at (3.0, 0.9) {$c_{\mathrm{seq}}^1$};
\node[conceptseq] (s2) at (4.0, 0.9) {$c_{\mathrm{seq}}^2$};
\node[conceptseq] (s3) at (5.0, 0.9) {$c_{\mathrm{seq}}^3$};

% Pair indicators: pocket-molecule
\node[indone]  (pm1) at (-3.0, -0.6) {$1$};
\node[indone]  (pm2) at (-1.8, -0.6) {$1$};
\node[indzero] (pm3) at (-0.6, -0.6) {$0$};

% Pair indicators: molecule-sequence
\node[indone]  (ms1) at (0.6, -0.6) {$1$};
\node[indone]  (ms2) at (1.8, -0.6) {$1$};
\node[indzero] (ms3) at (3.0, -0.6) {$0$};

% Final binding variable
\node[indone, minimum width=9mm] (bind) at (0.0, -2.0) {$B{=}1$};

% Active connections: pocket-molecule
\draw[active] (p1) -- (pm1);
\draw[active] (m1) -- (pm1);
\draw[active] (p2) -- (pm2);
\draw[active] (m2) -- (pm2);
\draw[blocked] (p3) -- (pm3);
\draw[blocked] (m3) -- (pm3);

% Active connections: molecule-sequence
\draw[active] (m1) -- (ms1);
\draw[active] (s1) -- (ms1);
\draw[active] (m2) -- (ms2);
\draw[active] (s2) -- (ms2);
\draw[blocked] (m3) -- (ms3);
\draw[blocked] (s3) -- (ms3);

% Aggregation to final binding
\draw[agg] (pm1) -- (bind);
\draw[agg] (pm2) -- (bind);
\draw[agg] (pm3) -- (bind);
\draw[agg] (ms1) -- (bind);
\draw[agg] (ms2) -- (bind);
\draw[agg] (ms3) -- (bind);
\end{tikzpicture}
\caption{Three-view concept-level schematic of \ours{}. The inputs are pocket structure, molecule structure, and protein sequence. Each view is decomposed into modality-specific concepts $c_{\mathrm{poc}}$, $c_{\mathrm{mol}}$, and $c_{\mathrm{seq}}$. Local binary indicators summarize whether a concept pair is active ($1$) or blocked ($0$). We show two interaction branches, pocket-molecule and molecule-sequence, which are then aggregated into the final pair-level binding event $B$. Green solid arrows denote active local interactions, red dashed arrows denote blocked interactions, and gray arrows denote aggregation to the overall binding variable.}
\label{fig:overview}
\end{figure}

\subsection{Relation Between the Two-Modal Theory and the Three-Modal Implementation}
\label{app:two_three_impl}

\rev{The theory in Sections~\ref{sec:theory} and~\ref{app:theory} is deliberately written for the simplest protein-molecule setting with two modalities: one molecule view and one protein view. The implementation, however, uses three modalities: molecule structure, pocket structure, and protein sequence. This does not contradict the theory; rather, it instantiates the same two-modal idea twice. Concretely, the model contains two retrieval branches, molecule-pocket and molecule-sequence, with score matrices $\mathbf U^{\mathrm{poc,mol}}$ and $\mathbf U^{\mathrm{seq,mol}}$. Each branch is a protein-molecule pairing of exactly the kind analyzed in the theory, and each branch is equipped with its own constrained interaction bottleneck. The final objective simply sums the two pairwise retrieval losses and their associated regularization terms.}

\rev{For this reason, the main theoretical claims still apply at the level at which they were intended: they describe why dense holistic cross-modal interaction is non-identifiable, why sparse interaction can be component-wise identifiable, and why low-rank interaction can be subspace-identifiable for each protein-molecule branch individually.} \rev{The three-stream architecture should therefore be read as a multi-view extension of the same pairwise principle rather than as a different causal problem. The pocket branch and the sequence branch provide two complementary protein views, but neither changes the core structural role of the protein-molecule interaction analyzed in the theory. Put differently, the implementation is richer than the theory in terms of modalities, but not different in its fundamental interaction pattern: it is still a sum of two protein-molecule pairing problems, each governed by the same sparse/low-rank inductive bias.}

\subsection{\texorpdfstring{\ours{}$_{\mathrm{SP}}$}{CausalBind-SP}: Sparse Concept-Pair Implementation}
\label{app:impl_sp}

\rev{\ours{}$_{\mathrm{SP}}$ is the primary implementation used in the main text. It keeps an explicit $K\times K$ concept-pair mask for each protein-molecule branch and therefore most directly matches the sparse interaction form in Theorem~\ref{thm:main:ident}. The learned dense mask can also enter the low-rank regime analyzed in Theorem~\ref{thm:main:lowrank}, as diagnosed by the singular-value analyses in App.~\ref{app:v2_mask_stats}. Unless stated otherwise, \ours{} in the main paper refers to \ours{}$_{\mathrm{SP}}$.}

\subsubsection{Technical Supplement for the \texorpdfstring{\ours{}$_{\mathrm{SP}}$}{CausalBind-SP} Implementation}

\rev{This subsection complements Section~\ref{sec:method} at the design level. The emphasis here is on how the abstract ingredients in the main text are instantiated in practice: how concept extraction is implemented, how the sparse mask is parameterized and applied, and how the inherited geometric losses fit into the final objective.}

\paragraph{\textit{ConceptExtractor} and \textit{SparseMask}.} \rev{For the \emph{main sparse concept-pair operating point} used in the paper, we set $K{=}128$, $d_c{=}1024$, $L{=}4$ Perceiver layers, and $8$ attention heads per layer; the strongest alternative concept-level settings in the sweeps vary one of these quantities at a time. The default mask type is \texttt{tanh\_plus\_1}, the hard-gating threshold is $0.5$, and the effective mask is}
\[
\mathbf Z=\tanh(\widehat{\mathcal{M}})+1,
\]
\rev{followed by hard thresholding so entries at or below the threshold are set to zero exactly in the forward pass. Gradients through the hard gate use a straight-through convention, and the sparsity penalty is applied to the continuous activated masks $\mathbf Z$ rather than to the raw mask logits. The main-text operating point uses $\lambda_{\mathrm{spa}}=10^{-4}$, while the sweep additionally studies $\lambda\in\{0,10^{-6},10^{-5},10^{-4},10^{-3},10^{-2},10^{-1},1\}$. In all cases, the branchwise penalties for the molecule-pocket and molecule-sequence masks are summed and then scaled by the chosen $\lambda_{\mathrm{spa}}$.}

\paragraph{Key implementation equations.} \rev{For each branch $(a,b)\in\{(\mathrm{mol},\mathrm{poc}),(\mathrm{mol},\mathrm{seq})\}$, the sparse concept-pair implementation can be summarized as}
\begin{equation}
{\color{black}
\begin{aligned}
\mathbf C_v &= F_v(x_v)\in\mathbb R^{K\times d_c}, \\
\mathbf Z_{a,b}^{\mathrm{SP}}
&=\tanh(\widehat{\mathcal M}_{a,b})+1,\qquad
\mathcal M_{a,b}^{\mathrm{SP}}
= H_{\tau}^{\mathrm{ST}}\!\left(\mathbf Z_{a,b}^{\mathrm{SP}}\right),
\qquad
\widehat{\mathcal M}_{a,b}\in\mathbb R^{K\times K}, \\
\mathbf A_{a,b}[i,j]
&= \phi(\mathbf C_a[i,:],\mathbf C_b[j,:])
=\langle \mathbf C_a[i,:],\mathbf C_b[j,:]\rangle, \\
u_{a,b}^{\mathrm{SP}}
&= \mathrm{Agg}\!\left(\mathcal M_{a,b}^{\mathrm{SP}}\odot \mathbf A_{a,b}\right).
\end{aligned}
}
\label{eq:app-sp-forward}
\end{equation}
\rev{Here $F_v$ denotes the backbone-plus-concept-extractor map for modality $v$, $\tau=0.5$ is the forward hard-gating threshold, $\mathrm{Agg}(\mathbf B)=K^{-1}\sum_{i,j}B_{ij}$ in the implementation, and $u_{a,b}^{\mathrm{SP}}$ is the branch-level score entering the contrastive objective.}

\paragraph{Detailed form of $\mathcal{L}_{\mathrm{cone}}$ and $\mathcal{L}_{\mathrm{reg}}$.} In the main text we keep the HypSeek-inherited geometric terms abstract to avoid overloading Section~\ref{sec:method}. Here we make them explicit. Consider assay $i$ with pocket embedding $\mathbf h_{\mathrm{poc},i}$ and candidate molecule embeddings $\{\mathbf h_{\mathrm{mol},ij}\}_{j\in\mathcal L_i}$, where $\mathcal L_i$ is the set of molecules paired with assay $i$ in the training batch. Let $N_\ell=\sum_i |\mathcal L_i|$ be the total number of pocket-molecule pairs. Following HypSeek~\citep{hypseek2025}, the Lorentz distance between pocket $i$ and molecule $j$ is denoted $d_{ij}$, the exterior angle at the pocket is $\phi_{ij}$, and the pocket half-aperture is $\omega_i$. Affinity values are discretized into buckets $b_{ij}$, from which the bucket-specific radial and angular limits are derived as
\[
r_{ij}=r_0+b_{ij}\Delta r,\qquad
\eta_{ij}=\eta_0-b_{ij}\Delta\eta.
\]
The cone-hierarchy loss then penalizes violations in both radial distance and angular admissibility:
\begin{equation}
\begin{aligned}
L_{\mathrm{rad}}
&=
\frac{1}{\sqrt{N_\ell}}
\sum_{i,j}
\max\!\bigl(d_{ij}-r_{ij},\,0\bigr),
\\
L_{\mathrm{ang}}
&=
\frac{1}{\sqrt{N_\ell}}
\sum_{i,j}
\max\!\bigl(\phi_{ij}-\eta_{ij}\omega_i,\,0\bigr),
\\
\mathcal L_{\mathrm{cone}}
&=
\lambda_{\mathrm{rad}}L_{\mathrm{rad}}
+
\lambda_{\mathrm{ang}}L_{\mathrm{ang}}.
\end{aligned}
\end{equation}
\rev{The $\sqrt{N_\ell}$ normalization is inherited unchanged from HypSeek; it is a code-level loss scaling convention rather than a new statistical average introduced by \ours{}.}

The additional regularization term in our implementation inherits two further components from HypSeek. Concretely, the $\mathcal L_{\mathrm{reg}}$ appearing in the main text is exactly the weighted sum of the two terms below:
\[
\mathcal L_{\mathrm{reg}}
=
\lambda_{\mathrm{angu}}R_{\mathrm{ang}}
+
\lambda_{\mathrm{het}}R_{\mathrm{het}}.
\]
The first term is an \emph{angular margin regularizer}. It prevents trivial angular collapse by requiring each molecule to remain at least a fixed margin $m_{\mathrm{margin}}>0$ beyond its cone boundary:
\begin{equation}
R_{\mathrm{ang}}
=
\frac{1}{\sqrt{N_\ell}}
\sum_{i,j}
\max\!\bigl(\phi_{ij}-\eta_{ij}\omega_i+m_{\mathrm{margin}},\,0\bigr).
\end{equation}
Here the same $\phi_{ij}$, $\eta_{ij}$, and $\omega_i$ as above are reused; the only new notation is the constant angular margin $m_{\mathrm{margin}}$, which is set to $0.15$ in the current code.

The second term is an \emph{intra-assay heterogeneity regularizer}. Its role is to encourage the model to assign larger probability mass to the stronger active molecules within the same assay, rather than treating all actives as equally informative. For assay $i$, let $L_i = |\mathcal L_i|$ be the number of paired molecules, and let $d_i=(d_{i1},\ldots,d_{iL_i})$ be their pocket-molecule distances. We define the within-assay rank of molecule $j$ by
\[
\operatorname{rank}_{ij}
=
1+\sum_{j'=1}^{L_i}\mathbf{1}[d_{ij'}<d_{ij}],
\]
so smaller distance implies better rank. HypSeek then assigns a rank-based weight
\[
w_{ij}
=
\exp\!\Bigl(-\beta \,\frac{\operatorname{rank}_{ij}-1}{L_i}\Bigr),
\]
where $\beta$ controls how strongly higher-ranked molecules are emphasized (the implementation uses $\beta=80.5$). Let $\ell_{ij}$ denote the molecule-retrieval logit for assay $i$ and molecule $j$ within that assay-specific subset, and define the within-assay softmax probability
\[
p_{ij}
=
\frac{\exp(\ell_{ij})}{\sum_{j'=1}^{L_i}\exp(\ell_{ij'})}.
\]
Let $v_{ij}$ be the activity or affinity annotation of molecule $j$ in assay $i$, and let $v_{\mathrm{th}}$ be the HypSeek preprocessing threshold used to mark active compounds. The implementation keeps only molecules with $v_{ij}<v_{\mathrm{th}}$ (equivalently, the positive subset under the assay's activity convention). Writing $C$ for the number of assays that contain at least one such active molecule, the heterogeneity regularizer is
\begin{equation}
R_{\mathrm{het}}
=
\frac{1}{\max(C,1)}
\sum_i
\sum_{\{j:\,v_{ij}<v_{\mathrm{th}}\}}
-\,w_{ij}\log p_{ij}.
\end{equation}
Intuitively, $R_{\mathrm{het}}$ is a rank-weighted within-assay negative log-likelihood: stronger actives receive larger weights, so the model is encouraged to separate fine-grained affinity variation inside each pocket-specific candidate set.

\paragraph{Relation to our Section~\ref{sec:method} notation.} In the current paper, the retrieval loss in Section~\ref{subsec:training} is written in terms of batch-level score matrices $\mathbf U^{\mathrm{poc,mol}}$ and $\mathbf U^{\mathrm{seq,mol}}$. The HypSeek terms above operate on the same training pairs after they have been lifted into the hyperbolic branch-specific geometry. In other words, $\mathcal L_{\mathrm{cone}}$ and $\mathcal L_{\mathrm{reg}}$ do not replace the retrieval objective; they act as geometric regularizers on top of it. This is why the main text groups them separately from $\mathcal L_{\mathrm{cont}}$, while the appendix spells out the exact radial, angular, and within-assay ranking quantities used in the inherited implementation.

\subsubsection{Concrete Parameter Assignments and Experimental Setting}

\rev{We now turn to the code-level side of reproducibility for \ours{}$_{\mathrm{SP}}$: what values are assigned to the key hyperparameters, how the inherited loss weights are instantiated in code, and what training/evaluation settings are shared across the reported runs.}

\paragraph{Training setup and optimization.} We build on the HypSeek code base~\citep{hypseek2025} with pretrained Uni-Mol encoders for molecules and pockets and an ESM-2 encoder for protein sequence. Training follows the same three-branch retrieval pipeline on the \texttt{blended ChEMBL+BindingDB} corpus \rev{($18{,}316$ assays after removing assays that overlap the FEP benchmarks and every assay whose target UniProt ID appears in DUD-E, LIT-PCBA, or DEKOIS 2.0; as in the LigUnity and HypSeek protocol, the PDBBind structural pairs are not filtered by UniProt ID, so some PDBBind complexes share targets with these benchmarks)}, with CASF used for validation when validation-selected protocols are needed. Unless noted otherwise, all reported runs train for $50$ epochs on $4\times$A100-80GB GPUs with per-GPU batch size $24$ (distributed world size $4$, effective global batch size $96$), Adam optimizer, learning rate $10^{-4}$, polynomial decay, warmup ratio $0.06$, weight decay $0$, gradient clipping at $1.0$, and fp16 mixed precision. In our cluster logs and planning estimates, a typical 50-epoch run takes several hours on $4\times$A100, so full sweeps scale roughly linearly with the number of variants. The default seed is $1$; additional seeds are only introduced in the dedicated robustness checks. All compared retrieval runs (HypSeek, concept-level ablations, concept-level sweeps, and embedding-level variants) share this optimization setup unless a subsection explicitly states otherwise.

\paragraph{Inference and storage.} \rev{\ours{} remains a retrieval-style dual-tower model: molecule-side and protein-side concept features are computed independently, and the fixed branch masks are applied before producing score matrices. Thus library molecules can still be precomputed rather than passed through a joint protein-molecule network for every target. The explicit $K\times K$ \ours{}$_{\mathrm{SP}}$ masks add a small fixed branch computation, while \ours{}$_{\mathrm{LR}}$ reduces mask parameters from $K^2$ to $2Kr$ per branch. For billion-scale deployment one would typically combine these representations with approximate-nearest-neighbor indexing or distillation; the present experiments evaluate ranking quality rather than an optimized production index.}

\paragraph{Loss weights and the relation between paper notation and code notation.} In Section~\ref{sec:method}, we write the training objective using a compact notation with a sparsity weight $\lambda_{\mathrm{spa}}$ and a grouped geometric term $\lambda_{\mathrm{reg}}\mathcal{L}_{\mathrm{reg}}$. In the code, however, the inherited HypSeek regularization is implemented through several explicit coefficients rather than through a single scalar $\lambda_{\mathrm{reg}}$. Concretely, the cone-hierarchy term uses $\gamma_{\mathrm{chl}}=0.1$ together with $\lambda_{\mathrm{rad}}=0.5$ and $\lambda_{\mathrm{ang}}=0.5$, while the additional regularization bundle uses $\lambda_{\mathrm{angu}}=0.1$ and $\lambda_{\mathrm{het}}=0.1$. Thus, the paper's $\mathcal{L}_{\mathrm{reg}}$ should be read as a compact umbrella notation for the inherited angular and heterogeneity regularizers, while $\lambda_{\mathrm{reg}}$ is not a single independently tuned scalar in the current implementation. The only new coefficient introduced by \ours{} itself is $\lambda_{\mathrm{spa}}$, whose values are reported explicitly in the corresponding experiment tables.

\paragraph{Evaluation protocol.} We evaluate on DUD-E (102 targets) and LIT-PCBA (15 targets). For each target we rank all molecules in the test set and compute AUROC, BEDROC ($\alpha=80.5$), and EF@1\%. Per-target metrics are macro-averaged across targets.

\paragraph{Evaluation metrics.} \textbf{AUROC} measures overall ranking quality across the full ranked list: it is the probability that a randomly chosen active is ranked above a randomly chosen inactive, so it reflects global discrimination rather than early retrieval. \textbf{BEDROC$_{80.5}$} is an early-enrichment metric that exponentially upweights the top of the ranked list; the parameter $\alpha=80.5$ is the standard virtual-screening setting that places roughly $80\%$ of the weight within the top $\sim 2\%$ of ranks, making the metric highly sensitive to whether true actives appear near the front. \textbf{EF@1\%} (enrichment factor at the top $1\%$) compares the number of actives retrieved in the top $1\%$ of the ranked list against the number expected from random ranking. In practical screening, BEDROC$_{80.5}$ and EF@1\% are often more informative than AUROC because only a small top-ranked subset is typically advanced to experimental follow-up.

\subsection{\texorpdfstring{\ours{}$_{\mathrm{LR}}$}{CausalBind-LR}: Low-Rank Concept-Basis Implementation}
\label{app:impl_lr}

\rev{\ours{}$_{\mathrm{LR}}$ keeps the same concept extractor, retrieval branches, and losses as \ours{}$_{\mathrm{SP}}$, but replaces the unconstrained $K\times K$ mask logits with an explicit rank-$r$ factorization in the concept basis. It is therefore the direct implementation of the low-rank interaction structure in Theorem~\ref{thm:main:lowrank}.}

\subsubsection{Technical Supplement for the \texorpdfstring{\ours{}$_{\mathrm{LR}}$}{CausalBind-LR} Implementation}

\rev{For each branch $(a,b)\in\{(\mathrm{mol},\mathrm{poc}),(\mathrm{mol},\mathrm{seq})\}$, \ours{}$_{\mathrm{LR}}$ parameterizes the raw mask as}
\[
\widehat{\mathcal M}_{a,b} = \mathbf U_{a,b}\mathbf V_{a,b}^{\top},
\qquad
\mathbf U_{a,b},\mathbf V_{a,b}\in\mathbb R^{K\times r},
\]
\rev{and then applies the same activation and hard gate as \ours{}$_{\mathrm{SP}}$:}
\[
\mathbf Z_{a,b}^{\mathrm{LR}}=\tanh(\mathbf U_{a,b}\mathbf V_{a,b}^{\top})+1,\qquad
\mathcal M_{a,b}^{\mathrm{LR}}=H_{\tau}^{\mathrm{ST}}\!\left(\mathbf Z_{a,b}^{\mathrm{LR}}\right).
\]
\rev{The raw mask logit has rank at most $r$ and remains a concept-axis operator: it modulates the $K$ concept tokens while preserving each token's $d_c$-dimensional feature vector. The monotone activation and hard gate can change exact algebraic rank, but the trainable interaction directions are still controlled by the $2Kr$ low-rank factors. This is the key distinction from an embedding-space rank constraint, which would project pooled features into a low-dimensional feature subspace and can create a much stronger information bottleneck.}

\paragraph{Key implementation equations.} \rev{Equivalently, the full \ours{}$_{\mathrm{LR}}$ branch computation is}
\begin{equation}
{\color{black}
\begin{aligned}
\widehat{\mathcal M}_{a,b}^{\mathrm{LR}}
&=\mathbf U_{a,b}\mathbf V_{a,b}^{\top}
=\sum_{q=1}^{r}\mathbf u_{a,b}^{(q)}(\mathbf v_{a,b}^{(q)})^{\top}, \\
\mathbf Z_{a,b}^{\mathrm{LR}}
&=\tanh(\widehat{\mathcal M}_{a,b}^{\mathrm{LR}})+1,\\
\mathcal M_{a,b}^{\mathrm{LR}}
&=H_{\tau}^{\mathrm{ST}}\!\left(\mathbf Z_{a,b}^{\mathrm{LR}}\right), \\
\mathbf A_{a,b}[i,j]
&=\phi(\mathbf C_a[i,:],\mathbf C_b[j,:]), \\
u_{a,b}^{\mathrm{LR}}
&=\mathrm{Agg}\!\left(\mathcal M_{a,b}^{\mathrm{LR}}\odot \mathbf A_{a,b}\right).
\end{aligned}
}
\label{eq:app-lr-forward}
\end{equation}
\rev{Thus \ours{}$_{\mathrm{LR}}$ differs from \ours{}$_{\mathrm{SP}}$ only in the parameterization of the same concept-axis mask: the dense mask logits $\widehat{\mathcal M}_{a,b}$ are replaced by a rank-$r$ factorization. The defining constraint in \ours{}$_{\mathrm{LR}}$ is the rank constraint itself. The default \ours{}$_{\mathrm{LR}}$ setting uses $\lambda_{\mathrm{spa}}^{\mathrm{LR}}=0$, while the optional $+L_{\mathrm{spa}}$ diagnostic adds an $L_1$ penalty on the continuous activated low-rank masks without changing the rank-$r$ logit factorization.}

\paragraph{\rev{Executed loss.}} \rev{The training loss used for the default \ours{}$_{\mathrm{LR}}$ is the same code-level objective as the main implementation, with the low-rank masks substituted into the forward pass and $\lambda_{\mathrm{spa}}^{\mathrm{LR}}=0$. Concretely, the code applies the HypSeek branch retrieval loss to the molecule-pocket and molecule-sequence score matrices and then adds the inherited geometric terms; the optional $+L_{\mathrm{spa}}$ diagnostic additionally includes the mask penalty:}
\begin{equation}
{\color{black}
\begin{aligned}
\mathcal L_{\mathrm{exec}}^{\mathrm{LR}}
&=
\alpha_{\mathrm{poc}}\,
\mathcal L_{\mathrm{hcc}}\!\left(\mathbf U^{\mathrm{poc,mol},\mathrm{LR}}\right)
\;+\;
\alpha_{\mathrm{prot}}\,
\mathcal L_{\mathrm{hcc}}\!\left(\mathbf U^{\mathrm{seq,mol},\mathrm{LR}}\right) \\
&\quad
+\gamma_{\mathrm{chl}}\mathcal L_{\mathrm{cone}}
+\lambda_{\mathrm{angu}}R_{\mathrm{ang}}
+\lambda_{\mathrm{het}}R_{\mathrm{het}}
+\lambda_{\mathrm{spa}}^{\mathrm{LR}}\mathcal L_{\mathrm{spa}}^{\mathrm{LR}},\qquad
\lambda_{\mathrm{spa}}^{\mathrm{LR}}=0\ \text{by default},\\
\mathcal L_{\mathrm{hcc}}(\mathbf U)
&=
\mathcal L_{\mathrm{pocket}}(\mathbf U)
+\mathcal L_{\mathrm{mol}}(\mathbf U)
+\mathcal L_{\mathrm{rank}}(\mathbf U),\\
\mathcal L_{\mathrm{spa}}^{\mathrm{LR}}
&=
\|\mathbf Z_{\mathrm{mol}}^{\mathrm{LR}}\|_1
+\|\mathbf Z_{\mathrm{poc}}^{\mathrm{LR}}\|_1
+\|\mathbf Z_{\mathrm{seq}}^{\mathrm{LR}}\|_1 .
\end{aligned}
}
\label{eq:app-lr-loss}
\end{equation}
\rev{Here $\mathcal L_{\mathrm{hcc}}$ is the actual branch loss returned by \texttt{compute\_hcc\_pair\_official\_style}: a pocket-to-molecule term, a molecule-to-pocket term, and an intra-assay ranking term. The terms $\mathcal L_{\mathrm{cone}}$, $R_{\mathrm{ang}}$, and $R_{\mathrm{het}}$ are exactly the geometric regularizers defined above for \ours{}$_{\mathrm{SP}}$. At the code level, \texttt{SparseMask} modules are attached to the molecule, pocket, and sequence streams; the branch-level notation in Eq.~\eqref{eq:app-lr-forward} is the paper-level interaction view induced by these stream masks.}

\subsubsection{Concrete Parameter Assignments and Experimental Setting}

\rev{Unless otherwise stated, \ours{}$_{\mathrm{LR}}$ uses the same backbone, optimizer, training set, evaluation protocol, and concept-extractor configuration as the main \ours{}$_{\mathrm{SP}}$ run: $K{=}128$, $d_c{=}1024$, $L{=}4$, $8$ attention heads, and threshold $\tau=0.5$. The representative direct low-rank setting uses $r=1$ and $\lambda_{\mathrm{spa}}^{\mathrm{LR}}=0$; rank scans additionally use $r\in\{1,4,8,16,32,64\}$. The optional $+L_{\mathrm{spa}}$ diagnostic uses $\lambda_{\mathrm{spa}}^{\mathrm{LR}}=10^{-4}$. For $K{=}128$, the rank-$1$ mask uses $2Kr=256$ mask parameters per branch, compared with $K^2=16{,}384$ for the dense concept-pair mask.}

\subsection{\texorpdfstring{\ours{}$_{\mathrm{EMB}}$}{CausalBind-EMB}: Embedding-Level Masking}
\label{app:impl_emb}
\label{app:v1_appendix}

\rev{\ours{}$_{\mathrm{EMB}}$ is an embedding-level alternative that applies the sparse/low-rank interaction idea after pooling. It is useful as an empirical robustness check, but it is not the primary theory-aligned method because its mask entries do not correspond to explicit concept pairs.}

\subsubsection{Technical Supplement for the \texorpdfstring{\ours{}$_{\mathrm{EMB}}$}{CausalBind-EMB} Implementation}

\rev{The embedding-level variant places the learnable interaction mask after mean-pooling, rather than directly on the concept axis. Concretely, \ours{}$_{\mathrm{SP}}$ first extracts $K$ causal concepts per modality, applies the mask on the concept axis, and then pools; by contrast, \ours{}$_{\mathrm{EMB}}$ first obtains pooled modality embeddings and then applies a mask $\mathcal{M}\in\mathbb{R}^{d_c\times d_c}$ in the shared embedding space. The two variants therefore share the same high-level retrieval pipeline and the same sparsity-controlled interaction idea, but differ in where the bottleneck is imposed: before pooling for \ours{}$_{\mathrm{SP}}$, versus after pooling for \ours{}$_{\mathrm{EMB}}$.}

\rev{This difference matters conceptually. \ours{}$_{\mathrm{SP}}$ is more directly aligned with the theory in Section~\ref{sec:theory}, because each entry of the mask can be interpreted as a relation between two concepts. \ours{}$_{\mathrm{EMB}}$ is less interpretable in that sense, but it is architecturally simpler and still tests a related hypothesis: dense holistic interaction is unnecessary, while a constrained pooled-space interaction can already improve retrieval.}

\paragraph{Key implementation equations.} \rev{The embedding-level variant uses the same concept extractor, but moves the mask to the pooled feature axis:}
\begin{equation}
{\color{black}
\begin{aligned}
\mathbf C_v &= F_v(x_v)\in\mathbb R^{K\times d_c},
\qquad
\bar{\mathbf h}_v
=\mathrm{Pool}(\mathbf C_v)
=\frac{1}{K}\sum_{i=1}^{K}\mathbf C_v[i,:], \\
\mathbf Z_v^{\mathrm{EMB}}
&=\tanh(\widehat{\mathcal M}_{v})+1,\qquad
\mathcal M_v^{\mathrm{EMB}}
=H_{\tau}^{\mathrm{ST}}\!\left(\mathbf Z_v^{\mathrm{EMB}}\right),
\qquad
\widehat{\mathcal M}_{v}\in\mathbb R^{d_c\times d_c}, \\
\widetilde{\mathbf h}_v
&=\bar{\mathbf h}_v\mathcal M_v^{\mathrm{EMB}}, \qquad
\mathbf z_v=P_v(\widetilde{\mathbf h}_v), \\
u_{a,b}^{\mathrm{EMB}}
&=\operatorname{sim}(\mathbf z_a,\mathbf z_b).
\end{aligned}
}
\label{eq:app-emb-forward}
\end{equation}
\rev{Here $P_v$ is the modality-specific projection head and $\operatorname{sim}(\cdot,\cdot)$ denotes the branch retrieval similarity. Unlike \ours{}$_{\mathrm{SP}}$ and \ours{}$_{\mathrm{LR}}$, the mask in \ours{}$_{\mathrm{EMB}}$ acts on pooled $d_c$-dimensional features, so its entries are not concept-pair interactions.}

\paragraph{\rev{Executed loss.}} \rev{The embedding-level implementation uses the same executed objective, but its sparsity term is applied to the pooled embedding-space masks rather than to concept-axis masks. The score matrices are formed from the embedding-level branch scores $u_{a,b}^{\mathrm{EMB}}$ in Eq.~\eqref{eq:app-emb-forward}:}
\begin{equation}
{\color{black}
\begin{aligned}
\mathcal L_{\mathrm{exec}}^{\mathrm{EMB}}
&=
\alpha_{\mathrm{poc}}\,
\mathcal L_{\mathrm{hcc}}\!\left(\mathbf U^{\mathrm{poc,mol},\mathrm{EMB}}\right)
\;+\;
\alpha_{\mathrm{prot}}\,
\mathcal L_{\mathrm{hcc}}\!\left(\mathbf U^{\mathrm{seq,mol},\mathrm{EMB}}\right) \\
&\quad
+\gamma_{\mathrm{chl}}\mathcal L_{\mathrm{cone}}
+\lambda_{\mathrm{angu}}R_{\mathrm{ang}}
+\lambda_{\mathrm{het}}R_{\mathrm{het}}
+\lambda_{\mathrm{spa}}\mathcal L_{\mathrm{spa}}^{\mathrm{EMB}},\\
\mathcal L_{\mathrm{hcc}}(\mathbf U)
&=
\mathcal L_{\mathrm{pocket}}(\mathbf U)
+\mathcal L_{\mathrm{mol}}(\mathbf U)
+\mathcal L_{\mathrm{rank}}(\mathbf U),\\
\mathcal L_{\mathrm{spa}}^{\mathrm{EMB}}
&=
\|\mathbf Z_{\mathrm{mol}}^{\mathrm{EMB}}\|_1
+\|\mathbf Z_{\mathrm{poc}}^{\mathrm{EMB}}\|_1
+\|\mathbf Z_{\mathrm{seq}}^{\mathrm{EMB}}\|_1 .
\end{aligned}
}
\label{eq:app-emb-loss}
\end{equation}
\rev{Thus the difference between \ours{}$_{\mathrm{EMB}}$ and the concept-basis variants is not the supervision signal: all three use the same retrieval, cone, angular, and heterogeneity terms. The difference is where the sparsity-controlled mask enters the forward computation and therefore what object $\mathcal L_{\mathrm{spa}}$ regularizes.}

\subsubsection{Concrete Parameter Assignments and Experimental Setting}

\rev{The representative \ours{}$_{\mathrm{EMB}}$ setting uses $K{=}64$, $d_c{=}256$, $L{=}4$, $8$ attention heads, mask type \texttt{tanh\_plus\_1}, threshold $\tau=0.5$, and $\lambda_{\mathrm{spa}}=10^{-2}$. It shares the same training corpus, optimizer, GPU setup, checkpoint protocols, and evaluation metrics as \ours{}$_{\mathrm{SP}}$. In this variant the mask has size $d_c\times d_c$ rather than $K\times K$; for the representative setting, this gives $256^2=65{,}536$ mask parameters per stream.}

\subsection{Comparison Across Three Implementations}
\label{app:impl_compare}

\rev{Table~\ref{tab:impl_compare_three} summarizes the relationship among the three implementations. The main text prioritizes \ours{}$_{\mathrm{SP}}$ because it gives the cleanest sparse concept-pair story and keeps the method section compact. \ours{}$_{\mathrm{LR}}$ is the most direct architectural instantiation of the low-rank theorem, while \ours{}$_{\mathrm{EMB}}$ serves as a pooled-basis alternative that is empirically useful but theoretically less direct.}

\begin{table}[h]
\caption{Comparison of the three \ours{} implementations. The main paper presents \ours{}$_{\mathrm{SP}}$; \ours{}$_{\mathrm{LR}}$ and \ours{}$_{\mathrm{EMB}}$ are appendix variants.}
\label{tab:impl_compare_three}
\centering\small
\resizebox{\textwidth}{!}{%
\begin{tabular}{p{2.5cm}|p{3.0cm}|p{3.1cm}|p{3.2cm}|p{3.4cm}}
\toprule
\textbf{Variant} & \textbf{Mask object} & \textbf{Forward bottleneck} & \textbf{Closest theory link} & \textbf{Role in the paper} \\
\midrule
\ours{}$_{\mathrm{SP}}$ & Dense $K\times K$ concept-pair mask with $L_1$ sparsity and hard thresholding. & Concept axis before pooling; each active entry selects a protein-molecule concept pair. & Direct sparse-support form of Theorem~\ref{thm:main:ident}; learned masks can also enter the low-rank regime of Theorem~\ref{thm:main:lowrank}. & Primary method in the main text. \\
\midrule
\ours{}$_{\mathrm{LR}}$ & Rank-$r$ concept-basis mask $\mathbf U\mathbf V^\top$ with $\mathbf U,\mathbf V\in\mathbb R^{K\times r}$. & Concept axis before pooling; rank controls the number of interaction subspaces. & Direct implementation of the low-rank subspace-identifiability structure in Theorem~\ref{thm:main:lowrank}. & Appendix variant for explicit low-rank implementation. \\
\midrule
\ours{}$_{\mathrm{EMB}}$ & Dense $d_c\times d_c$ mask in pooled embedding space. & After mean-pooling; the mask acts on pooled features rather than concept pairs. & Compatible with a compressed interaction view, but not a direct instantiation of concept-pair sparse support. & Appendix alternative and robustness check. \\
\bottomrule
\end{tabular}}
\end{table}

\rev{The key implementation distinction is therefore the axis on which the constraint acts. \ours{}$_{\mathrm{SP}}$ and \ours{}$_{\mathrm{LR}}$ constrain the concept axis and preserve per-concept feature vectors, whereas \ours{}$_{\mathrm{EMB}}$ constrains the pooled feature axis after the $K$ concepts have already been averaged. This is why \ours{}$_{\mathrm{SP}}$ and \ours{}$_{\mathrm{LR}}$ have a cleaner theory-method correspondence, while \ours{}$_{\mathrm{EMB}}$ is best read as an auxiliary empirical variant.}

\section{Experimental Analysis}
\label{app:analysis}

\rev{This section revisits the main-text experiments with more explanation and details. It should be read as an extension of the main experimental section rather than as a separate experimental theme. The goal here is to unpack the evidence behind the main claims, make the evaluation protocol fully explicit, and separate several empirical questions that are necessarily compressed in the main paper.}

\rev{The subsection layout follows the three implementations introduced in App.~\ref{app:implementation}. App.~\ref{app:sp_results} collects the results for \ours{}$_{\mathrm{SP}}$, the sparse concept-pair implementation used in the main text. App.~\ref{app:lr_results} reports \ours{}$_{\mathrm{LR}}$, the explicit low-rank concept-basis implementation. App.~\ref{app:emb_results} reports \ours{}$_{\mathrm{EMB}}$, the embedding-level alternative. We then synthesize the three implementations and close with the theory--practice discussion.}

\subsection{\texorpdfstring{\ours{}$_{\mathrm{SP}}$}{CausalBind-SP}: Sparse Concept-Pair Results}
\label{app:sp_results}

\subsubsection{Extended Analyses for the Main Experiments}
\label{app:full_results}

\paragraph{Why does causal concept extraction help?} The main paper argues that dense holistic alignment is too coarse a structural prior for protein-molecule binding, because binding is typically driven by a small number of local compatibilities rather than by uniform all-to-all agreement. The role of causal concept extraction is to operationalize this inductive bias. The concept extractor encourages each modality to decompose into a smaller set of binding-relevant factors, and the sparse interaction mask then decides which cross-modal pairs matter for prediction. In the scratch-training setting, this provides a strong bias toward disentangled structure. In the pretrained setting considered in our main experiments, it plays a more selective role: rather than learning chemistry and protein semantics from nothing, the module filters a pretrained representation and suppresses features that are predictive but not genuinely binding-relevant.

\paragraph{Why do the gains concentrate on early enrichment?} Across both experimental settings, the improvements in BEDROC$_{80.5}$ and EF@1\% are often larger than the improvements in AUROC. This pattern is important rather than incidental. AUROC measures global ranking quality over the entire candidate list, whereas BEDROC$_{80.5}$ and EF@1\% emphasize whether true actives are pushed into the very top of the ranked list, which is the practically important regime in virtual screening. Our interpretation is that the sparse interaction prior is especially helpful when the model must decide which few candidates deserve the highest confidence, rather than when it is merely asked to produce a smooth global ordering.

\paragraph{Why is LIT-PCBA the stricter test?} The larger improvements on LIT-PCBA than on DUD-E suggest that the causal prior is most useful when shortcut features become unreliable. DUD-E remains a useful benchmark because of its scale and broad target coverage, but LIT-PCBA is deliberately constructed to reduce property-based biases and therefore makes it harder for a model to succeed by exploiting easy decoy artifacts. When improvements persist on LIT-PCBA, especially in early-enrichment metrics, they are more plausibly reflecting better binding discrimination rather than better exploitation of dataset quirks. This is why the appendix repeatedly treats LIT-PCBA as the main stress test for whether the sparse interaction prior is doing the intended job.

\subsubsection{Main Comparison: Three-Protocol Evaluation}
\label{app:baseline_3prot}

The main text reports the \texttt{last} checkpoint for all methods because, in this cross-dataset generalization setting, it is the cleanest fixed-endpoint protocol and avoids tuning checkpoint selection to the test benchmark. Here we additionally report two validation-selected alternatives, \texttt{best\_auc} and \texttt{best\_bedroc}, to show how much the main conclusions depend on checkpoint choice. To keep the comparison unambiguous, Table~\ref{tab:three_protocol_eval} uses a single seed throughout and does not mix in additional exploratory variants. The three methods shown are the two strongest retrieval baselines under our pipeline, HypSeek and LigUnity, together with our main concept-level configuration ($K{=}128$, $d_c{=}1024$, $\lambda{=}10^{-4}$, $L{=}4$)\rev{ and its atom-attentive version \ours{}$_{\mathrm{ATOM}}$ under the same configuration}.

\begin{table}[t]
\caption{Three-protocol evaluation for main comparisons. The main text reports the fixed-endpoint \texttt{last} protocol; \texttt{best\_auc} and \texttt{best\_bedroc} are complementary validation-selected references. All rows are seed-1 results.}
\label{tab:three_protocol_eval}
\centering\small
\resizebox{\textwidth}{!}{%
\begin{tabular}{l|l|ccc|ccc}
\toprule
Method & Protocol & \multicolumn{3}{c|}{DUD-E} & \multicolumn{3}{c}{LIT-PCBA} \\
 & & AUROC & BEDROC$_{80.5}$ & EF@1\% & AUROC & BEDROC$_{80.5}$ & EF@1\% \\
\midrule
\multirow{3}{*}{LigUnity} & last & 0.897 & 0.674 & 44.20 & 0.599 & 0.075 & 6.50 \\
 & best\_auc & 0.915 & 0.716 & 46.42 & 0.581 & 0.089 & 8.15 \\
 & best\_bedroc & 0.896 & 0.671 & 44.00 & 0.599 & 0.076 & 6.38 \\
\midrule
\multirow{3}{*}{HypSeek} & last & 0.909 & 0.605 & 37.83 & 0.603 & 0.061 & 4.62 \\
 & best\_auc & 0.913 & 0.629 & 39.63 & 0.608 & 0.063 & 5.13 \\
 & best\_bedroc & 0.923 & 0.676 & 42.71 & 0.607 & 0.063 & 4.56 \\
\midrule
\multirow{3}{*}{\ours{}$_{\mathrm{SP}}$} & last & 0.935 & 0.744 & 47.69 & 0.639 & \textbf{0.097} & \textbf{8.56} \\
 & best\_auc & 0.934 & 0.722 & 46.41 & 0.628 & 0.074 & 5.84 \\
 & best\_bedroc & 0.935 & 0.747 & 48.24 & \textbf{0.643} & 0.094 & 8.42 \\
\midrule
\multirow{3}{*}{\rev{\ours{}$_{\mathrm{ATOM}}$}} & \rev{last} & \rev{0.943} & \rev{\textbf{0.768}} & \rev{\textbf{49.66}} & \rev{0.621} & \rev{0.079} & \rev{6.25} \\
 & \rev{best\_auc} & \rev{\textbf{0.949}} & \rev{0.749} & \rev{47.51} & \rev{0.623} & \rev{0.070} & \rev{6.02} \\
 & \rev{best\_bedroc} & \rev{0.943} & \rev{0.768} & \rev{49.55} & \rev{0.621} & \rev{0.084} & \rev{6.75} \\
\bottomrule
\end{tabular}}
\end{table}

Two points matter for interpretation. First, the protocol choice does not reverse the main qualitative ranking between HypSeek and \ours{}$_{\mathrm{SP}}$: under all reported checkpoints, our concept-level model remains clearly stronger on LIT-PCBA, which is the more realistic benchmark in our evaluation. The largest gap appears under the fixed-endpoint \texttt{last} protocol used in the main text, where LIT-PCBA BEDROC$_{80.5}$ improves from $0.061$ to $0.097$ and EF@1\% improves from $4.62$ to $8.56$. Second, the three protocols emphasize different aspects of the same model. For \ours{}$_{\mathrm{SP}}$, \texttt{best\_bedroc} slightly improves DUD-E and raises LIT-PCBA AUROC to $0.643$, but \texttt{last} gives the strongest LIT-PCBA early-enrichment profile, which is precisely the regime we care about in practical screening. The \texttt{best\_auc} checkpoint is noticeably weaker on LIT-PCBA for our model, showing that checkpoint selection by a validation AUC criterion does not necessarily align with the practical objective of top-ranked hit discovery.

LigUnity shows a useful contrast. On DUD-E, validation-selected checkpoints improve over its \texttt{last} endpoint, especially under \texttt{best\_auc}. On LIT-PCBA, however, the picture is mixed: \texttt{best\_auc} improves early enrichment over its \texttt{last} result but lowers AUROC, while \texttt{best\_bedroc} stays close to the fixed-endpoint baseline. This reinforces the same lesson as for \ours{}$_{\mathrm{SP}}$: checkpoint selection changes the metric emphasis, but it does not overturn the main comparison. Across all three protocols, \ours{}$_{\mathrm{SP}}$ remains stronger than LigUnity on the LIT-PCBA metrics emphasized in the main text. \rev{\ours{}$_{\mathrm{ATOM}}$ gives the strongest DUD-E results under every protocol, whereas its LIT-PCBA early enrichment stays below that of \ours{}$_{\mathrm{SP}}$ and varies with checkpoint choice ($6.02$--$6.75$ EF@1\%), mirroring the pattern of the main comparison.}

\subsubsection{Ablation and Diagnostic Study: Detailed Analysis}
\label{app:ablation_analysis}

The ablation and diagnostic table in the main text is intended to answer a mechanistic question rather than to produce a second leaderboard: \emph{which structural biases are responsible for the improvement, and how should we interpret the fact that different constraints matter differently across benchmarks?} This distinction is important because a component ablation need not improve every metric monotonically in order to be scientifically informative. In particular, the LIT-PCBA rows should be read as evidence about \emph{which inductive bias is doing what}, not as evidence that each added module must dominate under every protocol and every metric.

\rev{Table~\ref{tab:ablation_results} should be read in two parts: the \ours{}$_{\mathrm{SP}}$ rows are direct component ablations, whereas the \ours{}$_{\mathrm{LR}}$ and \ours{}$_{\mathrm{EMB}}$ rows are diagnostics for where and how rank compression is imposed. Reading the rows from top to bottom reveals the following:}

\rev{\textbf{(1) HypSeek and LigUnity establish the dense-alignment baseline.} These two rows show what current retrieval systems can do without the explicit concept-level interaction structure introduced in \ours{}. They provide the reference point for asking whether constrained concept-level interaction improves over dense holistic retrieval.}

\rev{\textbf{(2) The full model (\ours{}$_{\mathrm{SP}}$) gives the strongest overall balance.} This row is the best summary of our main claim. On DUD-E it is the clear winner on all three metrics. On LIT-PCBA it gives the best AUROC and remains strong on BEDROC$_{80.5}$ and EF@1\%, so the full architecture gives the best cross-benchmark tradeoff.}

\rev{\textbf{(3) Removing only $L_{\mathrm{spa}}$ isolates an unguided learned mask.} The cross-modal mask is still present, but it is no longer coupled to an explicit sparsity objective. The resulting drop suggests that the mask is most useful when regularized toward a simple interaction structure.}

\rev{\textbf{(4) The LR and EMB diagnostics separate two kinds of compression.} In \ours{}$_{\mathrm{LR}}$, both rows keep the rank-one concept-axis factorization; adding the optional $L_{\mathrm{spa}}$ penalty changes little, indicating that the low-rank parameterization itself is the operative constraint. In \ours{}$_{\mathrm{EMB}}$, however, the $\mathbf M_{\mathrm{emb}}^{r=1}$ row imposes rank one after pooling and therefore becomes a much stronger information bottleneck.}

\paragraph{\rev{Main takeaway from the ablation.}} \rev{The cleanest reading is that the constrained concept-level contrastive route is the essential structural change. The concept extractor and mask help because they regularize how contrastive learning sees the two modalities, and the comparison with \ours{}$_{\mathrm{LR}}$ shows that explicit low-rank concept-axis compression can provide a similar structural bias.}

\paragraph{\rev{Connection to the later theory--practice analysis.}} \rev{This reading is consistent with the mask-structure results. The useful empirical regime is not aggressively hard-sparse, but moderately regularized and low-rank. The ablation therefore supports the same calibrated view as the theory--practice analysis: structural constraints matter, but their benefit comes from shaping the interaction into a simple reusable form rather than from adding parameters.}

\subsubsection{\texorpdfstring{Hyperparameter Study on $K$ and $d_c$}{Hyperparameter Study on K and dc}}
\label{app:v2_kd_3prot}

The next question is whether the selected concept count $K$ and concept dimension $d_c$ are robust design choices or just narrow local optima. We therefore report the full protocol-resolved scans for both quantities. These tables are more informative than the single operating-point curves in the main text because they make two things visible at once: where performance is consistently strong, and where apparent improvements depend strongly on the checkpoint-selection rule.

\begin{table}[t]
\caption{\ours{}$_{\mathrm{SP}}$ concept-level $K$-scan (fix $d_c{=}512$), seed=1, all three protocols. $K{=}256$ underperforms consistently.}
\label{tab:v2_Kscan_3prot}
\centering\small
\resizebox{0.92\textwidth}{!}{%
\begin{tabular}{c|l|ccc|ccc}
\toprule
$K$ & Protocol & \multicolumn{3}{c|}{DUD-E} & \multicolumn{3}{c}{LIT-PCBA} \\
 & & AUROC & BEDROC$_{80.5}$ & EF@1\% & AUROC & BEDROC$_{80.5}$ & EF@1\% \\
\midrule
\multirow{3}{*}{32}  & last & 0.938 & 0.749 & 48.41 & 0.620 & 0.081 & 6.90 \\
 & best\_auc & 0.939 & 0.750 & 48.38 & 0.625 & 0.083 & 6.85 \\
 & best\_bedroc & 0.938 & 0.750 & 48.57 & 0.621 & 0.081 & 6.90 \\
\midrule
\multirow{3}{*}{64}  & last & 0.930 & 0.724 & 46.55 & 0.637 & 0.085 & 6.41 \\
 & best\_auc & 0.933 & 0.715 & 45.90 & 0.638 & 0.088 & 7.88 \\
 & best\_bedroc & 0.935 & 0.732 & 47.08 & 0.635 & 0.079 & 5.96 \\
\midrule
\multirow{3}{*}{\textbf{128}}  & last & 0.943 & 0.752 & 47.93 & 0.633 & 0.081 & 7.06 \\
 & best\_auc & 0.943 & 0.755 & 48.33 & 0.634 & 0.082 & 7.04 \\
 & best\_bedroc & 0.943 & 0.752 & 47.93 & 0.632 & 0.082 & 7.06 \\
\midrule
\multirow{3}{*}{256} & last & 0.916 & 0.660 & 41.50 & 0.610 & 0.066 & 5.90 \\
 & best\_auc & 0.917 & 0.662 & 41.82 & 0.609 & 0.065 & 5.46 \\
 & best\_bedroc & 0.916 & 0.664 & 42.09 & 0.612 & 0.069 & 6.02 \\
\bottomrule
\end{tabular}}
\end{table}

\begin{table}[t]
\caption{\ours{}$_{\mathrm{SP}}$ concept-level $d_c$-scan (fix $K{=}128$), seed=1, all three protocols. $d_c{=}1024$ gives the strongest LIT-PCBA results.}
\label{tab:v2_Dscan_3prot}
\centering\small
\resizebox{0.92\textwidth}{!}{%
\begin{tabular}{c|l|ccc|ccc}
\toprule
$d_c$ & Protocol & \multicolumn{3}{c|}{DUD-E} & \multicolumn{3}{c}{LIT-PCBA} \\
 & & AUROC & BEDROC$_{80.5}$ & EF@1\% & AUROC & BEDROC$_{80.5}$ & EF@1\% \\
\midrule
\multirow{3}{*}{128}  & last & 0.919 & 0.589 & 35.75 & 0.615 & 0.062 & 5.25 \\
 & best\_auc & 0.922 & 0.629 & 39.43 & 0.630 & 0.060 & 4.75 \\
 & best\_bedroc & 0.920 & 0.592 & 36.03 & 0.620 & 0.062 & 5.35 \\
\midrule
\multirow{3}{*}{256}  & last & 0.912 & 0.600 & 36.93 & 0.614 & 0.062 & 5.68 \\
 & best\_auc & 0.922 & 0.683 & 43.07 & 0.620 & 0.067 & 4.73 \\
 & best\_bedroc & 0.918 & 0.642 & 40.05 & 0.621 & 0.070 & 6.09 \\
\midrule
\multirow{3}{*}{512}  & last & 0.943 & 0.752 & 47.93 & 0.633 & 0.081 & 7.06 \\
 & best\_auc & 0.943 & 0.755 & 48.33 & 0.634 & 0.082 & 7.04 \\
 & best\_bedroc & 0.943 & 0.752 & 47.93 & 0.632 & 0.082 & 7.06 \\
\midrule
\multirow{3}{*}{\textbf{1024}} & last & 0.935 & 0.744 & 47.69 & \textbf{0.639} & \textbf{0.097} & \textbf{8.56} \\
 & best\_auc & 0.934 & 0.722 & 46.41 & 0.628 & 0.074 & 5.84 \\
 & best\_bedroc & 0.935 & 0.747 & 48.24 & 0.643 & 0.094 & 8.42 \\
\bottomrule
\end{tabular}}
\end{table}

\paragraph{Interpretation.} \rev{The concept-level sweep shows a broad stable region rather than a single fragile optimum. In Table~\ref{tab:v2_Kscan_3prot}, $K{=}128$ is the most consistent choice across all three checkpoint protocols, while $K{=}256$ degrades on both DUD-E and LIT-PCBA. In Table~\ref{tab:v2_Dscan_3prot}, increasing $d_c$ from $512$ to $1024$ trades a small amount of DUD-E margin for a clear gain on LIT-PCBA early enrichment, which is why the main text uses the higher-capacity setting.}

\subsubsection{\texorpdfstring{Hyperparameter Study on $\lambda$}{Hyperparameter Study on lambda}}
\label{app:v2_lambda_3prot}

This sweep asks a more specific question than the $K/d_c$ tables above: once the concept budget is fixed at the main operating point ($K{=}128$, $d_c{=}1024$, $L{=}4$), what range of sparsity-loss weights produces useful behavior, and where does the model collapse? We therefore vary the $L_1$ sparsity weight $\lambda$ across eight orders of magnitude and report all three checkpoint protocols. The resulting pattern is useful for interpreting both the main-text hyperparameter plot and the theory--practice discussion that follows in Section~\ref{app:v2_mask_stats}.

\begin{table}[h]
\caption{\ours{}$_{\mathrm{SP}}$ concept-level $\lambda$-scan, seed=1, all three protocols. $K{=}128$, $d_c{=}1024$, $L{=}4$. Rows below the dotted rule show mask collapse under \texttt{last} and \texttt{best\_bedroc}.}
\label{tab:v2_lambda_3prot}
\centering\small
\resizebox{0.95\textwidth}{!}{%
\begin{tabular}{c|l|ccc|ccc}
\toprule
$\lambda$ & Protocol & \multicolumn{3}{c|}{DUD-E} & \multicolumn{3}{c}{LIT-PCBA} \\
 & & AUROC & BEDROC$_{80.5}$ & EF@1\% & AUROC & BEDROC$_{80.5}$ & EF@1\% \\
\midrule
\multirow{3}{*}{$0$}       & last & 0.938 & 0.746 & 48.14 & 0.617 & 0.080 & 6.83 \\
 & best\_auc    & 0.943 & 0.744 & 47.68 & 0.629 & 0.073 & 6.26 \\
 & best\_bedroc & 0.934 & 0.732 & 47.03 & 0.613 & 0.075 & 6.10 \\
\midrule
\multirow{3}{*}{$10^{-6}$} & last & 0.940 & 0.749 & 47.94 & 0.635 & 0.088 & 7.26 \\
 & best\_auc    & 0.941 & 0.750 & 47.75 & 0.628 & 0.086 & 7.41 \\
 & best\_bedroc & 0.941 & 0.755 & 48.38 & 0.631 & 0.086 & 7.67 \\
\midrule
\multirow{3}{*}{$10^{-5}$} & last & 0.938 & 0.735 & 47.09 & 0.607 & 0.078 & 6.29 \\
 & best\_auc    & 0.940 & 0.732 & 47.08 & 0.628 & 0.082 & 6.41 \\
 & best\_bedroc & 0.940 & 0.735 & 47.15 & 0.613 & 0.070 & 5.72 \\
\midrule
\multirow{3}{*}{$\mathbf{10^{-4}}$} & last & 0.935 & 0.744 & 47.69 & \textbf{0.639} & \textbf{0.097} & \textbf{8.56} \\
 & best\_auc    & 0.934 & 0.722 & 46.41 & 0.628 & 0.074 & 5.84 \\
 & best\_bedroc & 0.935 & 0.747 & 48.24 & 0.643 & 0.094 & 8.42 \\
\midrule
\multirow{3}{*}{$10^{-3}$} & last & \textbf{0.943} & \textbf{0.768} & \textbf{49.73} & 0.614 & 0.073 & 5.49 \\
 & best\_auc    & 0.936 & 0.712 & 45.34 & 0.620 & 0.072 & 5.89 \\
 & best\_bedroc & 0.936 & 0.712 & 45.34 & 0.620 & 0.072 & 5.89 \\
\midrule
\multicolumn{8}{c}{\textit{---- mask collapses below (\texttt{last} and \texttt{best\_bedroc} protocols) ----}} \\
\midrule
\multirow{3}{*}{$10^{-2}$} & last         & 0.437 & 0.013 & \phantom{0}0.53 & 0.457 & 0.011 & 0.46 \\
 & best\_auc    & 0.921 & 0.699 & 44.71 & 0.622 & 0.074 & 5.83 \\
 & best\_bedroc & 0.442 & 0.014 & \phantom{0}0.64 & 0.460 & 0.011 & 0.54 \\
\midrule
\multirow{3}{*}{$10^{-1}$} & last         & 0.297 & 0.006 & \phantom{0}0.30 & 0.437 & 0.013 & 0.59 \\
 & best\_auc    & 0.930 & 0.696 & 44.53 & 0.619 & 0.078 & 6.73 \\
 & best\_bedroc & 0.930 & 0.696 & 44.53 & 0.619 & 0.078 & 6.73 \\
\midrule
\multirow{3}{*}{$1$}       & last         & 0.725 & 0.099 & \phantom{0}5.30 & 0.595 & 0.028 & 1.44 \\
 & best\_auc    & 0.936 & 0.721 & 46.22 & 0.642 & 0.072 & 5.87 \\
 & best\_bedroc & 0.930 & 0.697 & 43.99 & 0.635 & 0.064 & 4.92 \\
\bottomrule
\end{tabular}}
\end{table}

\paragraph{Interpretation.} Two regimes are practically useful. $\lambda{=}10^{-4}$ and $\lambda{=}10^{-3}$ both give strong results, but they emphasize different benchmarks: the former is clearly better on LIT-PCBA, while the latter is strongest on DUD-E. By contrast, $\lambda{\leq}10^{-6}$ is almost indistinguishable from turning the sparsity loss off, and $\lambda{\geq}10^{-2}$ pushes the hard gate into collapse under the fixed-endpoint and BEDROC-selected protocols. The \texttt{best\_auc} rows remain more resilient in that high-$\lambda$ region because they often pick an earlier checkpoint, before the mask has fully saturated.

\subsubsection{\texorpdfstring{Hyperparameter Study on $L$}{Hyperparameter Study on L}}
\label{app:v2_layers_3prot}

\noindent This scan isolates a different modeling choice: once the concept count, channel width, and sparsity weight are fixed, how much depth is actually useful in the concept extractor itself? Since the main text argues that the key inductive bias comes from structured cross-modal interaction rather than depth for its own sake, this table is mainly a check that deeper extractors do not hide an unreported gain.

\begin{table}[h]
\caption{\ours{}$_{\mathrm{SP}}$ concept-level Perceiver-depth ($L$) scan, seed=1, all three protocols. $K{=}128$, $d_c{=}1024$, $\lambda{=}10^{-4}$.}
\label{tab:v2_layers_3prot}
\centering\small
\resizebox{\textwidth}{!}{%
\begin{tabular}{c|l|ccc|ccc}
\toprule
$L$ & Protocol & \multicolumn{3}{c|}{DUD-E} & \multicolumn{3}{c}{LIT-PCBA} \\
 & & AUROC & BEDROC$_{80.5}$ & EF@1\% & AUROC & BEDROC$_{80.5}$ & EF@1\% \\
\midrule
\multirow{3}{*}{$\mathbf{2}$} & last         & 0.940 & 0.763 & 49.20 & 0.625 & 0.087 & 6.98 \\
 & best\_auc    & \textbf{0.946} & \textbf{0.776} & \textbf{49.99} & 0.616 & 0.082 & 6.53 \\
 & best\_bedroc & 0.942 & 0.757 & 48.48 & 0.622 & 0.085 & 7.95 \\
\midrule
\multirow{3}{*}{$\mathbf{4}$ (default)} & last & 0.935 & 0.744 & 47.69 & \textbf{0.639} & \textbf{0.097} & \textbf{8.56} \\
 & best\_auc    & 0.934 & 0.722 & 46.41 & 0.628 & 0.074 & 5.84 \\
 & best\_bedroc & 0.935 & 0.747 & 48.24 & 0.643 & 0.094 & 8.42 \\
\midrule
\multirow{3}{*}{$6$} & last & 0.936 & 0.738 & 47.05 & 0.633 & 0.085 & 6.75 \\
 & best\_auc    & 0.937 & 0.729 & 46.70 & 0.634 & 0.086 & 6.89 \\
 & best\_bedroc & 0.937 & 0.733 & 46.79 & 0.629 & 0.086 & 6.93 \\
\midrule
\multirow{3}{*}{$8$} & last & 0.934 & 0.727 & 46.38 & 0.630 & 0.080 & 6.73 \\
 & best\_auc    & 0.938 & 0.724 & 46.32 & 0.622 & 0.072 & 6.05 \\
 & best\_bedroc & 0.931 & 0.702 & 45.03 & 0.638 & 0.079 & 6.38 \\
\bottomrule
\end{tabular}}
\end{table}

\paragraph{Interpretation.} The depth sweep is simple: modest depth is enough. $L{=}2$ is strongest on DUD-E under \texttt{best\_auc}, while $L{=}4$ gives the best LIT-PCBA early enrichment under the two protocols that matter most for our main narrative, \texttt{last} and \texttt{best\_bedroc}. Increasing depth further to $L{=}6$ or $8$ never helps. This is consistent with the paper's central claim that the structural bias comes mainly from the concept factorization and cross-modal mask, not from stacking a deeper extractor.

\subsubsection{Detailed Mask Structural Analysis}
\label{app:v2_mask_stats}

The compact $\lambda$ sweep in Table~\ref{tab:lambda_mask} motivates a more detailed question: does the learned interaction become element-wise sparse, spectrally low-rank, or both? We therefore measure both element-wise sparsity and spectral low-rank structure directly on the learned concept-level masks. Table~\ref{tab:v2_mask_stats} reports code-level stream diagnostics for the molecule, pocket, and sequence mask modules ($K{=}128$), while the main-text Table~\ref{tab:lambda_mask} reports compact paired-branch statistics for the two protein-molecule retrieval branches. In the notation of Section~\ref{sec:theory}, Condition~2.i corresponds to element-wise sparse support, whereas Condition~3.i corresponds to low-rank or mode-compressed interaction.

\paragraph{Measurement.} Let $\mathcal{M}_\text{eff} = \sigma(\mathcal{M}) = \tanh(\mathcal{M}_\text{raw}) + 1 \in (0, 2)$. The concept-level forward pass applies a hard gate: entries with $\mathcal{M}_\text{eff} \leq 0.5$ are zeroed exactly. The $L_1$ mass and spectral ranks below are measured on the continuous pre-gate $\mathcal{M}_\text{eff}$, while the zero counts are measured after the forward threshold. We report:

\begin{itemize}[leftmargin=1.5em, itemsep=2pt]
\item \textbf{Element-wise sparsity} (corresponds to Condition~2.i): fraction of entries with $\mathcal{M}_\text{eff} < 0.5$, i.e.\ the number of hard zeros after the forward gate.
\item \textbf{Spectral effective rank} (corresponds to Condition~3.i): the smallest $r$ such that the top-$r$ singular values explain at least $95\%$ or $99\%$ of $\|\mathcal{M}_\text{eff}\|_F^2$.
\item Summary scalars: the average mask value $\overline{\sigma(\mathcal{M})}$ and the total mask mass $L_1=\sum_{ij}\sigma(\mathcal{M})_{ij}$.
\item \rev{\textbf{Antichain violations} (corresponds to the antichain clause of Condition~2.i): on the hard support $\mathbf 1[\mathcal{M}_\text{eff}>0.5]$ of each cross-modal mask (molecule--pocket and molecule--sequence), we compare every pair of distinct row supports and every pair of distinct column supports, and count a violation whenever one support is contained in the other (equal supports also count). Each branch contributes $\binom{128}{2}=8{,}128$ row pairs and $8{,}128$ column pairs, giving $32{,}512$ tests in total. This statistic is reported in the last column of Table~\ref{tab:lambda_mask}.}
\end{itemize}

\rev{The antichain statistic sharpens the two-regime interpretation. With no hard zeros ($\lambda\le10^{-4}$), all row and column supports equal the full index set, so every pair violates the antichain condition; these operating points should be interpreted only through Condition~3.i and Theorem~\ref{thm:main:lowrank}. At $\lambda{=}10^{-3}$, every one of the $128$ row and $128$ column supports is unique in both branches, with zero containment relations, so the antichain clause is exactly satisfied by the learned hard support. At $\lambda{=}10^{-2}$ the masks remain exact antichains but retrieval collapses, so the antichain property alone does not explain performance. We note that the $\lambda{=}10^{-3}$ mask remains only moderately sparse ($79.5\%$ active entries) and that Condition~2.ii is not directly measured; the diagnostic validates the antichain subcondition rather than every premise of Theorem~\ref{thm:main:ident}. All statistics are computed exactly on the same single-seed \texttt{last} checkpoints as Table~\ref{tab:lambda_mask}.}

This subsection uses the per-stream quantities $\overline{\sigma(\mathcal{M})}$ and $L_1=\sum_{ij}\sigma(\mathcal{M})_{ij}$ because Table~\ref{tab:v2_mask_stats} reports stream-level mask structure. By contrast, the main-text Table~\ref{tab:lambda_mask} reports the aggregated statistic `$L_1$ avg', i.e., the average absolute mask entry after averaging over all entries and then over the two paired retrieval branches. The two views are consistent but not the same object: one is a stream-level total-mass diagnostic, while the other is a normalized paired-branch summary.

\paragraph{Operational criteria for ``sparse'' and ``low-rank''.} The theory itself does not specify a single numerical threshold such as ``10\% zeros'' or ``rank below 5''; the conditions are structural. Still, for the purpose of interpreting the experiments, it is useful to make the comparison quantitative. Here $K{=}128$, so Condition~3.i and the low-rank regime of Theorem~\ref{thm:main:lowrank} are most plausible when the effective rank needed to explain the mask energy is much smaller than $128$. By that standard, the evidence for low-rank structure is strong: for $\lambda{=}10^{-4}$ we have $r@95\%{=}1$ on all three stream masks and $r@99\% \in \{9,12,14\}$, i.e., only about $7\%$--$11\%$ of the full concept dimension is needed to explain $99\%$ of the squared singular-value energy. For $\lambda{\geq}10^{-3}$ the evidence is even stronger in rank terms, since both $r@95\%$ and $r@99\%$ collapse to $1$.

For Condition~2.i and the sparse-support regime of Theorem~\ref{thm:main:ident}, the practically relevant operating point is $\lambda{=}10^{-3}$. Condition~2.i requires a genuinely sparse support pattern together with the antichain-style separation discussed in the theory. At $\lambda{=}10^{-3}$, the three stream masks show about $9.7\%$--$13.4\%$ exact zeros while still maintaining strong retrieval performance, which is exactly the qualitative behavior expected from the sparse-support theorem. At $\lambda{=}10^{-4}$, by contrast, there are no hard zeros at all, so this setting should be interpreted through the low-rank theorem rather than the sparse one. \rev{In this sense, the experiments support two different theorem-aligned regimes: $\lambda{=}10^{-3}$ for sparse-support identifiability and $\lambda{=}10^{-4}$ for low-rank subspace identifiability.}

\begin{table}[t]
\caption{Mask structural analysis across $\lambda$ for the concept-level model. Per-stream numbers are reported at $K{=}128$ under the \texttt{last} protocol, seed=1. $L_1$ and rank statistics use the continuous pre-gate masks; zeros use the hard-gated forward masks. $\lambda{=}10^{-4}$ remains element-wise dense but has a richer spectral tail, while $\lambda{\geq}10^{-3}$ produces hard zeros and rank-1 masks.}
\label{tab:v2_mask_stats}
\centering\small
\resizebox{0.92\textwidth}{!}{%
\begin{tabular}{c|c|ccc|cc|cc}
\toprule
$\lambda$ & branch & $\overline{\sigma(\mathcal{M})}$ & max $\sigma(\mathcal{M})$ & $L_1$ (sum) & zeros & \%zeros & $r@95\%$ & $r@99\%$ \\
\midrule
\multirow{3}{*}{$0$}       & mol     & 0.920 & 1.311 & 15068 & \phantom{000}0 & \phantom{0}0.0\% & 1 & 5 \\
 & poc  & 0.993 & 1.398 & 16267 & \phantom{000}0 & \phantom{0}0.0\% & 1 & 1 \\
 & seq & 1.023 & 1.374 & 16758 & \phantom{000}0 & \phantom{0}0.0\% & 1 & 1 \\
\midrule
\multirow{3}{*}{$10^{-6}$} & mol     & 0.910 & 1.301 & 14914 & \phantom{000}0 & \phantom{0}0.0\% & 1 & 5 \\
 & poc  & 0.991 & 1.395 & 16228 & \phantom{000}0 & \phantom{0}0.0\% & 1 & 1 \\
 & seq & 1.023 & 1.374 & 16755 & \phantom{000}0 & \phantom{0}0.0\% & 1 & 1 \\
\midrule
\multirow{3}{*}{$10^{-5}$} & mol     & 0.894 & 1.286 & 14648 & \phantom{000}0 & \phantom{0}0.0\% & 1 & 6 \\
 & poc  & 0.968 & 1.377 & 15863 & \phantom{000}0 & \phantom{0}0.0\% & 1 & 2 \\
 & seq & 1.007 & 1.361 & 16505 & \phantom{000}0 & \phantom{0}0.0\% & 1 & 1 \\
\midrule
\multirow{3}{*}{$\mathbf{10^{-4}}$} & mol & 0.756 & 1.148 & 12383 & \phantom{000}0 & \phantom{0}0.0\% & 1 & \textbf{14} \\
 & poc  & 0.781 & 1.200 & 12793 & \phantom{000}0 & \phantom{0}0.0\% & 1 & \textbf{12} \\
 & seq & 0.841 & 1.206 & 13774 & \phantom{000}0 & \phantom{0}0.0\% & 1 & \textbf{9} \\
\midrule
\multirow{3}{*}{$\mathbf{10^{-3}}$} & mol     & 0.512 & 0.773 & \phantom{0}8383 & \phantom{0}1645 & 10.0\% & 1 & 1 \\
 & poc  & 0.512 & 0.797 & \phantom{0}8390 & \phantom{0}2198 & 13.4\% & 1 & 1 \\
 & seq & 0.513 & 0.751 & \phantom{0}8404 & \phantom{0}1593 & \phantom{0}9.7\% & 1 & 1 \\
\midrule
\multirow{3}{*}{$10^{-2}$} & mol     & 0.501 & 0.656 & \phantom{0}8206 & \phantom{0}8677 & 53.0\% & 1 & 1 \\
 & poc  & 0.502 & 0.708 & \phantom{0}8222 & \phantom{0}8448 & 51.6\% & 1 & 1 \\
 & seq & 0.502 & 0.659 & \phantom{0}8222 & \phantom{0}9199 & 56.1\% & 1 & 1 \\
\midrule
\multirow{3}{*}{$10^{-1}$} & mol     & 0.500 & 0.599 & \phantom{0}8195 & 11796 & 72.0\% & 1 & 1 \\
 & poc  & 0.500 & 0.653 & \phantom{0}8199 & 11484 & 70.1\% & 1 & 1 \\
 & seq & 0.500 & 0.610 & \phantom{0}8199 & 11790 & 72.0\% & 1 & 1 \\
\midrule
\multirow{3}{*}{$1$}       & mol     & 0.500 & 0.571 & \phantom{0}8193 & 13233 & 80.8\% & 1 & 1 \\
 & poc  & 0.500 & 0.613 & \phantom{0}8194 & 12994 & 79.3\% & 1 & 1 \\
 & seq & 0.500 & 0.561 & \phantom{0}8194 & 13075 & 79.8\% & 1 & 1 \\
\bottomrule
\end{tabular}}
\end{table}

\paragraph{Findings.} Three patterns matter most. First, the effective masks are low-rank throughout the sweep: $r@95\%$ is always $1$, so the dominant interaction mode is already highly compressed even without explicit sparsity. Second, $\lambda{=}10^{-4}$ does \emph{not} create hard zeros, but it does increase the spectral tail to $r@99\%=9$--$14$, which is exactly the regime where LIT-PCBA performance is strongest. Third, once $\lambda$ reaches $10^{-3}$ and above, hard zeros appear abruptly and the masks collapse back to rank-$1$ even at the 99\% level. In other words, stronger sparsity regularization does not gradually sharpen a useful structure; it eventually removes the richer low-rank structure that seems to matter most in practice.

\paragraph{How the conditions appear in practice.} The current measurements support both theorem regimes, but through different diagnostics. Condition~3.i is fundamentally about dimensionality reduction of the interaction kernel, and singular-value diagnostics measure exactly that, so $\lambda{=}10^{-4}$ gives direct quantitative support for the low-rank theorem. Condition~2.i is about sparse support structure, and the hard-zero statistics at $\lambda{=}10^{-3}$ show that the model has indeed entered that sparse-support regime while retaining strong retrieval quality. The table therefore supports a practical two-regime reading rather than a single-regime one.

\paragraph{Best-effort conclusion about the assumptions.} The most useful reading is therefore complementary rather than asymmetric. For $\lambda{=}10^{-3}$, the experiments support the sparse-support theorem in the regime where meaningful exact zeros appear and performance remains strong. For $\lambda{=}10^{-4}$, the experiments support the low-rank theorem in the regime where the mask remains dense entry-wise but is strongly spectrally compressed. The two theorems are therefore both realized in practice, but at different operating points of the same model.

\paragraph{Implication for theory.} The correspondence to the theory is therefore not ``Theorem~2 fails and only Theorem~3 matters.'' It is that the model exhibits two neighboring theorem-aligned regimes. Around $\lambda{=}10^{-3}$, the learned mask behaves in the sparse-support sense emphasized by Theorem~\ref{thm:main:ident}. Around $\lambda{=}10^{-4}$, it behaves in the low-rank sense emphasized by Theorem~\ref{thm:main:lowrank}. Since both settings remain empirically strong, the theory is useful in exactly the way we want: it explains two distinct but valid operating points of the learned interaction.

\subsubsection{ROC-Enrichment (RE) on DUD-E}
\label{app:re_dude}

ROC-enrichment (RE) is another early-recognition metric, complementary to BEDROC$_{80.5}$ and EF@1\%. At a fixed false-positive-rate threshold $\alpha$, it is defined as $\mathrm{RE}@\alpha=\mathrm{TPR}(\alpha)/\alpha$, so larger values mean that the model retrieves actives much more efficiently than random ranking in the very early part of the list. Following prior reporting practice~\citep{hypseek2025}, we report RE at $0.5\%$, $1\%$, $2\%$, and $5\%$ false-positive-rate thresholds on DUD-E.

We report RE only on DUD-E because it is the more standard setting in prior work and provides a more stable target-averaged estimate than LIT-PCBA. Concretely, DUD-E contains $102$ targets, whereas LIT-PCBA contains only $15$. Table~\ref{tab:re_dude} therefore augments the main-text DUD-E comparison with RE, while keeping AUROC / BEDROC$_{80.5}$ / EF@1\% for context. To stay aligned with the main text, all rows use the same fixed-endpoint \texttt{last} checkpoint protocol after the full training budget.

\begin{table}[t]
\caption{ROC-enrichment (RE) on DUD-E under the \texttt{last} checkpoint protocol. Values are $102$-target means. AUROC / BEDROC$_{80.5}$ / EF@1\% are reproduced from the main DUD-E comparison for context.}
\label{tab:re_dude}
\centering\small
\resizebox{\textwidth}{!}{%
\begin{tabular}{l|ccc|cccc}
\toprule
\multirow{2}{*}{Configuration} & \multicolumn{3}{c|}{Primary metrics} & \multicolumn{4}{c}{ROC-enrichment (RE)} \\
 & AUROC & BEDROC$_{80.5}$ & EF@1\% & @0.5\% & @1\% & @2\% & @5\% \\
\midrule
DrugCLIP~\citep{drugclip2023} & 0.809 & 0.505 & 31.89 & 73.97 & 41.79 & 23.68 & 11.16 \\
LigUnity~\citep{ligunity2024} & 0.897 & 0.674 & 44.20 & 104.69 & 57.47 & 33.76 & 13.88 \\
HypSeek~\citep{hypseek2025} & 0.909 & 0.605 & 37.83 & 95.20 & 54.49 & 30.74 & 14.01 \\
\midrule
\ours{}$_{\mathrm{SP}}$ ($K{=}128, d_c{=}512, L{=}4$) & 0.943 & 0.752 & 47.93 & 127.56 & 69.94 & 37.83 & \textbf{16.39} \\
\textbf{\ours{}$_{\mathrm{SP}}$ ($K{=}128, d_c{=}1024, L{=}4$)} & 0.935 & 0.744 & 47.69 & 128.06 & 69.26 & 37.07 & 15.99 \\
\rowcolor{green!7}\ours{}$_{\mathrm{SP}}$ ($K{=}128, d_c{=}1024, L{=}2$) & 0.940 & 0.763 & 49.20 & \textbf{131.55} & \textbf{71.17} & \textbf{38.04} & 16.30 \\
\bottomrule
\end{tabular}}
\end{table}

\paragraph{Detailed analysis.} Table~\ref{tab:re_dude} supports three conclusions. First, \ours{}$_{\mathrm{SP}}$ improves very-early retrieval not only relative to the HypSeek reproduction, but also relative to the broader retrieval baselines included in the main comparison. All three reported \ours{}$_{\mathrm{SP}}$ settings are clearly stronger than DrugCLIP and LigUnity on AUROC, BEDROC$_{80.5}$, EF@1\%, and all four RE thresholds. This is useful because RE asks a slightly different question from BEDROC$_{80.5}$ and EF@1\%: instead of weighting the top of the ranking or counting actives in the top percentile, it measures how efficiently true actives are recovered at extremely small false-positive rates. The consistent improvement across all of these views strengthens the claim that \ours{}$_{\mathrm{SP}}$ genuinely improves early retrieval on DUD-E.

\paragraph{How to read the \texorpdfstring{\ours{}$_{\mathrm{SP}}$}{CausalBind-SP} rows.} Second, the ordering among the three \ours{}$_{\mathrm{SP}}$ settings is informative. The main-text operating point ($K{=}128$, $d_c{=}1024$, $L{=}4$) is already much stronger than the prior baselines, but the shallower $L{=}2$ variant is slightly better on the RE thresholds and on DUD-E BEDROC$_{80.5}$ / EF@1\%. This matches the broader hyperparameter-study pattern that, on DUD-E alone, a slightly shallower interaction depth can be favorable for very-early retrieval. Third, this does not overturn the model choice in the main text. The paper uses $L{=}4$ because it gives the better cross-benchmark balance once LIT-PCBA is taken into account, whereas RE is reported only on DUD-E as a supplementary diagnostic. We therefore read Table~\ref{tab:re_dude} as supporting evidence for the DUD-E early-recognition story, not as a new primary criterion for model selection.

\subsubsection{\texorpdfstring{Affinity Ranking on Merck-FEP-8}{Affinity Ranking on Merck-FEP-8}}
\label{app:sp_merck_affinity}

\rev{We additionally evaluate whether the main \ours{}$_{\mathrm{SP}}$ checkpoint transfers from virtual screening to affinity ranking. We use HypSeek's \texttt{test\_fep()} pipeline on the 16 FEP benchmark targets, consisting of JACS-8 and Merck-FEP-8; because the current comparison focuses on structurally diverse affinity-ranking targets, Table~\ref{tab:sp_merck_affinity} reports only Merck-FEP-8. The evaluation uses the single-seed \texttt{last} checkpoint protocol for \ours{}$_{\mathrm{SP}}$ and the reproduced HypSeek baseline. Missing \texttt{bench/} CSV files only affect ACC/AUC outputs, which are skipped; Pearson $r$ and Spearman $\rho$ are unaffected.}

\begin{table}[t]
\caption{\rev{Merck-FEP-8 affinity-ranking correlations. Higher Pearson $r$ and Spearman $\rho$ are better. \ours{}$_{\mathrm{SP}}$ uses the main $K{=}128,d_c{=}1024$ configuration under seed 1 and the \texttt{last} checkpoint.}}
\label{tab:sp_merck_affinity}
\centering\small
\begin{tabular}{l|cc}
\toprule
Method & Pearson $r$ & Spearman $\rho$ \\
\midrule
LigUnity & 0.532 & 0.499 \\
HypSeek & 0.423 & 0.390 \\
\rowcolor{green!7}\ours{}$_{\mathrm{SP}}$ & \textbf{0.598} & \textbf{0.548} \\
\bottomrule
\end{tabular}
\end{table}

\rev{On Merck-FEP-8, \ours{}$_{\mathrm{SP}}$ improves over HypSeek by $+0.175$ Pearson $r$ and $+0.158$ Spearman $\rho$, and also exceeds LigUnity by $+0.066$ and $+0.049$, respectively. This suggests that the sparse concept-pair interaction learned for retrieval also transfers to affinity ranking on diverse ligand sets.}

\subsection{\rev{Comparison with S$^2$Drug}}
\label{app:s2drug}

\rev{S$^2$Drug~\citep{s2drug2026} is a concurrent retrieval-based virtual-screening method. Because its implementation was not publicly available, we could not reproduce it under our pipeline; Table~\ref{tab:main_results} therefore reports its paper-reported results, and this comparison should be read as paper-to-paper rather than controlled. S$^2$Drug reports EF@0.5\% as its primary LIT-PCBA enrichment metric, whereas our main table follows HypSeek and reports EF@1\%. Table~\ref{tab:s2drug_ef05} provides the matched EF@0.5\% comparison, computed by the same evaluator from the same checkpoints as Table~\ref{tab:main_results}. At the matched cutoff, \ours{}$_{\mathrm{LR}}$ and \ours{}$_{\mathrm{EMB}}$ exceed the paper-reported S$^2$Drug result, while \ours{}$_{\mathrm{SP}}$ is comparable; at EF@1\%, all three variants exceed it.}

\begin{table}[h]
\caption{\rev{LIT-PCBA EF@0.5\% comparison with the paper-reported S$^2$Drug result.}}
\label{tab:s2drug_ef05}
\centering\small
\begin{tabular}{l|cc}
\toprule
Method & EF@0.5\% & EF@1\% \\
\midrule
S$^2$Drug (paper-reported) & 11.44 & 7.38 \\
\rowcolor{green!7}\ours{}$_{\mathrm{SP}}$ & 10.88 & 8.56 \\
\rowcolor{green!7}\ours{}$_{\mathrm{LR}}$ & 11.59 & 8.84 \\
\rowcolor{green!7}\ours{}$_{\mathrm{EMB}}$ & \textbf{12.20} & \textbf{9.60} \\
\bottomrule
\end{tabular}
\end{table}

\subsection{\rev{Out-of-Distribution Evaluation on DEKOIS 2.0}}
\label{app:dekois_ood}

\rev{To test generalization beyond DUD-E and LIT-PCBA, we perform zero-shot evaluation on the established DEKOIS 2.0 benchmark~\citep{bauer2013dekois} under two complementary settings, both fixed before running any model. \textbf{Target-level OOD} retains the nine DEKOIS targets whose UniProt IDs are absent from all training sources, including PDBBind (11BETAHSD1, ACE2, ADRB2, ALR2, COX1, CYP2A6, INHA, MMP2, and TK). \textbf{Scaffold-level OOD} further retains only test pairs whose Bemis--Murcko scaffolds are absent from all $439{,}639$ unique training molecules; requiring at least five such actives per target leaves six targets, $5{,}385$ pairs, and $98$ actives. Metrics are macro-averaged over targets.}

\begin{table}[h]
\caption{\rev{Zero-shot out-of-distribution evaluation on DEKOIS 2.0. \ours{}$_{\mathrm{ATOM}}$ is the atom-attentive version of \ours{}$_{\mathrm{SP}}$ (Section~\ref{sec:experiments}).}}
\label{tab:dekois_ood}
\centering\small
\setlength{\tabcolsep}{4pt}
\begin{tabular}{l|ccc|ccc}
\toprule
\multirow{2}{*}{Method} & \multicolumn{3}{c|}{Target-level OOD (9 targets)} & \multicolumn{3}{c}{Scaffold-level OOD (6 targets)} \\
& AUROC & BEDROC$_{80.5}$ & EF@1\% & AUROC & BEDROC$_{80.5}$ & EF@1\% \\
\midrule
HypSeek~\citep{hypseek2025} & 0.819 & 0.593 & 20.06 & 0.880 & 0.630 & 44.47 \\
LigUnity~\citep{ligunity2024} & 0.859 & 0.570 & 19.82 & 0.915 & 0.612 & 43.11 \\
\rowcolor{green!7}\ours{}$_{\mathrm{SP}}$ & 0.860 & 0.658 & \textbf{23.00} & 0.945 & 0.712 & 43.66 \\
\rowcolor{green!7}\ours{}$_{\mathrm{LR}}$ & \textbf{0.867} & 0.653 & 22.71 & \textbf{0.949} & 0.716 & 44.70 \\
\rowcolor{green!7}\ours{}$_{\mathrm{ATOM}}$ & 0.837 & \textbf{0.666} & 22.94 & 0.941 & \textbf{0.744} & \textbf{50.77} \\
\bottomrule
\end{tabular}
\end{table}

\rev{Table~\ref{tab:dekois_ood} shows that the structured-interaction models improve AUROC and BEDROC$_{80.5}$ over both retrieval baselines under both shifts, while \ours{}$_{\mathrm{SP}}$ remains comparable to HypSeek at the strictest EF@1\% cutoff of the scaffold-level split. Together with the cross-dataset protocol of the main experiments, these results indicate that the learned interaction structures transfer under explicit target-level and scaffold-level distribution shifts. Under CD-HIT sequence clustering at 40\% identity, only one of the nine targets (ACE2) has its entire cluster absent from training, so these results should not be read as evidence of broad protein-family generalization.}

\subsection{\rev{Interpretability Case Study on THRB}}
\label{app:thrb_case}

\rev{To connect the learned concepts with biologically meaningful binding patterns, we analyze the aligned protein--ligand crystal complex of the Human Thyroid Hormone Receptor $\beta$ (THRB) from DUD-E, following the evidence chain from local structural contacts to learned concepts, to the predicted binding score, and finally to a random-control validation (Table~\ref{tab:thrb_case}, Figure~\ref{fig:thrb_case}). Observed ligand--pocket contacts are heavy-atom pairs within 4\,\AA. Concepts are selected solely by their attention coverage of these contacts, using mean-over-heads attention from the final Perceiver layer, without using the downstream score change. Among the contacting atoms, the four ligand atoms and eight pocket atoms most attended by the selected concepts are then excluded from the concept bottleneck while the Uni-Mol encoder outputs are kept fixed, and the final pocket--ligand score is recomputed. As a control, we repeat this procedure for 64 matched random sets of four ligand and eight pocket atoms.}

\begin{table}[h]
\caption{\rev{Local-to-global attribution on the DUD-E THRB complex.}}
\label{tab:thrb_case}
\centering\small
\footnotesize
\begin{tabularx}{\textwidth}{>{\raggedright\arraybackslash}p{1.6cm}>{\raggedright\arraybackslash}X>{\raggedright\arraybackslash}X>{\raggedright\arraybackslash}p{2.9cm}}
\toprule
Stage & Evidence & Observation & Purpose \\
\midrule
Local structure & Heavy-atom contacts ($\le 4$\,\AA) & 98 contacts involving 27 ligand and 47 pocket atoms & Identify the local binding interface \\
Learned concept & Contact concentration in the most contact-aligned concept pair & 0.0505 vs.\ 0.00881 uniform baseline (5.73$\times$ enrichment) & Test whether contacts concentrate in concepts \\
Predicted score & Mask the associated 4 ligand and 8 pocket atoms (encoder fixed) & Score decreases $1.878\to0.851$ ($-54.7\%$) & Test contribution to the prediction \\
Random control & Mask 64 matched random sets of 4 ligand and 8 pocket atoms & Mean decrease $0.026\pm0.073$; guided decrease $1.027$ (99.2nd percentile) & Exclude an arbitrary masking effect \\
\bottomrule
\end{tabularx}
\end{table}

\begin{figure}[!ht]
\centering
\includegraphics[width=\textwidth]{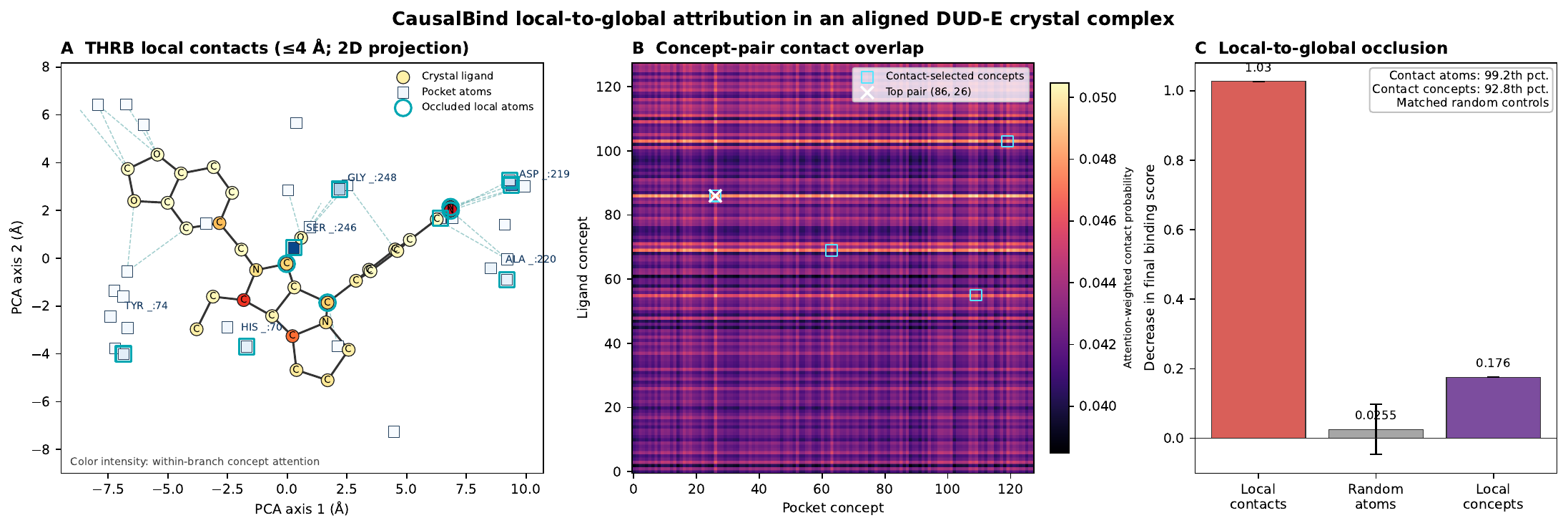}
\caption{\rev{Local-to-global attribution on the DUD-E THRB complex. (A) Ligand--pocket heavy-atom contacts within 4\,\AA\ in a 2D projection; highlighted atoms are those occluded at the concept bottleneck. (B) Attention-weighted contact probability over ligand--pocket concept pairs, with the contact-selected concepts and the top pair marked. (C) Decrease in the final binding score after masking the local contact atoms, matched random atoms, or the contact-selected concepts.}}
\label{fig:thrb_case}
\end{figure}

\rev{The highest-contributing contacts are chemically plausible close polar contacts: ligand N--Asp219 OD1 (2.71\,\AA), N--Asp219 OD2 (2.78\,\AA), N--Gly250 O (2.88\,\AA), and O--Gly248 N (3.04\,\AA). These results show that biologically meaningful local contacts concentrate within a learned concept pair, and that concept-guided masking causes a much larger score decrease than random masking, supporting the intended local-structure-to-concept-to-score mechanism rather than an arbitrary masking effect. This post-hoc diagnostic uses \ours{}$_{\mathrm{ATOM}}$, which exposes the full Uni-Mol atom sequence to the Perceiver; it does not claim that individual concepts carry one-to-one biochemical meanings. Attention and occlusion provide model-level attribution evidence rather than proof of a physical causal mechanism.}

\subsection{\texorpdfstring{\ours{}$_{\mathrm{LR}}$}{CausalBind-LR}: Low-Rank Concept-Basis Results}
\label{app:lr_results}

\rev{This subsection reports the explicit low-rank concept-basis implementation, denoted \ours{}$_{\mathrm{LR}}$ in the paper and recorded as V3 in the experiment log. It uses the same concept extractor and retrieval loss as \ours{}$_{\mathrm{SP}}$, but replaces the dense $K{\times}K$ concept-pair mask with a hard rank-$r$ factorization. Thus it is the most direct empirical test of the subspace-identifiability view in Theorem~\ref{thm:main:lowrank}.}

\subsubsection{Main Rank Scan and Three-Protocol Results}

\rev{For each branch, \ours{}$_{\mathrm{LR}}$ parameterizes the concept-basis mask as}
\[
\sigma(\mathcal{M}_{\mathrm{LR}})=\tanh(\mathbf U\mathbf V^\top)+1,\qquad
\mathcal{M}_{\mathrm{eff}}=H_{0.5}\!\left(\sigma(\mathcal{M}_{\mathrm{LR}})\right),\qquad
\mathbf U,\mathbf V\in\mathbb{R}^{K\times r},
\]
\rev{where $H_{0.5}$ denotes the same threshold gate used by \ours{}$_{\mathrm{SP}}$. Unless otherwise stated, the rank scan uses $K{=}128$, $d_c{=}1024$, $L{=}4$, seed 1, and $\lambda{=}10^{-4}$. The mask-parameter count per branch is $2Kr$, compared with $K^2$ for the dense \ours{}$_{\mathrm{SP}}$ mask.}

\begin{table}[t]
\caption{\rev{\ours{}$_{\mathrm{LR}}$ rank scan, seed=1, all three checkpoint protocols. All rows use $K{=}128$, $d_c{=}1024$, $L{=}4$, and $\lambda{=}10^{-4}$.}}
\label{tab:lr_rank_scan}
\centering\small
\resizebox{\textwidth}{!}{%
\begin{tabular}{c|c|l|ccc|ccc}
\toprule
$r$ & Params/branch & Protocol & \multicolumn{3}{c|}{DUD-E} & \multicolumn{3}{c}{LIT-PCBA} \\
 & & & AUROC & BEDROC$_{80.5}$ & EF@1\% & AUROC & BEDROC$_{80.5}$ & EF@1\% \\
\midrule
\multirow{3}{*}{1} & \multirow{3}{*}{256} & last & 0.935 & 0.720 & 45.86 & 0.630 & 0.087 & 8.41 \\
 & & best\_auc & \textbf{0.946} & \textbf{0.756} & \textbf{48.78} & 0.636 & 0.079 & 6.53 \\
 & & best\_bedroc & 0.935 & 0.724 & 46.28 & 0.632 & \textbf{0.089} & \textbf{8.56} \\
\midrule
\multirow{3}{*}{4} & \multirow{3}{*}{1,024} & last & 0.934 & 0.731 & 47.14 & 0.632 & 0.085 & 7.45 \\
 & & best\_auc & 0.933 & 0.710 & 45.14 & 0.633 & 0.080 & 7.67 \\
 & & best\_bedroc & 0.938 & 0.744 & 47.95 & 0.632 & 0.078 & 6.52 \\
\midrule
\multirow{3}{*}{8} & \multirow{3}{*}{2,048} & last & 0.941 & \textbf{0.756} & 48.67 & 0.626 & 0.085 & 7.60 \\
 & & best\_auc & 0.941 & \textbf{0.756} & 48.38 & 0.634 & 0.079 & 6.61 \\
 & & best\_bedroc & 0.940 & 0.750 & 48.29 & 0.627 & 0.083 & 7.32 \\
\midrule
\multirow{3}{*}{16} & \multirow{3}{*}{4,096} & last & 0.933 & 0.725 & 46.39 & 0.630 & 0.075 & 6.27 \\
 & & best\_auc & 0.928 & 0.693 & 44.07 & 0.616 & 0.075 & 6.01 \\
 & & best\_bedroc & 0.932 & 0.717 & 45.82 & 0.630 & 0.067 & 5.55 \\
\midrule
\multirow{3}{*}{32} & \multirow{3}{*}{8,192} & last & 0.939 & 0.738 & 47.13 & 0.622 & 0.085 & 7.25 \\
 & & best\_auc & 0.941 & 0.741 & 47.44 & 0.624 & 0.083 & 6.75 \\
 & & best\_bedroc & 0.933 & 0.705 & 45.11 & 0.611 & 0.075 & 5.88 \\
\midrule
\multirow{3}{*}{64} & \multirow{3}{*}{16,384} & last & 0.937 & 0.748 & 48.37 & 0.625 & 0.081 & 6.83 \\
 & & best\_auc & 0.936 & 0.731 & 46.88 & 0.622 & 0.074 & 5.58 \\
 & & best\_bedroc & 0.936 & 0.743 & 47.79 & 0.625 & 0.075 & 6.58 \\
\bottomrule
\end{tabular}}
\end{table}

\rev{Table~\ref{tab:lr_rank_scan} shows that very small ranks are already competitive. Rank $r{=}1$ is the most parameter-efficient setting and reaches the best LIT-PCBA EF@1\% under \texttt{best\_bedroc}, while $r{=}8$ gives the strongest DUD-E early-enrichment numbers. Increasing the nominal rank beyond this range does not provide a consistent gain, suggesting that the useful interaction signal is much lower-dimensional than the full $K{\times}K$ mask.}

\subsubsection{Multi-Seed Comparison and Ablation}

\rev{We next compare the most relevant \ours{}$_{\mathrm{LR}}$ settings across three seeds under the fixed-endpoint \texttt{last} protocol. The default row uses $r{=}1$ with $\lambda_{\mathrm{spa}}^{\mathrm{LR}}=0$; the $+L_{\mathrm{spa}}$ row adds the optional mask penalty. The table also includes the main \ours{}$_{\mathrm{SP}}$ multi-seed reference so that the low-rank implementation can be read against the main model directly.}

\begin{table}[t]
\caption{\rev{Multi-seed \texttt{last}-checkpoint comparison for \ours{}$_{\mathrm{LR}}$ and the main \ours{}$_{\mathrm{SP}}$ reference.}}
\label{tab:lr_multiseed}
\centering\small
\resizebox{\textwidth}{!}{%
\begin{tabular}{l|c|ccc|ccc}
\toprule
Method & $n$ & \multicolumn{3}{c|}{DUD-E} & \multicolumn{3}{c}{LIT-PCBA} \\
 & & AUROC & BEDROC$_{80.5}$ & EF@1\% & AUROC & BEDROC$_{80.5}$ & EF@1\% \\
\midrule
\ours{}$_{\mathrm{LR}}$ ($r{=}1$) & 3 & $0.937{\pm}0.004$ & $0.732{\pm}0.011$ & $46.87{\pm}0.91$ & $0.630{\pm}0.006$ & $0.085{\pm}0.006$ & $7.22{\pm}0.31$ \\
\ours{}$_{\mathrm{LR}}$ ($r{=}1$) $+L_{\mathrm{spa}}$ & 3 & $0.935{\pm}0.003$ & $0.727{\pm}0.007$ & $46.39{\pm}0.58$ & $0.623{\pm}0.007$ & $0.083{\pm}0.005$ & $7.51{\pm}0.94$ \\
\ours{}$_{\mathrm{LR}}$ ($r{=}8$) $+L_{\mathrm{spa}}$ & 3 & $0.940{\pm}0.001$ & $0.750{\pm}0.011$ & $48.30{\pm}0.97$ & $0.621{\pm}0.005$ & $0.087{\pm}0.003$ & $7.73{\pm}0.40$ \\
\midrule
\ours{}$_{\mathrm{SP}}$ main & 4 & $0.938{\pm}0.004$ & $0.746{\pm}0.004$ & $47.99{\pm}0.39$ & $0.631{\pm}0.006$ & $0.088{\pm}0.007$ & $7.55{\pm}0.73$ \\
\bottomrule
\end{tabular}}
\end{table}

\rev{The key observation is that \ours{}$_{\mathrm{LR}}$ with $r{=}1$ essentially matches the main dense-mask \ours{}$_{\mathrm{SP}}$ within one standard deviation while using only $256$ mask parameters per branch instead of $16{,}384$. Adding $L_{\mathrm{spa}}$ to the rank-one model changes little, indicating that the hard rank constraint already provides the structural compression that the $L_1$ term induces only indirectly in the dense-mask version. Rank $r{=}8$ improves DUD-E, but the LIT-PCBA early-enrichment metrics are already close to saturated at rank one.}

\subsubsection{Hyperparameter Robustness}

\rev{The rank-one implementation also changes how the model behaves as the concept count grows. In the dense \ours{}$_{\mathrm{SP}}$ mask, increasing $K$ expands the mask as $K^2$ and can over-parameterize the interaction layer. In \ours{}$_{\mathrm{LR}}$ with $r{=}1$, the mask has only $2K$ parameters per branch, so increasing $K$ mainly changes the number of available concepts rather than the rank of their interaction.}

\begin{table}[t]
\caption{\rev{\ours{}$_{\mathrm{LR}}$ $K$-scan with $r{=}1$, seed=1, all three protocols. All rows use $d_c{=}1024$, $L{=}4$, and $\lambda{=}10^{-4}$.}}
\label{tab:lr_k_scan}
\centering\small
\resizebox{\textwidth}{!}{%
\begin{tabular}{c|c|l|ccc|ccc}
\toprule
$K$ & Params/branch & Protocol & \multicolumn{3}{c|}{DUD-E} & \multicolumn{3}{c}{LIT-PCBA} \\
 & & & AUROC & BEDROC$_{80.5}$ & EF@1\% & AUROC & BEDROC$_{80.5}$ & EF@1\% \\
\midrule
\multirow{3}{*}{64} & \multirow{3}{*}{128} & last & 0.935 & 0.704 & 44.87 & 0.632 & \textbf{0.103} & 8.84 \\
 & & best\_auc & 0.934 & 0.702 & 44.63 & \textbf{0.638} & 0.099 & \textbf{9.03} \\
 & & best\_bedroc & 0.936 & 0.716 & 45.92 & 0.631 & 0.095 & 6.54 \\
\midrule
\multirow{3}{*}{128} & \multirow{3}{*}{256} & last & 0.935 & 0.720 & 45.86 & 0.630 & 0.087 & 8.41 \\
 & & best\_auc & \textbf{0.946} & \textbf{0.756} & \textbf{48.78} & 0.636 & 0.079 & 6.53 \\
 & & best\_bedroc & 0.935 & 0.724 & 46.28 & 0.632 & 0.089 & 8.56 \\
\midrule
\multirow{3}{*}{256} & \multirow{3}{*}{512} & last & 0.935 & 0.734 & 47.16 & 0.622 & 0.074 & 5.82 \\
 & & best\_auc & 0.936 & 0.744 & 47.91 & 0.617 & 0.077 & 6.79 \\
 & & best\_bedroc & 0.938 & 0.740 & 47.77 & 0.629 & 0.066 & 5.72 \\
\bottomrule
\end{tabular}}
\end{table}

\rev{Table~\ref{tab:lr_k_scan} shows that the rank-one constraint protects the model from the large-$K$ collapse observed in the dense concept-pair sweep: $K{=}256$ remains strong on DUD-E instead of failing under an oversized $K^2$ mask. At the other end, $K{=}64,r{=}1$ gives the best LIT-PCBA early-enrichment numbers in this scan, suggesting that the property-debiased benchmark can be served by a very small interaction subspace.}

\subsubsection{Mask Analysis}

\rev{Finally, we analyze whether the explicit rank capacity is actually used by the optimizer. The diagnostics use the same thresholded effective mask and singular-value energy criteria as App.~\ref{app:v2_mask_stats}.}

\begin{table}[t]
\caption{\rev{Mask diagnostics for the \ours{}$_{\mathrm{LR}}$ rank scan. The reported branch uses the same protocol as Table~\ref{tab:v2_mask_stats}.}}
\label{tab:lr_mask_rank}
\centering\small
\resizebox{0.86\textwidth}{!}{%
\begin{tabular}{c|cccc|l}
\toprule
$r$ capacity & $\overline{\sigma(\mathcal M)}$ & \%zeros & $r@95\%$ & $r@99\%$ & Interpretation \\
\midrule
1 & 0.86 & 0.0\% & 1 & 1 & rank-one by construction \\
4 & 0.63 & 0.0\% & 1 & 1 & trained solution collapses to rank one \\
8 & 0.54 & 8.9\% & 1 & 1 & some thresholded zeros, still rank one \\
16 & 0.51 & 23.3\% & 1 & 1 & more zeros, same effective rank \\
32 & 0.51 & 34.3\% & 1 & 1 & unused rank capacity \\
64 & 0.50 & 39.7\% & 1 & 1 & unused rank capacity \\
\midrule
\ours{}$_{\mathrm{SP}}$ dense ref. & 0.76 & 0.0\% & 1 & 9--14 & rank-one at 95\%, richer 99\% tail \\
\bottomrule
\end{tabular}}
\end{table}

\rev{Across all nominal capacities $r\in\{1,4,8,16,32,64\}$, the trained \ours{}$_{\mathrm{LR}}$ mask has $r@95\%{=}r@99\%{=}1$. This provides a useful check on the theory--practice story: the low-rank behavior observed in the dense \ours{}$_{\mathrm{SP}}$ mask is not merely an artifact of the $\tanh{+}L_1$ parameterization, because an explicitly rank-constrained model also converges to a rank-one interaction.}

\subsection{\texorpdfstring{\ours{}$_{\mathrm{EMB}}$}{CausalBind-EMB}: Experimental Results and Diagnostics}
\label{app:emb_results}

\rev{The implementation and parameter settings of \ours{}$_{\mathrm{EMB}}$ are given in App.~\ref{app:impl_emb}, and the three-way architectural comparison is summarized in App.~\ref{app:impl_compare}. Here we only report its retrieval behavior and mask diagnostics. This keeps A5 focused on implementation and keeps the present A6 section focused on evidence.}

\subsubsection{\texorpdfstring{Hyperparameter Study on $K$ and $d_c$}{Hyperparameter Study on K and dc}}
\label{app:emb_kd_3prot}

\rev{We first report the embedding-level $K/d_c$ sweep that determines the representative \ours{}$_{\mathrm{EMB}}$ configuration used below. Unlike \ours{}$_{\mathrm{SP}}$ and \ours{}$_{\mathrm{LR}}$, this implementation applies the mask after concept pooling, so these scans should be read as pooled-space sensitivity analyses rather than concept-pair support analyses.}

\begin{table}[h]
\caption{\rev{\ours{}$_{\mathrm{EMB}}$ $K$-scan (fix $d_c{=}256$), seed=1, all three protocols.}}
\label{tab:v1_Kscan_3prot}
\centering\small
\resizebox{0.95\textwidth}{!}{%
\begin{tabular}{c|l|ccc|ccc}
\toprule
$K$ & Protocol & \multicolumn{3}{c|}{DUD-E} & \multicolumn{3}{c}{LIT-PCBA} \\
 & & AUROC & BEDROC$_{80.5}$ & EF@1\% & AUROC & BEDROC$_{80.5}$ & EF@1\% \\
\midrule
\multirow{3}{*}{32} & last & 0.942 & 0.743 & 47.35 & 0.638 & 0.090 & 7.48 \\
 & best\_auc & 0.942 & 0.747 & 47.84 & 0.640 & 0.085 & 6.86 \\
 & best\_bedroc & 0.937 & 0.716 & 45.79 & 0.643 & 0.066 & 4.38 \\
\midrule
\multirow{3}{*}{\textbf{64}} & last & 0.939 & 0.754 & 48.56 & 0.636 & \textbf{0.102} & \textbf{9.60} \\
 & best\_auc & 0.941 & 0.765 & 49.47 & 0.635 & 0.097 & 8.18 \\
 & best\_bedroc & 0.939 & 0.751 & 48.33 & 0.636 & 0.101 & 9.25 \\
\midrule
\multirow{3}{*}{128} & last & 0.935 & 0.715 & 45.62 & 0.623 & 0.082 & 6.50 \\
 & best\_auc & 0.938 & 0.713 & 45.29 & 0.630 & 0.084 & 7.50 \\
 & best\_bedroc & 0.938 & 0.701 & 44.49 & 0.630 & 0.080 & 7.01 \\
\midrule
\multirow{3}{*}{256} & last & 0.936 & 0.737 & 47.55 & 0.624 & 0.080 & 7.08 \\
 & best\_auc & 0.937 & 0.739 & 47.69 & 0.623 & 0.080 & 7.39 \\
 & best\_bedroc & 0.934 & 0.728 & 46.92 & 0.624 & 0.081 & 7.04 \\
\bottomrule
\end{tabular}}
\end{table}

\begin{table}[h]
\caption{\rev{\ours{}$_{\mathrm{EMB}}$ $d_c$-scan (fix $K{=}64$), seed=1, all three protocols. $d_c{=}256$ is strongest on LIT-PCBA early enrichment.}}
\label{tab:v1_Dscan_3prot}
\centering\small
\resizebox{0.95\textwidth}{!}{%
\begin{tabular}{c|l|ccc|ccc}
\toprule
$d_c$ & Protocol & \multicolumn{3}{c|}{DUD-E} & \multicolumn{3}{c}{LIT-PCBA} \\
 & & AUROC & BEDROC$_{80.5}$ & EF@1\% & AUROC & BEDROC$_{80.5}$ & EF@1\% \\
\midrule
\multirow{3}{*}{128}  & last & 0.943 & 0.732 & 47.09 & 0.606 & 0.074 & 5.88 \\
 & best\_auc & 0.944 & 0.726 & 46.59 & 0.617 & 0.075 & 6.97 \\
 & best\_bedroc & 0.943 & 0.738 & 47.49 & 0.602 & 0.071 & 6.03 \\
\midrule
\multirow{3}{*}{\textbf{256}}  & last & 0.939 & 0.754 & 48.56 & 0.636 & \textbf{0.102} & \textbf{9.60} \\
 & best\_auc & 0.941 & 0.765 & 49.47 & 0.635 & 0.097 & 8.18 \\
 & best\_bedroc & 0.939 & 0.751 & 48.33 & 0.636 & 0.101 & 9.25 \\
\midrule
\multirow{3}{*}{512}  & last & 0.942 & 0.750 & 48.32 & 0.624 & 0.079 & 6.45 \\
 & best\_auc & 0.945 & 0.747 & 48.11 & 0.625 & 0.078 & 6.68 \\
 & best\_bedroc & 0.943 & 0.753 & 48.53 & 0.619 & 0.079 & 6.43 \\
\midrule
\multirow{3}{*}{1024} & last & 0.940 & 0.745 & 48.01 & 0.622 & 0.088 & 8.12 \\
 & best\_auc & 0.943 & 0.760 & 48.82 & 0.628 & 0.090 & 8.47 \\
 & best\_bedroc & 0.935 & 0.716 & 45.73 & 0.622 & 0.077 & 6.92 \\
\bottomrule
\end{tabular}}
\end{table}

\rev{The embedding-level sweeps are smoother than the concept-pair sweeps, but the best operating point is still concentrated around moderate concept count and intermediate channel width. We therefore use $K{=}64$, $d_c{=}256$, and $\lambda{=}10^{-2}$ as the representative \ours{}$_{\mathrm{EMB}}$ configuration in Table~\ref{tab:v1_appendix}.}

\subsubsection{Experimental Results}

\begin{table}[t]
\caption{Three-protocol comparison among HypSeek, \ours{}$_{\mathrm{SP}}$, and \ours{}$_{\mathrm{EMB}}$. \ours{}$_{\mathrm{SP}}$ uses $K{=}128$, $d_c{=}1024$, $\lambda{=}10^{-4}$, $L{=}4$, which is the main operating point analyzed in the paper. \ours{}$_{\mathrm{EMB}}$ uses $K{=}64$, $d_c{=}256$, $\lambda{=}10^{-2}$, selected as the strongest setting in the embedding-level sweep. All rows are seed-1 results.}
\label{tab:v1_appendix}
\centering\small
\resizebox{\textwidth}{!}{%
\begin{tabular}{l|l|ccc|ccc}
\toprule
\multirow{2}{*}{\textbf{Model}} & \multirow{2}{*}{Protocol} & \multicolumn{3}{c|}{\textbf{DUD-E}} & \multicolumn{3}{c}{\textbf{LIT-PCBA}} \\
&  & AUROC & BEDROC$_{80.5}$ & EF@1\% & AUROC & BEDROC$_{80.5}$ & EF@1\% \\
\midrule
\multirow{3}{*}{HypSeek~\citep{hypseek2025}} & last & 0.909 & 0.605 & 37.83 & 0.603 & 0.061 & 4.62 \\
 & best\_auc & 0.913 & 0.629 & 39.63 & 0.608 & 0.063 & 5.13 \\
 & best\_bedroc & 0.923 & 0.676 & 42.71 & 0.607 & 0.063 & 4.56 \\
\midrule
\multirow{3}{*}{\ours{}$_{\mathrm{SP}}$} & last & 0.935 & 0.744 & 47.69 & \textbf{0.639} & 0.097 & 8.56 \\
 & best\_auc & 0.934 & 0.722 & 46.41 & 0.628 & 0.074 & 5.84 \\
 & best\_bedroc & 0.935 & 0.747 & 48.24 & \textbf{0.643} & 0.094 & 8.42 \\
\midrule
\multirow{3}{*}{\ours{}$_{\mathrm{EMB}}$} & last & 0.939 & 0.754 & 48.56 & 0.636 & \textbf{0.102} & \textbf{9.60} \\
 & best\_auc & \textbf{0.941} & \textbf{0.765} & \textbf{49.47} & 0.635 & 0.097 & 8.18 \\
 & best\_bedroc & 0.939 & 0.751 & 48.33 & 0.636 & 0.101 & 9.25 \\
\bottomrule
\end{tabular}}
\end{table}

\paragraph{Main comparison.} Table~\ref{tab:v1_appendix} now puts the three relevant models under the same three checkpoint-selection protocols, so the comparison is directly aligned with the earlier protocol analyses in this appendix. The two \ours{} variant rows use the representative configurations selected earlier in their respective sweeps: \ours{}$_{\mathrm{SP}}$ is the paper's main operating point, while \ours{}$_{\mathrm{EMB}}$ is the strongest embedding-level setting identified by the $K/d_c$ scan. Two patterns stand out. First, both \ours{} variants remain clearly stronger than HypSeek across all three protocols, especially on LIT-PCBA early-enrichment metrics. This confirms that the gain is not tied to one specific checkpoint rule. Second, the sparse concept-pair and embedding-level variants emphasize slightly different strengths: \ours{}$_{\mathrm{SP}}$ gives the best LIT-PCBA AUROC under \texttt{last}/\texttt{best\_bedroc}, whereas \ours{}$_{\mathrm{EMB}}$ is especially strong on LIT-PCBA BEDROC$_{80.5}$ / EF@1\% under \texttt{last}, and also gives the best DUD-E numbers under \texttt{best\_auc}.

\paragraph{What this means empirically.} The purpose of this section is not to argue that \ours{}$_{\mathrm{EMB}}$ should replace the main model, but to show that the sparse/low-rank interaction prior is effective beyond a single implementation. \ours{}$_{\mathrm{SP}}$ remains the paper's primary architecture because it offers a cleaner causal interpretation and aligns more directly with the concept-pair theory. \ours{}$_{\mathrm{EMB}}$, however, demonstrates that a simpler pooled-space realization can also produce strong retrieval, which makes the overall design principle more credible.

\subsubsection{Theory Correspondence}
\label{app:v1_v2_stability}

The theory correspondence of the embedding-level variant is simpler, but also narrower, than for the main concept-level model. Because the mask acts on pooled embeddings rather than on explicit concept pairs, it does not naturally match the sparse-support picture behind Condition~2.i and Theorem~\ref{thm:main:ident}. In particular, there is no direct analogue of concept-pair support geometry or the antichain-style condition discussed for the concept-level mask. For that reason, this implementation should not be presented as evidence for the sparse identifiability theorem.

By contrast, the connection to the low-rank theorem is much more natural. The embedding-level mask is exactly a constrained interaction operator in a $d_c$-dimensional shared space, so asking whether useful solutions are low-rank is directly meaningful here. In that sense, the variant is much closer in spirit to Condition~3.i and Theorem~\ref{thm:main:lowrank}: it tests whether a small number of interaction modes in the pooled cross-modal space is already sufficient for strong retrieval.

\begin{table}[h]
\caption{Spectral diagnostics for the embedding-level mask ($K{=}64$, $d_c{=}256$, $\lambda{=}10^{-2}$, seed 1). We report the two branchwise product masks used in the paper's interaction view, evaluated under the \texttt{last} checkpoint.}
\label{tab:v1_mask_rank}
\centering\small
\begin{tabular}{l|ccc}
\toprule
Branch & \%zeros & $r@95\%$ & $r@99\%$ \\
\midrule
mol--pocket & 0.0\% & 1 & 46 \\
mol--sequence & 0.0\% & 1 & 46 \\
\bottomrule
\end{tabular}
\end{table}

Table~\ref{tab:v1_mask_rank} makes the interpretation more concrete. The embedding-level mask exhibits no exact zeros after gating, so it provides no support for the sparse-support assumptions behind Theorem~\ref{thm:main:ident}. At the same time, its spectral profile is clearly compressed: for both branches, a single dominant mode already explains $95\%$ of the squared singular-value energy, and only $46$ modes are needed to explain $99\%$ out of the full ambient dimension $d_c{=}256$. This is not a strict low-rank matrix in the algebraic sense, but it is a strongly \emph{effectively} low-rank interaction operator. We also checked nearby late checkpoints (epochs 40, 45, and 50), and the same $r@95\%{=}1$, $r@99\%{=}46$ pattern persists, so this is not a single-checkpoint artifact.

Across all scanned configurations, the embedding-level variant is also somewhat smoother than the concept-level one: its DUD-E AUROC varies within a relatively narrow band ($0.935$--$0.943$, $n{=}7$), whereas the concept-level sweeps in Tables~\ref{tab:v2_Kscan_3prot} and~\ref{tab:v2_Dscan_3prot} span a wider range. This difference is plausible architecturally. The embedding-level mask acts after pooling and therefore modulates a coarser representation; the concept-level mask, in contrast, directly shapes the fine-grained interaction structure before pooling. Taken together, these observations suggest a clean division of roles: the concept-level model is the better vehicle for the paper's causal and sparse-support story, while the embedding-level alternative provides additional empirical support for the low-rank interaction view.

\subsection{Overall Observations Across Three Implementations}
\label{app:full_sweep}

\paragraph{What remains stable across the sweep.} \rev{The most important observation is that the main qualitative conclusion does not depend on a single hyperparameter choice, checkpoint protocol, or implementation variant. Across the main \ours{}$_{\mathrm{SP}}$ comparison (Table~\ref{tab:three_protocol_eval}), the explicit low-rank results (Tables~\ref{tab:lr_rank_scan}--\ref{tab:lr_mask_rank}), and the embedding-level alternative (Tables~\ref{tab:v1_Kscan_3prot}--\ref{tab:v1_appendix}), constrained cross-modal interaction consistently improves over dense holistic alignment on the retrieval metrics emphasized in the main text, especially early enrichment on LIT-PCBA.}

\paragraph{How the three implementations differ.} \rev{\ours{}$_{\mathrm{SP}}$ is the main paper model: it keeps explicit concept-pair structure and is therefore the cleanest implementation for studying sparse support, mask calibration, and concept-level interpretation. \ours{}$_{\mathrm{LR}}$ is the direct low-rank concept-basis implementation: it matches the main dense-mask model with far fewer mask parameters and shows that rank-one or near-rank-one interaction is often sufficient. \ours{}$_{\mathrm{EMB}}$ is a pooled-space alternative: it can be very strong empirically, especially on LIT-PCBA early enrichment, but its mask acts after concept pooling and therefore has a weaker connection to component-wise concept identifiability.}

\paragraph{What the hyperparameter studies collectively suggest.} \rev{The \ours{}$_{\mathrm{SP}}$ $K/d_c/\lambda/L$ scans point to a coherent operating regime: moderate concept granularity, adequate channel width, and moderate interaction depth work best, while overly large dense concept-pair masks can become brittle. The \ours{}$_{\mathrm{LR}}$ scans sharpen this lesson by showing that explicit rank control protects against the large-$K$ collapse and that increasing rank beyond a small value rarely helps. The \ours{}$_{\mathrm{EMB}}$ scans are smoother, but the strongest setting is still a moderate pooled representation ($K{=}64$, $d_c{=}256$).}

\paragraph{How the sweep connects back to theory.} \rev{Taken together, the three implementations separate form from empirical regime. \ours{}$_{\mathrm{SP}}$ has the sparse concept-pair form of Theorem~\ref{thm:main:ident}, but its strongest $\lambda{=}10^{-4}$ setting behaves more like a low-rank solution; at $\lambda{=}10^{-3}$ it moves closer to sparse support. \ours{}$_{\mathrm{LR}}$ directly realizes the subspace-identifiability form of Theorem~\ref{thm:main:lowrank} and confirms that the learned interaction can be rank one in practice. \ours{}$_{\mathrm{EMB}}$ supports the broader low-rank interaction principle but should be interpreted as a pooled-space variant rather than a component-wise concept-identifiability result.}

\subsection{Discussion on Theory--Practice Gap}
\label{app:theory:gap}

The theoretical results in Section~\ref{sec:theory} are intentionally sharp: Theorem~\ref{thm:main:ident} describes what becomes identifiable under the sparse-support assumptions, while \rev{Theorem~\ref{thm:main:lowrank} gives a subspace-identifiability statement under low-rank interaction.} The experiments in App.~\ref{app:analysis} show that these two theorems are both useful for interpretation, but not in the same way. This subsection makes that distinction explicit by connecting the compact mask statistics in Table~\ref{tab:lambda_mask} with the branchwise diagnostics in Table~\ref{tab:v2_mask_stats}.

\paragraph{\rev{Main takeaway.}} \rev{The central lesson is not that the learned interaction becomes exactly element-wise sparse. Rather, the causal prior pushes training toward structurally simpler cross-modal interactions: stronger sparsification moves \ours{}$_{\mathrm{SP}}$ toward the sparse-support picture, while milder sparsification and explicit \ours{}$_{\mathrm{LR}}$ reveal a low-rank regime. Thus low-rank subspace identifiability is not merely a secondary theoretical case, but an important description of the regime that modern neural training naturally reaches.}

\paragraph{What the experiments do support.} \rev{The experiments support both theorem regimes, but through different implementations and operating points. In Table~\ref{tab:lambda_mask}, `$L_1$ avg' is the average absolute mask entry, \%zeros is the fraction of entries thresholded to exact zero, and $r@95\%$/$r@99\%$ are effective ranks explaining $95\%$/$99\%$ of squared singular-value energy. For \ours{}$_{\mathrm{SP}}$, $\lambda{=}10^{-3}$ yields strong DUD-E retrieval together with meaningful exact sparsity ($20.5\%$ zeros in the compact table), matching the sparse-support picture of Theorem~\ref{thm:main:ident}; the default $\lambda{=}10^{-4}$ setting remains dense entry-wise ($0\%$ zeros) but is strongly low-rank in the dominant energy component ($r@95\%=1$), matching Theorem~\ref{thm:main:lowrank}. For $\lambda\geq 10^{-2}$, the mask enters an excessive hard-zero regime and performance degrades sharply, so sparsity alone is not the desired behavior. In \ours{}$_{\mathrm{LR}}$, the rank constraint is imposed directly and the learned masks still collapse to effective rank one, which independently supports the low-rank/subspace interpretation. In \ours{}$_{\mathrm{EMB}}$, the same low-rank tendency appears after pooling, but this evidence is architectural rather than component-wise.}

\paragraph{Why this is not a contradiction.} \rev{The theorem statements are intentionally sharp, whereas optimization can realize more than one useful structural regime. A dense $K{\times}K$ mask with an $L_1$ penalty can behave as sparse support under stronger regularization or as a low-rank operator under milder regularization; an explicit rank-factorized mask enforces the latter by construction. The empirical message is therefore not that one theorem replaces the other, but that the two theorem regimes identify different useful limits of the same causal interaction prior.}

\paragraph{\rev{Which theorem covers the implementation.}} \rev{One gap between the theory and the implementation deserves to be stated plainly. For simplicity, all three variants use an inner product between concept tokens as the compatibility function $\phi$, which is \emph{bilinear}. Condition~2.ii therefore fails, and by Theorem~\ref{lem:non-id} the latent concepts are not component-wise identifiable in this model class, whatever structure is imposed on $\mathcal M$: there exist reparameterizations that mix concepts while leaving both the mask and the selected distribution unchanged. Theorem~\ref{thm:identifiability} consequently does not certify that the concepts \ours{} learns are the true ones, and an antichain support observed in a learned mask is a structural diagnostic rather than a certificate.}

\rev{The result that does apply is Theorem~\ref{thm:lowrank-recovery}, which places no non-bilinearity requirement on $\phi$: for $\phi(x,y)=xy$ the centred kernel has rank $\mathrm{rank}(\mathcal M)$, so a low-rank mask satisfies Condition~3.i, and the identified objects are the molecule-side and protein-side interaction subspaces. This matches what we observe: at the default $\lambda_{\mathrm{spa}}{=}10^{-4}$ the learned masks have no hard zeros but $r@95\%{=}1$ (Table~\ref{tab:lambda_mask}), and \ours{}$_{\mathrm{LR}}$ imposes the same structure by construction. A second, smaller mismatch remains even there: the theory treats scalar concepts ($d_c{=}1$), whereas the implementation uses $d_c{>}1$ concept tokens, which raises the kernel rank to roughly $\mathrm{rank}(\mathcal M)\cdot d_c$, so Condition~3.i holds for the theoretical model rather than verbatim for the implemented one.}

\rev{We therefore position \ours{} as a causally motivated structural prior rather than as an estimator with a component-wise identifiability certificate. The theory says why unconstrained bilinear interaction cannot identify concepts, which structural restrictions would make them identifiable, and which weaker object (the interaction subspace) remains identifiable in the class we implement. Closing the gap is a concrete next step: replacing the inner product by a non-bilinear compatibility function, for instance $\tanh(\langle\mathbf c_i,\mathbf c_j\rangle)$ or a small shared network, would place the model inside the class of Theorem~\ref{thm:identifiability}, and we leave this to future work.}

\section{Limitations, Broader Impact, and Usage of LLMs.}
\label{app:limitations}

\rev{\paragraph{Summary (moved from the main text).} Our evidence is centered primarily on retrieval-based virtual screening under cross-dataset generalization, especially the two main benchmarks, DUD-E and LIT-PCBA, with an additional Merck-FEP-8 affinity-ranking transfer analysis in App.~\ref{app:sp_merck_affinity} \rev{and target- and scaffold-level out-of-distribution evaluation on DEKOIS 2.0 in App.~\ref{app:dekois_ood}}. This is the right, deliberately focused scope for the main question of the paper, but it does not yet establish uniform gains across \rev{sequence-family or time-stratified splits} or larger benchmark collections\rev{, and our interpretability analysis provides model-level attribution rather than proof of a physical causal mechanism} (see App.~\ref{app:limitations}). As with any virtual-screening method, our model could be misused to prioritize ligands for harmful targets. Although it is a scoring tool rather than a generator, downstream deployment should still be paired with safety and experimental validation filters before any wet-lab follow-up. All source code is publicly available in our GitHub repository to support reproducibility.}

\paragraph{Limitation on scope of the current evidence.} This paper should be read primarily as a study of \emph{retrieval-based virtual screening under cross-dataset generalization}, not as a complete evaluation of protein-molecule modeling in every downstream regime. Accordingly, our main evidence primarily comes from two retrieval benchmarks, DUD-E and LIT-PCBA, while App.~\ref{app:sp_merck_affinity} provides a supplementary Merck-FEP-8 affinity-ranking transfer check; the strongest claims still concern early enrichment rather than absolute affinity calibration. This scope is deliberate: the central question of the paper is whether replacing dense holistic alignment with sparse or low-rank cross-modal interaction improves the high-throughput screening stage. It also means that broader claims should be made cautiously. The current experiments do not yet establish how uniformly the gain transfers across protein families, scaffold splits, time-stratified settings, or much larger benchmark collections.

\paragraph{Broader impact and misuse clarification.} Better virtual screening can reduce the cost of early discovery by improving which candidates are advanced to docking and experimental follow-up, potentially accelerating the identification of useful therapeutics. At the same time, any method that improves prioritization of protein-molecule interactions could in principle be misused to search for ligands against harmful targets. Our model is less directly dual-use than a generative molecular-design system, because it is a scoring and ranking tool rather than a molecule generator, but this does not remove the need for safeguards. In practical use, its predictions should be treated as one stage in a controlled pipeline and paired with downstream safety, ADMET, toxicity, and experimental validation filters before any wet-lab deployment.

\paragraph{Usage of LLMs.} LLMs were used to assist with writing and editing, but they are not the source of any scientific claims or empirical results in this paper. All technical content is derived from our own theoretical analysis and experimental work, and all key numbers are directly traceable to our codebase and training logs. We did not use LLMs to generate scientific hypotheses. Their role was limited to language polishing and formatting assistance, which we carefully reviewed and edited for accuracy and clarity.

\section{Reproducibility and Asset Provenance}
\label{app:reproducibility}

\paragraph{Code basis and development path.} The implementation of \ours{} was developed by extending the retrieval pipeline used in HypSeek and cross-checking evaluation behavior against LigUnity. Concretely, our code keeps the same broad three-stream retrieval setting as HypSeek, while replacing the dense shared-space interaction with the concept extractor and sparse/low-rank cross-modal masking modules described in Section~\ref{sec:method}. The implementation is thus a method-development layer on top of existing retrieval infrastructure, not a from-scratch rewrite.

\paragraph{Main software assets.} The most important inherited software assets are the HypSeek code base, the LigUnity code base used for comparison and protocol alignment, the Uni-Mol/Uni-Core stack used by the molecular and pocket branches, and the ESM-2 protein language model used by the sequence branch. Many inherited Uni-Mol-style source files carry MIT license headers, and the HypSeek repository is MIT-licensed. The LigUnity repository releases code under Apache~2.0 and its data under CC BY-NC~4.0. These upstream assets are credited throughout the paper via citations to HypSeek, LigUnity, Uni-Mol, Uni-Core, and ESM-2.

\paragraph{Data assets.} The training and evaluation data are not newly created by this paper. As described in Section~\ref{sec:experiments} and Appendix~\ref{app:implementation}, training follows the same assay-level setup used by LigUnity and HypSeek, drawing from ChEMBL, BindingDB, and PDBBind\rev{, where ChEMBL/BindingDB assays whose target UniProt IDs appear in the DUD-E, LIT-PCBA, or DEKOIS 2.0 benchmarks are removed}. The main evaluation benchmarks are DUD-E and LIT-PCBA, and the appendix additionally reports transfer results on Merck-FEP-8. All are existing public resources and remain subject to the terms and redistribution rules of their original providers; we do not re-license them.

\paragraph{What is provided for reproducibility.} \rev{The code is publicly available at \url{https://github.com/lokali/CausalBind}. The repository contains the \ours{} implementation (the core model, loss, and task files for all variants), scripts that download the public training data, benchmarks, and pretrained backbones from their original sources, training and evaluation entry points with the exact configurations reported in the paper, and the tools for the mask-structure analysis, the DEKOIS 2.0 out-of-distribution evaluation, and the interpretability case study. We also release the trained checkpoints of the reported models together with a one-command script that reproduces the DUD-E and LIT-PCBA results. Processed datasets are not redistributed; they remain subject to the terms of their original providers.}

\paragraph{Scope of the current license discussion.} This section makes the provenance of the main assets explicit and clarifies which parts are inherited versus newly added by \ours{}. We do not re-publish upstream assets under a unified license: the new code is layered on existing open-source assets, and each inherited component remains under its original license.

\end{document}